\newtheorem{lem}{Lemma}[section]
\newtheorem{thm}[lem]{Theorem}
\newtheorem{claim}[lem]{Claim}
\newcommand{\ip}[2]{\langle #1,#2\rangle}
\newcommand{\nptheta}{\hat\theta}
\newcommand{\privtheta}{\theta^{priv}}
\renewcommand{\paragraph}[1]{\vspace{3pt}\noindent\textbf{#1}}
\newcommand{\scrX}{\ensuremath{\mathcal{X}}}
\newcommand{\eL}{\mathcal{L}}
\newcommand{\good}{\sf Good}
\newcommand{\ltwo}[1]{\|#1\|_2}
\newcommand{\linf}[1]{\|#1\|_{\infty}}
\newcommand{\eps}{\epsilon}
\newcommand{\A}{\mathcal{A}}
\newcommand{\D}{\mathcal{D}}
\newcommand{\G}{\mathcal{G}}
\newcommand{\T}{\mathcal{T}}
\newcommand{\risk}{{\sf R}}
\newcommand{\vol}{{\sf Vol}}
\newcommand{\I}{\mathbb{I}}
\newcommand{\Ico}{\mathcal{I}_{\sf out}}
\newcommand{\Ici}{\mathcal{I}_{\sf in}}
\newcommand{\E}{\mathbb{E}}
\newcommand{\empL}{\mathcal{L}}
\newcommand{\dist}{{\sf Dist}_{\infty}}
\newcommand{\htheta}{\widetilde\theta}
\newcommand{\re}{\mathbb{R}}
\newcommand{\Bc}{\mathcal{B}}
\newcommand{\B}{\mathbb{B}}
\newcommand{\Q}{\mathcal{Q}}
\newcommand{\teps}{\tilde{\epsilon}}
\newcommand{\hmu}{\hat{\mu}}
\newcommand{\hmuA}{\hat{\mu}_{\sf A}}
\newcommand{\muA}{\mu_{\sf A}}
\newcommand{\hmuC}{\hat{\mu}_{\C}}
\newcommand{\muC}{\mu_{\C}}
\newcommand{\hmug}{\hat{\mu}_{\good}}
\newcommand{\mU}{\mathcal{U}}
\newcommand{\grad}{\bigtriangledown}
\newcommand{\hypcnz}{ \{-\frac{1}{\sqrt{p}}, \frac{1}{\sqrt{p}}\} }
\newcommand{\hypcnzsc}{ \{-\frac{\beta}{\sqrt{p}}, \frac{\beta}{\sqrt{p}}\} }
\newcommand{\mypar}[1]{\smallskip
\noindent{\bf\em {#1}:}}
\newcommand{\boldpar}[1]{\smallskip
\noindent{\bf{#1}:}}
\newcommand{\mynote}[3]{}
\newcommand{\asnote}[1]{\mynote{Adam}{yellow}{#1}}
\newcommand{\rbnote}[1]{\mynote{Raef}{green}{#1}}
\newcommand{\ignore}[1]{}
\newcommand{\C}{\mathcal{C}}
\newcommand{\erisk}{{\sf ExcessRisk}}
\newcommand{\emprisk}{{\sf EmpExcRisk}}
\newcommand{\paren}[1]{{\left ( {#1} \right)}}
\begin{document}

\title{Differentially Private Empirical Risk Minimization: Efficient Algorithms and Tight Error Bounds
}
\author{Raef Bassily\thanks{Computer Science and Engineering
    Department, The Pennsylvania State
    University. \texttt{\{bassily,asmith\}@psu.edu}. R.B. and
    A.S. were supported in part by NSF awards \#0747294 and \#0941553.}
\and Adam Smith\footnotemark[1]\ {~}\thanks{A.S. is on sabbatical at, and partly supported by, Boston University's Hariri
  Institute for Computing and Center for RISCS as well as Harvard University's
  Center for Computation and Society, via a Simons Investigator grant to Salil Vadhan.}
\and Abhradeep Thakurta\thanks{Stanford University and Microsoft
  Research. \texttt{b-abhrag@microsoft.com}. Supported in part by the Sloan Foundation.}
}

\maketitle
\thispagestyle{empty}

\begin{abstract}
In this paper, we initiate a systematic investigation of differentially private algorithms for convex empirical risk minimization. Various instantiations of this problem have been studied before.  We provide new algorithms and matching lower bounds for private ERM assuming only that each data point's contribution to the loss function is Lipschitz bounded and that the domain of optimization is bounded. We provide a separate set of algorithms and matching lower bounds for the setting in which the loss functions are known to also be strongly convex.

Our algorithms run in polynomial time, and in some cases even match the optimal nonprivate running time (as measured by oracle complexity). We give separate algorithms (and lower bounds) for $(\eps,0)$- and $(\eps,\delta)$-differential privacy; perhaps surprisingly, the techniques used for designing optimal algorithms in the two cases are completely different.

Our lower bounds apply even to very simple, smooth function families, such as linear and quadratic functions.  This implies that algorithms from previous work can be used to obtain optimal error rates, under the additional assumption that the contributions of each data point to the loss function is \emph{smooth}. We show that simple approaches to smoothing arbitrary loss functions (in order to apply previous techniques) do not yield optimal error rates. In particular, optimal algorithms were not previously known for problems such as training support vector machines and the high-dimensional median.
\end{abstract}


\newpage
\thispagestyle{empty}

\tableofcontents

\newpage
\pagenumbering{arabic}

\section{Introduction}
\label{sec:intro}

Convex optimization is one of the most basic and powerful computational tools in statistics and machine learning. It is most commonly used for empirical risk minimization (ERM): the data set $\D=\{d_1,...,d_n\}$ defines a convex loss function $\eL(\cdot)$ which is minimized over a convex set $\C$. When run on sensitive data, however, the results of convex ERM can leak sensitive information. For example, medians and support vector machine parameters can, in many cases, leak entire records in the clear (see ``Motivation'', below).

In this paper, we provide new algorithms and matching lower bounds for \emph{differentially private} convex ERM assuming only that each data point's contribution to the loss function is Lipschitz and that the domain of optimization is bounded. This builds on a line of work started by Chaudhuri et al.~\cite{CMS11}.

\newcommand{\universe}{\scrX}

\paragraph{Problem formulation.}
Given a data set $\D=\{d_1,...,d_n\}$ drawn from a universe $\universe$, and a closed, convex set $\C$, our goal is to
$$\text{minimize }\eL(\theta ; \D) = \sum_{i=1}^n \ell(\theta; d_i) \text{ over }\theta \in \C$$
The map $\ell$ defines, for each data point  $d$, a loss function
$\ell(\cdot; d)$ on $\C$. We will generally assume that $\ell(\cdot;d)$ is convex and $L$-Lipschitz for all $d\in\universe$.
One obtains variants on this basic problem by assuming additional restrictions, such as (i) that $\ell(\cdot;d)$ is $\Delta$-strongly convex for all $d\in \universe$, and/or (ii) that $\ell(\cdot;d)$ is $\beta$-smooth for all $d\in \universe$. Definitions of Lipschitz, strong convexity and smoothness are provided at the end of the introduction.

For example, given a collection of data points in $\re^p$, the Euclidean 1-median is a point in $\re^p$ that minimizes the sum of the Euclidean distances to the data points. That is, $\ell(\theta; d_i)=\|\theta-d_i\|_2$, which is 1-Lipschitz in $\theta$ for any choice of $d_i$. Another common example is the support vector machine (SVM): given a data point $d_i=(x_i,y_i)\in \re^p\times\{-1,1\}$, one defines a loss function $\ell(\theta;d_i) = hinge(y_i\cdot \langle\theta, x_i\rangle)$, where $hinge(z) = (1-z)_+$ (here $(1-z)_+$ equals $1-z$ for $z\leq 1$ and 0, otherwise). The loss is $L$-Lipshitz in $\theta$ when $\|x_i\|_2\leq L$.

Our formulation also captures \emph{regularized} ERM, in which an additional (convex) function $r(\theta)$ is added to the loss function to penalize certain types of solutions; the loss function is then $r(\theta)+\sum_{i=1}^n\ell(\theta;d_i)$. One can fold the regularizer $r(\cdot)$ into the data-dependent functions by replacing $\ell(\theta;d_i)$ with $\tilde\ell(\theta;d_i) = \ell(\theta;d_i) +\frac 1 n r(\theta)$, so that $\eL(\theta;\D) = \sum_i \tilde\ell(\theta;d_i)$. This folding comes at some loss of generality (since it may increase the Lipschitz constant), but it does not affect asymptotic results. Note that if $r$ is $\Delta n$-strongly convex, then every $\tilde{\ell}$ is $\Delta$-strongly convex.

\rbnote{can we say below that emp ex. risl lower bounds are also lower bounds on gen. error?}
\rbnote{can we talk below about cases we know to be tight? e.g., GLM. In App F, I added a theorem statement for the upper bounds in this case.}

We measure the success of our algorithms by the worst-case (over inputs) expected \emph{excess empirical risk}, namely
\begin{equation}
\E(\eL(\hat\theta;\D) - \eL(\theta^*;\D)),\label{eq:excessRisk}
\end{equation}
where $\hat\theta$ is the output of the algorithm, $\theta^* = \arg\min_{\theta \in \C} \eL(\theta;\D)$ is the true minimizer, and the expectation is only over the coins of the algorithm. 
Expected risk guarantees can be converted to high-probability guarantees using standard amplification techniques (see Appendix \ref{sec:expectatioHigh} for details).

Another important measure of performance is an algorithm's (excess) generalization error, where loss is measured with respect to the average over an unknown distribution from which the data are assumed to be drawn i.i.d.. Our upper bounds on empirical risk imply upper bounds on generalization error (via uniform convergence and similar ideas); the resulting bounds are only known to be tight in certain ranges of parameters, however.  Detailed statements may be found in Appendix~\ref{app:generalization}. \asnote{Review this section and add what we know about lower bounds.}

\paragraph{Motivation.} Convex ERM is used for fitting models from simple least-squares regression to support vector machines, and their use may have significant implications to privacy.
As a simple example, note that the Euclidean 1-median of a data set will typically be an actual data point, since the gradient of the loss function has discontinuities at each of the $d_i$. (Thinking about the one-dimensional median, where there is \emph{always} a data point that minimizes the loss, is helpful.) Thus, releasing the median may well reveal one of the data points in the clear. A more subtle example is the support vector machine (SVM). The solution to an SVM program is often presented in its dual form, whose coefficients typically consist of a set of $p+1$ exact data points. \citet{KRS13} show how the results of many convex ERM problems can be combined to carry out reconstruction attacks in the spirit of \citet{DiNi03}.

\paragraph{Differential privacy} is a rigorous notion of privacy that emerged from a line of work in theoretical computer science and cryptography \cite{DwNi04,BDMN05,DMNS06}. We say two data sets $\D$ and $\D'$ of size $n$ are neighbors if they differ in one entry (that is, $\left\vert\D\triangle\D'\right\vert = 2$).  A randomized  algorithm $\A$ is $(\eps,\delta)$-differentially private (\citet{DMNS06,DKMMN06}) if, for all neighboring data sets $\D$ and $\D'$ and for all events $S$ in the output space of $\A$, we have
$$\Pr(\A(\D)\in S) \leq e^{\eps} \Pr(\A(\D')\in S) +\delta\,.$$
Algorithms that satisfy differential privacy for $\eps<1$ and $\delta \ll 1/n$ provide meaningful privacy guarantees, even in the presence of side information. In particular, they avoid the problems mentioned in ``Motivation'' above. See \citet{Dwork06,KS08,KiferM12} for discussion of the ``semantics'' of differential privacy.

\paragraph{Setting Parameters.}
We will aim to quantify the role of several basic parameters on the excess risk of differentially private algorithms: the size of the data set $n$, the dimension $p$ of the parameter space $\C$, the Lipschitz constant $L$ of the loss functions, the diameter $\ltwo{\C}$ of the constraint set and, when applicable, the strong convexity $\Delta$. 

We may take $L$ and $\ltwo{\C}$ to be 1 without loss of generality: We can set $\ltwo{\C}=1$ by 
rescaling $\theta$ (replacing by $\theta$ with $\theta \cdot  \ltwo{\C}$); we can then set $L=1$ by rescaling the loss function $\eL$ (replacing $\eL$ by $\eL/ L$). These two transformations change the excess risk by $L\ltwo{C}$. The parameter $\Delta$ cannot similarly be rescaled while keeping $L$ and $\ltwo{\C}$ the same. However, we always have $\Delta\leq 2L/\ltwo{\C}$. 

In the sequel, we thus focus on the setting where $L=\ltwo{\C}=1$ and $\Delta \in [0,2]$. 
To convert excess risk bounds for $L=\ltwo{\C}=1$ to the general setting, one can multiply the risk bounds by $L\ltwo{\C}$, and replace $\Delta$ by $\frac{\Delta \ltwo{\C}}{L}$.


\begin{table*}[t]
\hspace{-0.5in}
\begin{center} \small
  \begin{tabular}{|p{1.22in}||c|c|c||c|c|c|}
    \hline
    & \multicolumn{3}{c||}{$(\eps,0)$-DP}  & \multicolumn{3}{c|}{$(\eps,\delta)$-DP} \\
    \hline
& Previous \citep{CMS11} &\multicolumn{2}{c||}{\color{blue} This work} & Previous \citep{KST12} &\multicolumn{2}{c|}{\color{blue} This work} \\
\hline
    Assumptions  & Upper Bd & \color{blue} Upper Bd & \color{blue} Lower Bd & Upper Bd & \color{blue} Upper Bd & \color{blue} Lower Bd \\
    \hline
    \hline
    $1$-Lipschitz and $\ltwo{C}=1$&
    $\dfrac{p\sqrt{n}}{\eps}$ &
    \color{blue} $\dfrac{p}{\eps}$ &
    \color{blue} $\dfrac{p}{\eps}$ &
    $\dfrac{\sqrt{p\cdot n}\log(1/\delta)}{\eps}$ &
    \color{blue} $\dfrac{\sqrt{p}\log^{2}(n/\delta)}{\eps}$ &
    \color{blue} $\dfrac{\sqrt{p}}{\eps}$ \\
     \hline
   ... and $O(p)$-smooth
    & $\dfrac{p}{\eps}$  & &  \color{blue} $\dfrac{p}{\eps}$&      $\dfrac{\sqrt{p}\log(1/\delta)}{\eps}$ & & \color{blue} $\dfrac{\sqrt{p}}{\eps}$ \\
    \hline
    \hline
    $1$-Lipschitz and $\Delta$-strongly convex and $\ltwo{C}=1$
(implies $\Delta \leq 2$)&
    $\dfrac{p^2}{\sqrt{n}\Delta \eps^2}$ &
    \color{blue} $\dfrac{\log(n)}{\Delta}\cdot \dfrac{p^2}{n\eps^2}$ &
    \color{blue} $ \dfrac{p^2}{n\eps^2} $ &
    $\dfrac{p\log(1/\delta)}{\sqrt{n} \Delta \eps^2}$ &
    \color{blue} $\dfrac{\log^3(n/\delta)}{\Delta}\cdot \dfrac{p}{n \eps^2}$ &
    \color{blue} $\dfrac{p }{n\eps^2} $
\\
    \hline
   ... and $O(p)$-smooth
    & $\dfrac{p^2}{n\Delta \eps^2}$ &
    &
    \color{blue}
    $\dfrac{p^2}{n \eps^2}$  &
    $\dfrac{p\log(1/\delta)}{n\Delta \eps^2}$ & &
\color{blue} $\dfrac{p}{n \eps^2}$ \\
    \hline
  \end{tabular}
  \end{center}
  \caption{Upper and lower bounds for excess risk of differentially-private convex ERM. Bounds ignore leading multiplicative constants, and the values in the table give the bound when it is below $n$. That is, upper bounds should be read as $O(\min(n, ...))$ and lower bounds, as $\Omega(\min(n,...))$). Here $\ltwo{\C}$ is the diameter of $\C$. The bounds are stated for the setting where $L=\ltwo{\C}=1$, which can be enforced by rescaling; to get general statements, multiply the risk bounds by $L\ltwo{\C}$, and replace $\Delta$ by $\frac{\Delta \ltwo{\C}}{L}$. We assume $\delta<1/n$ to simplify the bounds.}
  \label{tab:bounds}
\end{table*}

\subsection{Contributions}
\label{sec:contribs}

We give algorithms that significantly improve on the state of the art for optimizing non-smooth loss functions --- for both the general case and strongly convex functions, we improve the excess risk bounds by a factor of $\sqrt{n}$, asymptotically. The algorithms we give for $(\eps,0)$- and $(\eps,\delta)$-differential privacy work on very different principles. We group the algorithms below by technique: gradient descent, exponential sampling, and localization.

For the purposes of this section, $\tilde O(\cdot)$ notation hides factors polynomial in  $\log n$ and $\log(1/\delta)$. Detailed bounds are stated in Table \ref{tab:bounds}.

\paragraph{Gradient descent-based algorithms.} For $(\eps,\delta)$-differential privacy, we show that a noisy version of gradient descent achieves excess risk $\tilde O(\sqrt{p} /\eps)$. This matches our lower bound,  $\Omega (\min(n,\sqrt{p}/\eps))$, up to logarithmic factors. (Note that every $\theta \in \C$ has excess risk at most $n$, so a lower bound of $n$ can always be matched.)
For $\Delta$-strongly convex functions, a variant of our algorithm has risk $\tilde O(\frac{p}{\Delta  n \eps^2})$, which matches the lower bound $\Omega(\frac p {n\eps^2})$ when $\Delta$ is bounded below by a constant (recall that $\Delta\leq 2$ since $L=\ltwo{\C}=1$).

Previously, the best known risk bounds were $\Omega(\sqrt{p n} /\eps)$ for general convex functions and $\Omega(\frac{p}{\sqrt{n}\Delta \eps^2})$ for $\Delta$-strongly convex functions (achievable via several different techniques (\citet{CMS11,KST12,JKT12,DuchiJW13})).
Under the restriction that each data point's contribution to the loss function is sufficiently smooth, objective perturbation \citep{CMS11,KST12} also has risk $\tilde O(\sqrt{p} /\eps)$ (which is tight, since the lower bounds apply to smooth functions). However, smooth functions do not include important special cases such as medians and support vector machines. \citet{CMS11} suggest applying their technique to support vector machines by smoothing (``huberizing'') the loss function. We show in Appendix~\ref{sec:huberization} that this approach still yields expected excess risk  $\Omega(\sqrt{p n} /\eps)$.

Although straightforward noisy gradient descent would work well in our setting, 
we present a faster variant based on \emph{stochastic} gradient descent: At each step $t$, the algorithm samples a random point $d_i$ from the data set, computes a noisy version of $d_i$'s contribution to the gradient of $\eL$ at the current estimate $\tilde\theta_t$, and then uses that noisy measurement to update the parameter estimate. The algorithm is similar to algorithms that have appeared previously (\citet{WM10} first investigated gradient descent with noisy updates; stochastic variants were studied by \citet{JKT12,DuchiJW13,SongCS13}). The novelty of our analysis lies in taking advantage of the randomness in the choice of $d_i$ (following \citet{KLNRS08}) to run the algorithm for many steps without a significant cost to privacy. Running the algorithm for $T=n^2$ steps, gives the desired expected excess risk bound. Even nonprivate first-order algorithms---i.e., those based on gradient measurements---must learn information about the gradient at $\Omega(n^2)$ points to get risk bounds that are independent of $n$ (this follows from ``oracle complexity'' bounds showing that $1/\sqrt{T}$ convergence rate is optimal \cite{NY83,AgarwalBRW12}). Thus, the query complexity of our algorithm cannot be improved without using more information about the loss function, such as second derivatives.

The gradient descent approach does not, to our knowledge, allow one to get optimal excess risk bounds for $(\eps,0)$-differential privacy. The main obstacle is that ``strong composition'' of $(\eps,\delta)$-privacy \citet{DRV} appears necessary to allow a first-order method to run for sufficiently many steps.

\paragraph{Exponential Sampling-based Algorithms.} For $(\eps,0)$-differential privacy, we observe that a straightforward use of the exponential mechanism --- sampling from an appropriately-sized net of points in $\C$, where each point $\theta$ has probability proportional to $\exp(-\eps \eL(\theta;\D))$ --- has excess risk $\tilde O(p /\eps)$ on general Lipschitz functions, nearly matching the lower bound  of $\Omega(p/\eps)$. (The bound would not be optimal for $(\eps,\delta)$-privacy because it scales as $p$, not $\sqrt{p}$).
This mechanism is inefficient in general since it requires construction of a net and an appropriate sampling mechanism.

We give a polynomial time algorithm that achieves the optimal excess risk, namely $O(p/\epsilon)$. Note that the achieved excess risk does not have any logarithmic factors which
is shown to be the case using a ``peeling-'' type argument that is specific to convex functions. The idea of our algorithm is to sample efficiently from the continuous distribution on all points in $\C$ with density $\mathcal{P}(\theta) \propto e^{-\eps\eL(\theta)}$. Although the distribution we hope to sample from is log-concave, standard techniques do not work for our purposes: existing methods converge only in statistical difference, whereas we require a \emph{multiplicative} convergence guarantee to provide $(\eps,0)$-differential privacy. Previous solutions to this issue (\citet{HT09}) worked for  the uniform distribution, but not for general log-concave distributions.

The problem comes from the combination of an arbitrary convex set and an arbitrary (Lipschitz) loss function defining $\mathcal{P}$. We circumvent this issue by giving an algorithm that samples from an appropriately defined distribution $\tilde{\mathcal{P}}$ on a cube containing $\C$, such that (i) the algorithm outputs a point in $\C$ with constant probability, and (ii) $\tilde{\mathcal{P}}$, conditioned on sampling from $\C$, is within multiplicative distance $O(\eps)$ from the correct distribution. We use, as a subroutine, the random walk on grid points of the cube of \cite{applegate1991sampling}.
\rbnote{I slightly rephrased/corrected the above statement}

\paragraph{Localization: Optimal Algorithms for Strongly Convex Functions.}
The exponential-sampling-based technique discussed above does not take advantage of strong convexity of the loss function. We show, however, that a novel combination of two standard techniques---the exponential mechanism and  Laplace-noise-based output perturbation---does yield an optimal algorithm. \citet{CMS11} and \cite{Rubinstein09} showed that strongly convex functions have low-sensitivity minimizers, and hence that one can release the minimum of a strongly convex function with Laplace noise (with total Euclidean length about $\rho = \frac{p}{\Delta \eps n}$ if each loss function is $\Delta$-strongly convex). Simply using this first estimate as a candidate output does not yield optimal utility in general; instead it gives a risk bound of roughly $\frac{p}{\Delta \eps}$.

The main insight is that this first estimate defines us a small neighborhood $\C_0\subseteq \C$, of radius about $\rho$, that contains the true minimizer. Running the exponential mechanism in this small set improves the excess risk bound by a factor of about $\rho$ over running the same mechanism on all of $\C$. The final risk bound is then  $\tilde O(\rho \frac{p}{\eps n} )= \tilde O(\frac{p^2}{\Delta \eps^2 n})$, which matches the lower bound of $\Omega (\frac{p^2}{\eps^2 n})$ when $\Delta = \Omega(1)$.  This simple ``localization'' idea is not needed for $(\eps,\delta)$-privacy, since the gradient descent method can already take advantage of strong convexity to converge more quickly.

\paragraph{Lower Bounds.} We use techniques developed to bound the accuracy of releasing 1-way marginals (due to \citet{HT09} for $(\eps,0)-$ and \citet{BUV13} for $(\eps,\delta)$-privacy) to show that our algorithms have essentially optimal risk bounds. The instances that arise in our lower bounds are simple: the functions can be linear (or quadratic, for the case of strong convexity) and the constraint set $\C$ can be either the unit ball or the hypercube.  In particular, our lower bounds apply to special case of  smooth functions, demonstrating the optimality of objective perturbation \cite{CMS11,KST12} in that setting. The reduction to lower-bounds for 1-way marginals is not quite black-box; we exploit specific properties of the instances used by \citet{HT09,BUV13}.

Finally, we provide a much stronger lower bound on the utility of a specific algorithm, the Huberization-based algorithm proposed by \citet{CMS11} for support vector machines. In order to apply their algorithm to nonsmooth loss functions, they proposed smoothing the loss function by Huberization, and then running their algorithm (which requires smoothness for the privacy analysis) on the resulting, modified loss functions. We show that for any setting of the Huerization parameters, there are simple, one-dimensional nonsmooth loss functions for which the algorithm has error $\Omega(n)$. This bound justifies the effort we put into designing new algorithms for nonsmooth loss functions.

\paragraph{Generalization Error.} 
\asnote{Added a discussion of generalization error. Comments?}
In Appendix~\ref{app:generalization},
we discuss the implications of our results for \emph{generalization
  error}. Specifically, suppose that the data are drawn i.i.d. from a
distribution $\tau$, and 
let $\erisk (\theta)$ denote the expected loss of
$\theta$ on unseen data from $\tau$, that is $\erisk(\theta) = 
\E_{d\sim\tau}\left[\ell(\theta;d)\right]$. 

For an important class of loss functions, called
\emph{generalized linear models}, the straightforward application of
our algorithms gives generalization error $\tilde
O\paren{\min\paren{\frac{1}{\sqrt{n}},\frac{p^c}{\eps n}}}$ where $c=1$ for the case of
  $(\eps,0)$-differential privacy, and $c=2$ for
  $(\eps,\delta)$-differential privacy (assuming
  $\log(1/\delta)=O(\log n)$). This bound is tight: the
  $\frac{1}{\sqrt{n}}$ is necessary even in the nonprivate setting,
and the necessity of the $\frac{p^c}{\eps n}$ term follows from our
lower bounds on excess empirical risk (they are also lower bounds on
generalization error). 

For the case of general Lipschitz convex functions, a modification of
our algorithms gives excess risk $\tilde
O\paren{\frac{p^{c'}}{\sqrt{\eps n}}}$, where $c'=\frac 12$ for
$(\eps,0)$-differential privacy and $c'=\frac 1 4$ for $(\eps,\delta)$
differential privacy (that is, the generalization error bound is
roughly the square root of the corresponding empirical error
bound). The best known lower bound, however, is the same as for the
special case of generalized linear models. The bounds match when
$p\approx n$ (in which case no nontrivial generalization error is
possible). However, for smaller values of $p$ there remains a gap that
is polynomial in $p$. Closing the gap is an interesting open problem.

\subsection{Other Related Work}

In addition to the previous work mentioned above, we mention several closely related works. 
A rich line of work seeks to characterize the optimal error of differentially private algorithms for learning and optimization \citet{KLNRS08,beimel2014bounds,ChaudhuriH11,BeimelNS13-itcs,BeimelNS13-random}. In particular, our results on $(\eps,0)$-differential privacy imply nearly-tight bounds on the ``representation dimension'' \citet{BeimelNS13-random} of convex Lipschitz functions. \asnote{Add these implications (probably to Appendix)?}

\citet{JT14} gave dimension-independent expected excess risk bounds for the special case of ``generalized linear models'' with a strongly convex regularizer, assuming that $\C=\re^p$ (that is, unconstrained optimization). \citet{KST12,ST13sparse} considered parameter convergence for high-dimensional sparse regression (where $p\gg n$). The settings of those papers are orthogonal to ours, though a common generalization would be interesting.

Efficient implementations of the exponential mechanism over infinite domains were discussed by \citet{HT09}, \citet{ChaudhuriSS13} and \citet{KapralovT13}. The latter two works were specific to sampling (approximately) singular vectors of a matrix, and their techniques do not obviously apply here.

Differentially private convex learning in different models has also been studied: for example, \citet{JKT12,DuchiJW13,ST13online} study online optimization, \citet{JT13} study an interactive model tailored to high-dimensional kernel learning. Convex optimization techniques have also played an important role in the development of algorithms for ``simultaneous query release'' (e.g., the line of work emerging from \citet{HR10}). We do not know of a direct connection between those works and our setting.






\subsection{Additional Definitions}  For completeness, we state a few additional definitions related to convex sets and functions.
\begin{itemize}
\item
  $\ell:\C\to\re$ is $L$-Lipschitz (in the Euclidean norm) if, for all
  pairs $x,y\in \C$, we have $|\ell(x)-\ell(y)| \leq L\|x-y\|_2$.
  A subgradient of a convex $\ell$ function at $x$, denoted $\partial \ell (x)$,  is the set of vectors $z$ such that for all $y\in \C$, $\ell(y)\geq \ell(x) + \ip{z}{y-x}$.

\item   $\ell$ is \emph{$\Delta$-strongly convex} on $\C$ if, for all $x\in
  \C$, for all subgradients $z$ at $x$, and
  for all $y\in \C$, we have $\ell(y) \geq \ell(x) + \ip{z}{y-x} +
  \frac{\Delta}{2}\ltwo{y-x}^2$ (i.e., $\ell$ is bounded
  \emph{below} by a quadratic function tangent at $x$).
\item   $\ell$ is \emph{$\beta$-smooth} on $\C$ if, for all $x\in \C$, for all subgradients $z$ at $x$  and for all $y\in \C$, we have
  $\ell(y) \leq \ell(x) + \ip{z}{y-x} + \frac{\beta}{2}\ltwo{y-x}^2$
  (i.e., $\ell$ is bounded \emph{above} by a quadratic function
  tangent at $x$). Smoothness implies differentiability, so the subgradient at $x$ is unique.

\item Given a convex set $\C$, we denote its diameter by $\ltwo{\C}$. We denote the projection of any vector $\theta\in\re^p$ to the convex set $\C$ by $\Pi_\C(\theta)=\arg\min\limits_{x\in\C}\ltwo{\theta-x}$.

\end{itemize}

\subsection{Organization of this Paper}
Our upper bounds (efficient algorithms) are given in Sections~\ref{sec:gradDesc}, \ref{sec:privConvexUpper}, and \ref{sec:out-pert}, whereas our lower bounds are given in Section~\ref{sec:privConvexLower}. Namely, in Section~\ref{sec:gradDesc}, we give efficient construction for $(\epsilon, \delta)$-differentially private algorithms for general convex loss as well as Lipschitz strongly convex loss. In Section~\ref{sec:privConvexUpper}, we discuss a pure $\epsilon$-differentially private algorithm for general Lipschitz convex loss and outline an efficient construction for such algorithm. In Section~\ref{sec:out-pert}, we discuss our localization technique and show how to construct efficient pure $\epsilon$-differentially private algorithms for Lipschitz strongly convex loss. We derive our lower bound for general Lipschitz convex loss in Section~\ref{sec:lipschitzConvexLower} and our lower bound for Lipschitz strongly convex loss in Section~\ref{sec:lipschitzConvexLowerStrong}. In Section~\ref{sec:implement-eff-exp}, we discuss a generic construction of an efficient algorithm for sampling (with a multiplicative distance guarantee) from a logconcave distribution over an arbitrary convex bounded set. As a by-product of our generic construction, we give the details of the construction of our efficient $\epsilon$-differentially private algorithm from Section~\ref{sec:eff-pure-eps-alg}.

The appendices contain proof details and supplementary material: Appendix~\ref{sec:huberization} shows that smoothing a nonsmooth loss function in order to apply the objective perturbation technique of \citet{CMS11} can introduce significant additional error. Appendix~\ref{sec:localization-eps-delta} gives details on the application of localization in the setting of $(\eps,\delta)$-differential privacy. Appendix \ref{app:proof-lower-bounds-lem} provides additional details on the proofs of lower bounds. In Appendix~\ref{sec:expectatioHigh}, we explain standard modifications that allow our algorithms to give high probability guarantees instead of expected risk guarantees. Finally, in Appendix \ref{app:generalization} we discuss the how our algorithms can be adapted to provide guarantees on generalization error, rather than empirical error.

\section{Gradient Descent and Optimal $(\eps,\delta)$-differentially private Optimization}
\label{sec:gradDesc}

\asnote{Switch from normal noise to DJW randomizer.}
In this section we provide an algorithm $\A_{\sf Noise-GD}$ (Algorithm \ref{Algo:GradDesc}) for computing $\privtheta$  using a \emph{noisy stochastic variant} of the classic gradient descent algorithm from the optimization literature \citep{Boyd}. Our algorithm (and the utility analysis) was inspired by the approach of \citet{WM10} for logistic regression.

All the excess risk bounds \eqref{eq:excessRisk} in this section and the rest of this paper, are presented in expectation over the randomness of the algorithm. In Section \ref{sec:expectatioHigh} we provide a generic tool to translate the expectation bounds into high probability bound albeit at a loss of extra logarithmic factor in the inverse of the failure probability.

\mypar{Note(1)} The results in this section do \emph{not} require the loss function $\ell$ to be differentiable. Although we present Algorithm $\A_{\sf Noise-GD}$ (and its analysis) using the  gradient of the loss function $\ell(\theta;d)$ at $\theta$, the same guarantees hold if instead of the gradient, the algorithm is run with any sub-gradient of $\ell$ at $\theta$.

\mypar{Note(2)} Instead of using the stochastic variant in Algorithm \ref{Algo:GradDesc}, one can use the complete gradient (i.e., $\grad \empL(\theta;\D)$) in Step \ref{line:GD2} and still have the same utility guarantee as Theorem \ref{thm:abcd4}. However, the running time goes up by a factor of $n$.

\begin{algorithm}[htb]
	\caption{$\A_{\sf Noise-GD}$: Differentially Private Gradient Descent}
	\begin{algorithmic}[1]
		\REQUIRE Data set: $\D=\{d_1,\cdots,d_n\}$, loss function $\ell$ (with Lipschitz constant $L$), privacy parameters $(\epsilon,\delta)$, convex set $\C$, and the learning rate function $\eta:[n^2]\to\re$.
        \STATE  Set noise variance $\sigma^2\leftarrow \frac{32 L^2 n^2\log(n/\delta)\log(1/\delta)}{\epsilon^2}$.
        \STATE $\htheta_1:$ Choose any point from $\C$.
        \FOR{$t=1$ to $n^2-1$}
            {\STATE Pick $d\sim_{u}\D$ with replacement.\label{line:GD1}}
            {\STATE $\htheta_{t+1}=\Pi_\C\left(\htheta_t-\eta(t)\left[n\grad \ell(\htheta_t;d)+b_t\right]\right)$, where $b_t\sim\mathcal{N}\left(0,\mathbb{I}_{p}\sigma^2\right)$.\label{line:GD2}}
        \ENDFOR
        \STATE Output $\privtheta=\htheta_{n^2}$.
	\end{algorithmic}
	\label{Algo:GradDesc}
\end{algorithm}

\begin{thm}[Privacy guarantee]
Algorithm $\A_{\sf Noise-GD}$ (Algorithm \ref{Algo:GradDesc}) is $(\epsilon,\delta)$-differentially private.
\label{thm:priv13}
\end{thm}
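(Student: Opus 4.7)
The plan is to verify privacy by analyzing a single iteration, applying privacy amplification by subsampling, and then composing across the $T=n^2$ iterations. The projection $\Pi_\C$ is just post-processing and is therefore free. I would structure the proof as four steps.

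First, I would fix neighboring datasets $\D, \D'$ and condition on the entire history $(\htheta_1,\ldots,\htheta_t)$, which fixes $\htheta_t$. Conditioned on the sampled index $i \in [n]$, step $t$ releases $n\nabla \ell(\htheta_t; d_i) + b_t$ (followed by a deterministic update plus a projection). Since $\ell(\cdot;d)$ is $L$-Lipschitz, $\|n\nabla\ell(\htheta_t;d_i)\|_2 \le nL$, so the $\ell_2$-sensitivity of the (unsampled) Gaussian mechanism with respect to changing $d_i$ to $d_i'$ is at most $2nL$. Choosing noise variance matched to $\epsilon_0 \approx \epsilon/\sqrt{8\log(1/\delta)}$ and $\delta_0 \approx \delta/(2n)$, the standard Gaussian mechanism guarantee makes this release $(\epsilon_0,\delta_0)$-DP; plugging in will reproduce the algorithm's value of $\sigma^2$.

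Second, I would apply amplification by subsampling (in the style of Kasiviswanathan et al.): because the index $i$ is chosen uniformly from $[n]$ independently of everything else, with probability $1-1/n$ the randomness picks an index where $\D$ and $\D'$ agree and the output distribution is identical, and with probability $1/n$ we are at the differing index with $(\epsilon_0,\delta_0)$-privacy of the base Gaussian mechanism. This yields a per-iteration privacy guarantee of roughly $(\epsilon_1, \delta_1) = \bigl(\log(1+(e^{\epsilon_0}-1)/n),\,\delta_0/n\bigr) \le \bigl(2\epsilon_0/n,\,\delta_0/n\bigr)$ provided $\epsilon_0 \le 1$. I would cite (or include as a remark) the folklore lemma that such amplification holds even conditioned on the history, because the sampling is fresh at each step.

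Third, I would invoke the advanced (strong) composition theorem of Dwork--Rothblum--Vadhan to compose the $T=n^2$ adaptively-chosen, per-iteration guarantees. This yields overall privacy
\[
\epsilon_{\mathrm{tot}} \;\le\; \sqrt{2 T \log(1/\delta')}\cdot \epsilon_1 + T \epsilon_1(e^{\epsilon_1}-1),\qquad \delta_{\mathrm{tot}} \;\le\; T\delta_1 + \delta'.
\]
Substituting $T=n^2$, $\epsilon_1 \le 2\epsilon_0/n$, and choosing $\delta' = \delta/2$ gives $\epsilon_{\mathrm{tot}} \lesssim \epsilon_0\sqrt{\log(1/\delta)}$ and $\delta_{\mathrm{tot}} \le n\delta_0 + \delta/2 \le \delta$. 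Setting $\epsilon_0 \asymp \epsilon/\sqrt{\log(1/\delta)}$ in the Gaussian-mechanism calibration $\sigma \ge 2nL\sqrt{2\log(1.25/\delta_0)}/\epsilon_0$ and $\delta_0 = \delta/(2n)$ yields exactly (up to the constants absorbed into ``$32$'') the value $\sigma^2 = 32L^2 n^2\log(n/\delta)\log(1/\delta)/\epsilon^2$ stated in the algorithm. Finally, since the projection $\Pi_\C$ and the deterministic scaling by $\eta(t)$ depend on data only through the already-released noisy gradient, they are post-processing and do not affect the privacy guarantee.

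The main obstacle I expect is bookkeeping the interplay between subsampling amplification and adaptive composition: one has to be careful that the subsampling argument applies stepwise conditioned on the history (which it does, because the sample at step $t$ is independent of all previous coins) and that the constants line up with the stated $\sigma^2$. A secondary (but easier) subtlety is to check that the chosen $\epsilon_0$ stays within the regime where the linear-in-$\epsilon_0$ subsampling bound and the advanced composition theorem apply; for the natural parameter range $\epsilon \le 1$ and $\delta < 1/n$ this is automatic, but it should be explicitly noted.
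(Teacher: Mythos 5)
Your proposal matches the paper's proof essentially step for step: a per-iteration Gaussian-mechanism bound on the privacy loss (calibrated to $\epsilon_0\approx\epsilon/(2\sqrt{\log(1/\delta)})$), amplification by subsampling with rate $1/n$ (the paper cites Lemma 4 of Beimel et al., exactly the lemma you invoke), and then strong composition of Dwork--Rothblum--Vadhan over the $T=n^2$ steps, with the projection handled as post-processing. The bookkeeping points you flag (the $\epsilon_0\le 1$ condition for the amplification lemma, and splitting the $\delta$ budget between the Gaussian tail events and the composition) are precisely the ones the paper also addresses, so no further comparison is needed.
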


\begin{proof}

\newcommand{\xtd}{{X_t(\D)}}
\newcommand{\xtdp}{{X_t(\D')}}

	At any time step $t\in[n^2]$ in Algorithm $\A_{\sf Noise-GD}$, fix the randomness due to sampling in Line \ref{line:GD1}. Let $\xtd=n\grad\ell(\htheta_t;d)+b_t$ be a random variable defined over the randomness of $b_t$ and conditioned on $\htheta_t$ (see Line \ref{line:GD2} for a definition), where $d\in\D$ is the data point picked in Line \ref{line:GD1}. Denote $\mu_\xtd(y)$ to be the measure of the random variable $X_t(\D)$ induced on $y\in\re$. For any two neighboring data sets $\D$ and $\D'$, define the \emph{privacy loss} random variable \citep{DRV} to be $W_t=\left|\log\frac{\mu_{\xtd}(X_t(\D))}{\mu_{\xtdp}(X_t(\D))}\right|$. Standard differential privacy arguments for Gaussian noise addition (see \cite{KST12,NTZ13}) will ensure that with probability $1-\frac{\delta}{2}$ (over the randomness of the random variables $b_t$'s and conditioned on the randomness due to sampling), $W_t\leq\frac{\epsilon}{2\sqrt{\log(1/\delta)}}$ for all $t\in[n^2]$. Now using the following lemma (Lemma \ref{claim:privAmp} with $\epsilon'=\frac{\epsilon}{2\sqrt{\log(1/\delta)}}$ and $\gamma=1/n$) we ensure that over the randomness of $b_t$'s and the randomness due to sampling in Line \ref{line:GD1} , w.p. at least $1-\frac{\delta}{2}$, $W_t\leq\frac{\epsilon}{n\sqrt{\log(1/\delta)}}$ for all $t\in[n^2]$. While using Lemma \ref{claim:privAmp}, we ensure that the condition $\frac{\epsilon}{2\sqrt{\log(1/\delta)}}\leq 1$ is satisfied.
	
	\begin{lem}[Privacy amplification via sampling. Lemma 4 in \cite{beimel2014bounds}]
		Over a domain of data sets $\T^n$, if an algorithm $\A$ is $\epsilon'\leq 1$ differentially private, then for any data set $\D\in\T^n$, executing $\A$ on uniformly random $\gamma n$ entries of $\D$ ensures $2\gamma\epsilon'$-differential privacy.
		\label{claim:privAmp}
	\end{lem}
	
	To conclude the proof, we apply ``strong composition'' (Lemma \ref{thm:strongComposition}) from \cite{DRV}. With probability at least $1-\delta$, the privacy loss $W=\sum\limits_{t=1}^{n^2} W_t$ is at most $\epsilon$. This concludes the proof.
	
	\begin{lem}[Strong composition \citep{DRV}]
		Let $\epsilon,\delta'\geq 0$. The class of $\epsilon$-differentially private algorithms satisfies $(\epsilon',\delta')$-differential privacy under $T$-fold adaptive composition for $\epsilon'=\sqrt{2T\ln(1/\delta')}\epsilon+T\epsilon(e^\epsilon-1)$.
		\label{thm:strongComposition}
	\end{lem}
\end{proof}
In the following we provide the utility guarantees for Algorithm $\A_{\sf Noise-GD}$ under two different settings, namely, when the function $\ell$ is Lipschitz, and when the function $\ell$ is Lipschitz and strongly convex. In Section \ref{sec:privConvexLower} we argue that these excess risk bounds are essentially tight.

\begin{thm}[Utility guarantee]
Let $\sigma^2=O\left(\frac{L^2 n^2\log(n/\delta)\log(1/\delta)}{\epsilon^2}\right)$. For  $\privtheta$ output by Algorithm $\A_{\sf Noise-GD}$ we have the following. (The expectation is over the randomness of the algorithm.)
\begin{enumerate}
\item \mypar{Lipschitz functions} If we set the learning rate function $\eta(t)=\frac{\ltwo{\C}}{\sqrt{t(n^2L^2+p\sigma^2)}}$, then we have the following excess risk bound. Here $L$ is the Lipscthiz constant of the loss function $\ell$. $$\E\left[\empL(\privtheta;\D)-\empL(\theta^*;\D)\right]=O\left(\frac{L\ltwo{\C}\log^{3/2}(n/\delta)\sqrt{p\log(1/\delta)}}{\epsilon}\right).$$

\item \mypar{Lipschitz and strongly convex functions} If we set the learning rate function $\eta(t)=\frac{1}{\Delta n t}$, then we have the following excess risk bound. Here $L$ is the Lipscthiz constant of the loss function $\ell$ and $\Delta$ is the strong convexity parameter. $$\E\left[\empL(\privtheta;\D)-\empL(\theta^*;\D)\right]=O\left(\frac{L^2\log^2(n/\delta)p\log(1/\delta)}{n\Delta\epsilon^2}\right).$$
\end{enumerate}
\label{thm:abcd4}
\end{thm}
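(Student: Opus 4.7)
\textbf{Proof plan for Theorem~\ref{thm:abcd4}.}

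The plan is to recognize Algorithm~\ref{Algo:GradDesc} as projected stochastic gradient descent on $\empL(\cdot;\D)$ driven by an unbiased, bounded-variance oracle, and then invoke standard last-iterate SGD convergence theorems. First I would verify the oracle properties of $g_t := n\grad\ell(\htheta_t; d) + b_t$. Because $d$ is drawn uniformly from $\D$,
\[
\E[n\grad\ell(\htheta_t; d) \mid \htheta_t]
= \sum_{i=1}^n \grad\ell(\htheta_t; d_i)
= \grad\empL(\htheta_t; \D),
\]
and since $b_t$ is independent mean-zero Gaussian noise, the cross term vanishes and
\[
\E\!\left[\ltwo{g_t}^2 \;\Big|\; \htheta_t\right]
\le n^2 L^2 + p\sigma^2 \;=:\; G^2,
\]
using $\ltwo{\grad\ell(\cdot;d)} \le L$ and $\E\ltwo{b_t}^2 = p\sigma^2$. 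Hence the update in Line~\ref{line:GD2} is exactly projected SGD on $\empL(\cdot;\D)$ with stochastic-gradient variance at most $G^2$.

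For Part~1 (general Lipschitz), I would apply the standard last-iterate SGD bound for convex Lipschitz objectives (\textit{à la} Shamir--Zhang): with diminishing step size $\eta(t) = \ltwo{\C}/\sqrt{t\,G^2}$ the expected excess risk of $\htheta_T$ is
\[
\E\!\left[\empL(\htheta_T;\D) - \empL(\theta^*;\D)\right]
\;=\; O\!\left(\frac{\ltwo{\C}\, G\, \log T}{\sqrt{T}}\right).
\]
Substituting $T = n^2$, noting that the noise term $p\sigma^2$ dominates $n^2 L^2$ in the interesting regime (otherwise the bound already exceeds $n$), and plugging in $\sigma^2 = \Theta\!\big(L^2 n^2 \log(n/\delta)\log(1/\delta)/\epsilon^2\big)$ yields
\[
O\!\left(\frac{L \ltwo{\C}\, \log^{3/2}(n/\delta)\sqrt{p\log(1/\delta)}}{\epsilon}\right),
\]
as claimed.

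For Part~2 (Lipschitz and $\Delta$-strongly convex), observe that $\empL(\cdot;\D) = \sum_i \ell(\cdot;d_i)$ is $n\Delta$-strongly convex. With $\eta(t) = 1/(n\Delta\, t)$, the analogous last-iterate SGD bound for strongly convex objectives gives
\[
\E\!\left[\empL(\htheta_T;\D) - \empL(\theta^*;\D)\right]
\;=\; O\!\left(\frac{G^2 \log T}{n\Delta\, T}\right).
\]
Plugging in $T=n^2$ and the same $\sigma^2$ reduces (after bookkeeping) to $O\!\big(\tfrac{L^2 p \log^2(n/\delta)\log(1/\delta)}{n\Delta \epsilon^2}\big)$, matching the theorem statement.

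The main subtle point is that the algorithm outputs the \emph{last} iterate $\htheta_{n^2}$ rather than a Polyak average, so one cannot appeal to the simplest averaged SGD analysis; instead one must use last-iterate convergence results, which incur an extra $\log T$ factor. This factor is precisely what produces the $\log^{3/2}(n/\delta)$ (respectively $\log^{2}(n/\delta)$) powers in the final bounds, since $\log T = 2\log n \le 2\log(n/\delta)$. Beyond the choice of the appropriate convergence theorem, the remainder is routine: checking that the oracle bound $G^2$ holds uniformly (it does, because both $L$ and $\sigma^2$ are deterministic), and simplifying under the assumption $\delta < 1/n$.
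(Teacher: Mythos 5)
Your proposal is correct and follows essentially the same route as the paper: the authors likewise verify that $G_t=n\grad\ell(\htheta_t;d)+b_t$ is an unbiased stochastic gradient with $\E[\ltwo{G_t}^2]\le n^2L^2+p\sigma^2$, and then invoke the Shamir--Zhang last-iterate bounds (their Theorems 2 and 1) with $T=n^2$, $G=\sqrt{n^2L^2+p\sigma^2}$, and $\lambda=n\Delta$ for the strongly convex case. Your remark about needing last-iterate (rather than averaged) convergence, and the resulting extra $\log T$ factor, matches the source of the logarithmic powers in the stated bounds.
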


\begin{proof}
Let $G_t=n\grad\ell(\htheta_t;d)+b_t$ in Line \ref{line:GD2} of Algorithm \ref{Algo:GradDesc}. First notice that over the randomness of the sampling of the data entry $d$ from $\D$, and the randomness of $b_t$, $\E\left[G_t\right]=\grad\empL(\htheta_t;\D)$. Additionally, we have the following bound on $\E[\ltwo{G_t}^2]$.
\begin{align}
\E[\ltwo{G_t}^2]&=n^2\E[\ltwo{\grad\ell(\htheta_t;d)}^2]+2n\E[\ip{\grad\ell(\htheta_t;d)}{b_t}]+\E[\ltwo{b_t}^2]\nonumber\\
&\leq n^2L^2+p\sigma^2\ \ \ \ \ \ \ \ \ \ \ \ \ \ \ \text{[Here $\sigma^2$ is the variance of $b_t$ in Line \ref{line:GD2}]}
\label{eq:GDabcd1}
\end{align}
In the above expression we have used the fact that since $\htheta_t$ is independent of $b_t$, so $\E[\ip{\grad\ell(\htheta_t;d)}{b_t}]=0$. Also, we have $\E[\ltwo{b_t}^2]=p\sigma^2$. We can now directly use Lemma \ref{thm:gradDes1SZ} to obtain the required error guarantee for Lipschitz convex functions, and Lemma \ref{thm:gradDes2SZ} for Lipschitz and strongly convex functions.

\begin{lem}[Theorem 2 from \citep{shamir2013stochastic}]
Let $F(\theta)$ (for $\theta\in\C$) be a convex function and let $\theta^*=\arg\min\limits_{\theta\in\C} F(\theta)$. Let $\theta_1$ be any arbitrary point from $\C$. Consider the stochastic gradient descent algorithm $\theta_{t+1}=\Pi_\C\left[\theta_t-\eta(t) G_t(\theta_t)\right]$, where $E[G_t(\theta_t)]=\grad F(\theta_t)$, $E[\ltwo{G_t}^2]\leq G^2$ and the learning rate function $\eta(t)=\frac{\ltwo{\C}}{G\sqrt t}$. Then for any $T>1$, the following is true.
$$\E\left[F(\theta_T)-F(\theta^*)\right]=O\left(\frac{\ltwo{\C}G\log T}{\sqrt T}\right).$$
\label{thm:gradDes1SZ}
\end{lem}
Using the bound from \eqref{eq:GDabcd1} in Lemma \ref{thm:gradDes1SZ} (i.e., set $G=\sqrt{n^2L^2+p\sigma^2}$), and setting $T=n^2$ and the learning rate function $\eta(t)$ as in Lemma \ref{thm:gradDes1SZ}, gives us the required excess risk bound for Lipschitz convex functions. For Lipschitz and strongly convex functions we use the following result by \cite{shamir2013stochastic}.

\begin{lem}[Theorem 1 from \citep{shamir2013stochastic}]
Let $F(\theta)$ (for $\theta\in\C$) be a $\lambda$-strongly convex function and let $\theta^*=\arg\min\limits_{\theta\in\C} F(\theta)$. Let $\theta_1$ be any arbitrary point from $\C$. Consider the stochastic gradient descent algorithm $\theta_{t+1}=\Pi_\C\left[\theta_t-\eta(t) G_t(\theta_t)\right]$, where $E[G_t(\theta_t)]=\grad F(\theta_t)$, $E[\ltwo{G_t}^2]\leq G^2$ and the learning rate function $\eta(t)=\frac{1}{\lambda t}$. Then for any $T>1$, the following is true.
$$\E\left[F(\theta_T)-F(\theta^*)\right]=O\left(\frac{G^2\log T}{\lambda T}\right).$$
\label{thm:gradDes2SZ}
\end{lem}
Using the bound from \eqref{eq:GDabcd1} in Lemma \ref{thm:gradDes2SZ} (i.e., set $G=\sqrt{n^2L^2+p\sigma^2}$), $\lambda=n\Delta$, and setting $T=n^2$ and the learning rate function $\eta(t)$ as in Lemma \ref{thm:gradDes2SZ}, gives us the required excess risk bound for Lipschitz and strongly convex convex functions.
\end{proof}

\mypar{Note} Algorithm $\A_{\sf Noise-GD}$ has a running time of $O(pn^2)$, assuming that the gradient computation for $\ell$ takes time $O(p)$. Variants of Algorithm $\A_{\sf Noise-GD}$ have appeared in \cite{WM10,JKT12,DuchiJW13,song2013stochastic}. The most relevant work in our current context is that of \cite{song2013stochastic}. The main idea in \cite{song2013stochastic} is to run stochastic gradient descent with gradients computed over small batches of \emph{disjoint} samples from the data set (as opposed to one single sample used in Algorithm $\A_{\sf Noise-GD}$). The issue with the algorithm is that it cannot provide excess risk guarantee which is $o(\sqrt n)$, where $n$ is the number of data samples. One observation that we make is that if one removes the constraint of disjointness and use the amplification lemma (Lemma \ref{claim:privAmp}), then one can ensure a much tighter privacy guarantees for the same setting of parameters used in the paper. 


\section{Exponential Sampling and Optimal $(\epsilon,0)$-private Optimization}
\label{sec:privConvexUpper}

In this section, we focus on the case of pure $\epsilon$-differential privacy and provide an optimal efficient algorithm for empirical risk minimization for the general class of convex and Lipschitz loss functions. The main building block of this section is the well-known exponential mechanism \cite{MT07}.

First, we show that a variant of the exponential mechanism is optimal. A major technical contribution of this section is to make the exponential mechanism computationally efficient which is discussed in Section~\ref{sec:eff-pure-eps-alg}.

\subsection{Exponential Mechanism for Lipschitz Convex Loss}
\label{sec:lipschitzConvex}

In this section we only deal with loss functions which are Lipschitz. We provide an $\epsilon$-differentially private algorithm (Algorithm \ref{Alg:GenSamp}) which achieves the optimal excess risk for arbitrary convex bounded sets.

\begin{algorithm}[htb]
	\caption{$\A_{\sf exp-samp}$: Exponential sampling based convex optimization}
	\begin{algorithmic}[1]
		\REQUIRE Data set of size $n$: $\D$, loss function $\ell$, privacy parameter $\epsilon$ and convex set $\C$.
        \STATE $\empL(\theta;\D)=\sum\limits_{i=1}^n\ell(\theta;d_i)$.
        \STATE Sample a point $\privtheta$ from the convex set $\C$ w.p. proportional to $\exp\left(-\frac{\epsilon}{2L\ltwo{\C}}\empL(\theta;\D)\right)$ and output. \label{step2}
	\end{algorithmic}
	\label{Alg:GenSamp}
\end{algorithm}

\begin{thm}[Privacy guarantee]
Algorithm \ref{Alg:GenSamp} is $\epsilon$-differentially private.
\label{thm:privVol}
\end{thm}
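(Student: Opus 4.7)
The plan is to recognize Algorithm~\ref{Alg:GenSamp} as an instance of the McSherry--Talwar exponential mechanism applied to the score $q(\theta;\D) = -\empL(\theta;\D)$, with an appropriately defined notion of sensitivity. Fix neighboring datasets $\D,\D'$ differing only in one record ($d_i$ vs.\ $d_i'$). Letting $Z_\D = \int_\C e^{-\epsilon \empL(\theta;\D)/(2L\ltwo{\C})}\,d\theta$, the output density satisfies
\[
\frac{p_\D(\theta)}{p_{\D'}(\theta)} \;=\; \exp\!\left(-\frac{\epsilon\,[\empL(\theta;\D)-\empL(\theta;\D')]}{2L\ltwo{\C}}\right) \cdot \frac{Z_{\D'}}{Z_\D},
\]
so the goal reduces to bounding each of these two factors by $e^{\epsilon/2}$ (in absolute value of the log).

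The key step is the sensitivity bound. Note that $\empL(\theta;\D)-\empL(\theta;\D') = \ell(\theta;d_i)-\ell(\theta;d_i')$, and this difference, as a function of $\theta$, is $2L$-Lipschitz by the triangle inequality applied to the Lipschitz assumption on each $\ell(\cdot;d)$. Hence on the convex set $\C$ of diameter $\ltwo{\C}$, the function $h(\theta) := \ell(\theta;d_i)-\ell(\theta;d_i')$ takes values in an interval of length at most $2L\ltwo{\C}$. Crucially, pointwise boundedness of $\ell$ is \emph{not} assumed; only the range of $h$ over $\C$ is controlled, but that is all we will need.

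Now I would exploit shift-invariance of the sampling distribution: multiplying the unnormalized density by any $\theta$-independent constant leaves the normalized distribution unchanged. Let $m$ be the midpoint of the range of $h$ on $\C$, so that $h(\theta)-m \in [-L\ltwo{\C},L\ltwo{\C}]$ for every $\theta\in\C$. Absorbing $e^{\epsilon m/(2L\ltwo{\C})}$ into the partition functions, the pointwise ratio becomes
\[
\frac{p_\D(\theta)}{p_{\D'}(\theta)} \;=\; \exp\!\left(-\frac{\epsilon(h(\theta)-m)}{2L\ltwo{\C}}\right) \cdot \frac{\int_\C e^{-\epsilon \empL(\theta;\D')/(2L\ltwo{\C})}\,d\theta}{\int_\C e^{-\epsilon \empL(\theta;\D')/(2L\ltwo{\C})}\,e^{-\epsilon(h(\theta)-m)/(2L\ltwo{\C})}\,d\theta}.
\]
Since $|h(\theta)-m|\leq L\ltwo{\C}$, the exponential factor lies in $[e^{-\epsilon/2},e^{\epsilon/2}]$, and the ratio of integrals also lies in $[e^{-\epsilon/2},e^{\epsilon/2}]$ because the integrand in the denominator is pointwise within a factor $e^{\pm\epsilon/2}$ of that in the numerator. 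Multiplying yields $p_\D(\theta)/p_{\D'}(\theta)\in[e^{-\epsilon},e^{\epsilon}]$, which is $\epsilon$-differential privacy.

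The only subtlety worth flagging is that the textbook exponential mechanism statement uses the ``sup-sensitivity'' $\max_{\D\sim\D',\theta}|q(\theta;\D)-q(\theta;\D')|$, which is \emph{not} finite under the bare Lipschitz-plus-diameter assumption (nothing prevents $\ell(\theta;d)$ and $\ell(\theta;d')$ from differing by an arbitrary $\theta$-independent constant). The shift-invariance argument above shows that the relevant quantity is instead the $\theta$-range of $\ell(\cdot;d_i)-\ell(\cdot;d_i')$ on $\C$, which is bounded by $2L\ltwo{\C}$; this is exactly what justifies the normalization constant $2L\ltwo{\C}$ appearing in Line~\ref{step2}. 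Once this is in hand, there is no further obstacle and the proof is short.
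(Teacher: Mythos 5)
Your proof is correct and follows the same basic route as the paper's: both exploit the fact that multiplying the unnormalized density by a $\theta$-independent constant does not change the output distribution, and then run the standard two-factor (pointwise ratio times normalization ratio) exponential-mechanism analysis. The paper does this by subtracting $\empL(\theta_0;\D)$ for a fixed anchor $\theta_0\in\C$, asserting that the resulting score has sensitivity at most $L\ltwo{\C}$, and citing \cite{MT07}; you instead center $h(\theta)=\ell(\theta;d_i)-\ell(\theta;d_i')$ at the midpoint of its range. Your version is actually the more careful one at exactly the delicate point: under the paper's replace-one notion of neighboring, the naive sensitivity of the anchored score $\empL(\theta;\D)-\empL(\theta_0;\D)$ is only bounded by $2L\ltwo{\C}$ (each of the two terms $\ell(\cdot;d_i)-\ell(\theta_0;d_i)$ and $\ell(\cdot;d_i')-\ell(\theta_0;d_i')$ contributes $L\ltwo{\C}$), so a black-box invocation of the exponential-mechanism theorem with that constant would only yield $2\eps$-differential privacy. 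Your midpoint-centering observation --- that only the \emph{oscillation} of $h$ over $\C$ matters, because the constant offset cancels between the pointwise factor and the ratio of partition functions --- is precisely what recovers the factor of $2$ and justifies the normalization $2L\ltwo{\C}$ in the algorithm. Your flagging of the fact that the sup-sensitivity of $\empL$ itself need not be finite under the bare Lipschitz assumption is also apt, and is the reason the shift-invariance step is not merely cosmetic. No gaps.
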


\begin{proof} First, notice that the distribution induced by the exponential weight function in step~\ref{step2} of Algorithm~\ref{Alg:GenSamp} is the same if we use $\exp\left(-\frac{\epsilon}{L\ltwo{\C}}\left(\empL(\theta;\D)-\empL(\theta_0;\D)\right)\right)$ for some arbitrary point $\theta_0\in\C$. Since $\ell$ is $L$-Lipschitz, the sensitivity of $\empL(\theta;\D)-\empL(\theta_0;\D)$ is at most $L\ltwo{\C}$. The proof then follows directly from the analysis of \emph{exponential mechanism} by \cite{MT07}.
\end{proof}

In the following we prove the utility guarantee for Algorithm $\A_{\sf exp-samp}$.

\begin{thm}[Utility guarantee]
 Let $\privtheta$ be the output of $\A_{\sf exp-samp}$ (Algorithm \ref{Alg:GenSamp} above). Then, we have the following guarantee on the expected excess risk. (The expectation is over the randomness of the algorithm.)
$$\E\left[\empL(\privtheta;\D)-\empL(\theta^*;\D)\right]=O\left(\frac{pL\ltwo{\C}}{\epsilon}\right).$$
\label{thm:abcd2}
\end{thm}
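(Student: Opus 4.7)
By rescaling we may assume $L=\ltwo{\C}=1$, so the sampling density on $\C$ is $p(\theta) \propto e^{-\lambda q(\theta)}$ where $q(\theta) := \empL(\theta;\D)-\empL(\theta^*;\D) \ge 0$ and $\lambda := \epsilon/2$. Note $q$ is convex, Lipschitz with constant $n$, and satisfies $0 \le q(\theta) \le n$. It suffices to prove $\E[q(\privtheta)] = O(p/\lambda)$; at the end we undo the rescaling by multiplying by $L\ltwo{\C}$. If $p \ge \lambda$ the bound is trivial since $\E[q(\privtheta)] \le n$ is always accessible from the $O(\min(n,\cdot))$ convention, so assume $p < \lambda$.

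The central tool is the following \emph{scaling lemma}, which is where convexity is used: for the convex sublevel sets $S_v := \{\theta\in\C : q(\theta)\le v\}$, the map $v \mapsto \vol(S_v)/v^p$ is nondecreasing on $(0,\infty)$. Indeed, for $0<v\le v'$, convexity of $q$ together with $q(\theta^*)=0$ gives
\[
  q\!\left(\theta^* + \tfrac{v}{v'}(\theta-\theta^*)\right) \;\le\; \tfrac{v}{v'}\, q(\theta) \;\le\; v
  \qquad \text{for every } \theta \in S_{v'},
\]
so $S_v \supseteq \theta^* + \tfrac{v}{v'}(S_{v'}-\theta^*)$, and $\vol(S_v)\ge (v/v')^p \vol(S_{v'})$. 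Equivalently, for $v \ge v'$ we have $\vol(S_v)\le (v/v')^p \vol(S_{v'})$.

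Now I will carry out a geometric peeling. Set $t_0 := p/\lambda$ and partition $\C$ into $S_{t_0}$ and the annular shells $A_j := S_{2^{j+1} t_0} \setminus S_{2^j t_0}$ for $j = 0,1,2,\dots$. For the normalizer, since $q \le t_0$ on $S_{t_0}$,
\[
  Z \;=\; \int_\C e^{-\lambda q(\theta)}\,d\theta \;\ge\; e^{-\lambda t_0} \vol(S_{t_0}) \;=\; e^{-p}\vol(S_{t_0}).
\]
For each shell, the density is at most $e^{-\lambda \cdot 2^j t_0} = e^{-2^j p}$ pointwise, while the scaling lemma gives $\vol(S_{2^{j+1} t_0}) \le 2^{(j+1)p} \vol(S_{t_0})$. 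Combining,
\[
  \Pr[\privtheta \in A_j] \;\le\; \frac{e^{-2^j p}\, 2^{(j+1)p}\vol(S_{t_0})}{e^{-p}\vol(S_{t_0})} \;=\; 2^{(j+1)p}\, e^{-(2^j-1)p}.
\]

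Finally, decompose
\[
  \E[q(\privtheta)] \;\le\; t_0 \cdot \Pr[\privtheta \in S_{t_0}] + \sum_{j\ge 0} 2^{j+1} t_0 \cdot \Pr[\privtheta \in A_j]
  \;\le\; \tfrac{p}{\lambda}\Bigl(1 + \sum_{j\ge 0} 2^{j+1}\min\{1,\; 2^{(j+1)p} e^{-(2^j-1)p}\}\Bigr).
\]
For $j \in \{0,1\}$ use the trivial bound $1$, contributing $2 + 4 = 6$; for $j \ge 2$ use the exponential bound $2^{(j+1)(p+1)} e^{-(2^j-1)p}$, which is dominated by a convergent geometric series because the factor $e^{-(2^j-1)p}$ kills the polynomial growth $2^{(j+1)(p+1)}$ once $2^j \gg j$. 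The total sum is $O(1)$, yielding $\E[q(\privtheta)] = O(p/\lambda) = O(p/\epsilon)$, and hence $O(pL\ltwo{\C}/\epsilon)$ after undoing the rescaling.

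The main obstacle is verifying that no logarithmic factor sneaks in; the standard exponential-mechanism volume comparison gives an $O((p/\lambda)\log(n))$ bound, and the savings come entirely from the scaling lemma applied to doubling shells rather than a uniform lower bound on $Z$. The remaining steps are clean bookkeeping on geometric sums.
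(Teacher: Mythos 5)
Your proof is correct, and it reaches the same $O(pL\ltwo{\C}/\eps)$ bound with no logarithmic factor via the same underlying ``peeling'' idea as the paper, but the implementation is genuinely different. The paper decomposes $\C$ into differential cones at $\theta^*$, argues (somewhat informally) that within each infinitesimal cone the excess risk is a monotone function of the radius, and uses convexity to show that the radial widths of successive arithmetic level shells $\{(i-1)\Gamma\le \risk\le i\Gamma\}$ are nonincreasing, giving $\vol(A_i)/\vol(A_2)\le (i-1)^p$ and then a tail bound that is integrated. You instead work with the global sublevel sets $S_v=\{q\le v\}$ and prove the homothety containment $S_v\supseteq \theta^*+\frac{v}{v'}(S_{v'}-\theta^*)$, which yields the volume comparison $\vol(S_{v'})\le (v'/v)^p\vol(S_v)$ directly, and you peel over doubling shells $[2^jt_0,2^{j+1}t_0]$ rather than arithmetic ones. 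Your route avoids the cone-by-cone conditioning and the informal ``risk depends only on $\ltwo{\theta-\theta^*}$ within a cone'' step, and it bounds the expectation directly rather than passing through a high-probability statement; the paper's version has the minor advantage of producing an explicit exponential tail bound as a byproduct. Both arguments hinge on the same use of convexity (sublevel sets are star-shaped about $\theta^*$ and scale at most like $v^p$), which is exactly what kills the $\log$ factor of the generic exponential-mechanism analysis.

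Two small blemishes, neither affecting correctness: you state that $v\mapsto \vol(S_v)/v^p$ is \emph{nondecreasing}, but your own displayed inequality $\vol(S_v)\ge (v/v')^p\vol(S_{v'})$ for $v\le v'$ says it is non\emph{increasing}; the inequalities you actually invoke are the correct ones. Also, the reduction to the case $p<\lambda$ is unnecessary (and as phrased appeals to a $\min(n,\cdot)$ convention not present in the theorem statement); your peeling argument goes through verbatim for all $p,\lambda>0$, since when $t_0=p/\lambda\ge n$ all shells are empty and the bound $\E[q]\le t_0$ is immediate.
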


\begin{proof}
Consider a differential cone $\Omega$ centered at $\theta^*$ (see Figure \ref{fig:circles1}). We will bound the expected excess risk of $\privtheta$ by $O\left(\frac{pL\ltwo{\C}}{\epsilon}\right)$ conditioned on $\privtheta\in\Omega\cap\C$ for every differential cone. This immediately implies the above theorem by the properties of conditional expectation.

\begin{figure}
\begin{center}
    \includegraphics[scale=0.45]{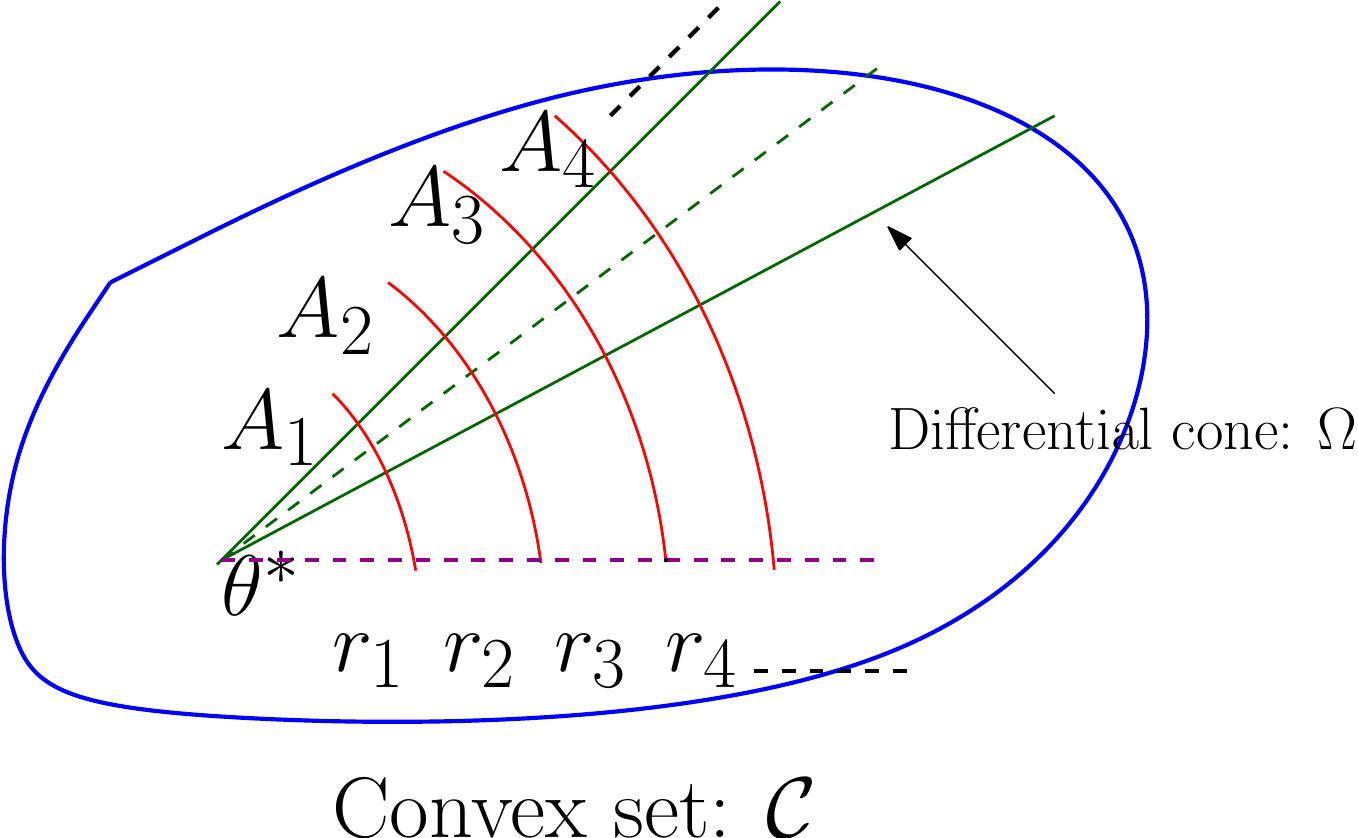}
\end{center}
\caption{Differential cone $\Omega$ inside the convex set $\C$}
\label{fig:circles1}
\end{figure}

Let $\Gamma$ be a fixed threshold (to be set later) and let $\risk(\theta)=\empL(\privtheta;\D)-\empL(\theta^*;\D)$ for the purposes of brevity. Let the marked sets $A_i$'s in Figure \ref{fig:circles1} be defined as
$${\small A_i=\{\theta\in\Omega\cap\C: (i-1)\Gamma\leq\risk(\theta)\leq i\cdot \Gamma\}.}$$
Instead of directly computing the probability of $\privtheta$ being outside $A_1$, we will analyze the probabilities for being in each of the $A_i$'s individually. This form of ``peeling'' arguments have been used for risk analysis of convex loss in the machine learning literature (e.g., see \cite{sridharan2008fast}) and will allow us to get rid of the extra logarithmic factor that would have otherwise shown up in the excess risk if we use the standard analysis of the exponential mechanism in \cite{MT07}.

Since $\Omega$ is a differential cone and since $\risk(\theta)$ is continuous on $\C$, it follows that within $\Omega\cap\C$, $\risk(\theta)$ only depends on $\ltwo{\theta-\theta^*}$. Therefore, let $r_1,r_2,\cdots$ be the distance of the set boundaries of $A_1,A_2,\cdots$ from $\theta^*$. (See Figure \ref{fig:circles1}.) One can equivalently write each $A_i$ as follows:
$${\small A_i=\{\theta\in\Omega\cap\C: r_{i-1}<\ltwo{\theta-\theta^*}\leq r_i\}.}$$
The following claim is the key part of the proof. 
\begin{claim}
Convexity of $\risk(\theta)$ for all $\theta\in\C$ implies that $r_i-r_{i-1}\leq r_{i-1}-r_{i-2}$ for all $i\geq 3$.
\label{cl:obs12}
\end{claim}
\begin{proof}
Since by definition $\theta^*$ is the minimizer of $\risk(\theta)$ within $\C$ and $\risk(\theta)$ is convex, we have $\risk(\theta_2)\geq \risk(\theta_1)$ for any $\theta_1,\theta_2\in\C\cap\Omega$ such that $\ltwo{\theta_2-\theta^*}\geq\ltwo{\theta_1-\theta^*}$. This directly implies the required bound.
\end{proof}
Now, the volume of the set $A_i$ is given by ${\vol(A_i)}=\kappa\int\limits_{r_{i-1}}^{r_i} r^{p-1} dr$ for some fixed constant $\kappa$. Hence, 
$${\small\frac{\vol(A_i)}{\vol(A_2)}=\frac{r_{i-1}^p}{r_1^p}\cdot\frac{(r_i/r_{i-1})^p-1}{(r_2/r_1)^p-1}\leq\frac{r_{i-1}^p}{r_1^p}\leq (i-1)^p.}$$
where the last two inequalities follows from Claim~\ref{cl:obs12}. 
Hence, we get the following bound on the probability that the excess risk $\risk(\privtheta)\geq 4\Gamma$ conditioned on $\privtheta\in\C\cap\Omega$ (For brevity, we remove the conditioning sign from the probabilities below).
\begin{align}
\Pr[\risk(\privtheta)\geq 4\Gamma]&\leq\frac{\Pr[\privtheta\in \bigcup\limits_{i=4}^\infty A_i]}{\Pr[\privtheta\in A_2]}\leq\sum\limits_{i=4}^{\infty}\frac{\vol(A_i)}{\vol(A_2)}\cdot e^{-\epsilon(i-3)\frac{\Gamma}{2L\ltwo{\C}}}\leq\sum\limits_{i=4}^{\infty}(i-1)^p \cdot e^{-\epsilon(i-3)\frac{\Gamma}{2L\ltwo{\C}}}\nonumber\\
&\leq\frac{3^p e^{-\epsilon\frac{\Gamma}{2L\ltwo{\C}}}}{1-2^{p}e^{-\epsilon\frac{\Gamma}{2L\ltwo{\C}}}}\nonumber
\end{align}
where the last inequality follows from the fact that $(i-1)^p\leq 3^{p}\cdot\left(2^{i-4}\right)^p$ for $i\geq 4$. 
Hence, for every $t>0$, if we choose $\Gamma=\frac{2L\ltwo{\C}}{\epsilon}\left(\left(p+1\right)\ln 3+t\right)$, then, conditioned on $\privtheta\in\C\cap\Omega$, we get
$$\Pr[\risk(\privtheta)\geq \frac{8L\ltwo{\C}}{\epsilon}\left(\left(p+1\right)\ln 3+t\right)]\leq e^{-t}.$$ 
Since this is true for every $t>0$, we have our required bound as a corollary.

\end{proof}


\subsection{Efficient Implementation of Algorithm $\A_{\sf exp-samp}$ (Algorithm \ref{Alg:GenSamp})}
\label{sec:eff-pure-eps-alg}

In this section, we give a high-level description of a computationally efficient construction of Algorithm \ref{Alg:GenSamp}. Our algorithm runs in polynomial time in $n, p$ and outputs a sample $\theta\in\C$ from a distribution that is arbitrarily close (in the multiplicative sense) to the distribution of the output of Algorithm~\ref{Alg:GenSamp}.

Since we are interested in an efficient pure $\epsilon$-differentially private algorithm, we need an efficient sampler with a multiplicative distance guarantee. In fact, if we were interested in $(\eps, \delta)$ algorithms, efficient sampling with a total variation guarantee would have sufficed which would have made our task a lot easier as we could have used one of the exisiting algorithms, e.g., \cite{LovaszV07}. In \cite{HT09}, it was shown how to sample efficiently with a multiplicative guarantee from the \emph{unifrom} distribution over a convex bounded set.  However, what we want to achieve here is more general, that is, to sample efficiently from any given logconcave distribution defined over a convex bounded set. To the best of our knowledge, this task has not been explicitly worked out before, nevertheless, all the ingredients needed to accomplish it are present in the literature, mainly \cite{applegate1991sampling}.


We highlight here the main ideas of our constrution. Since such construction is not specific to our privacy problem and could be of independent interest, in this section, we only provide the high-level description of this construction, however all the details of such construction and the proof of our main result  (Theorem~\ref{thm:eff-samp-guarantees} below) are deferred to Section~\ref{sec:implement-eff-exp}.
%

\begin{thm}
There is an efficient version of Algorithm \ref{Alg:GenSamp} that has the following guarantees.
\begin{enumerate}
\item \mypar{Privacy} The algorithm is $\epsilon$ -differentially private.\vspace{-0.2cm}
\item \mypar{Utility} The output $\privtheta\in\C$ of the algorithm satisfies
$$\E\left[\empL(\privtheta;\D)-\empL(\theta^*;\D)\right]=O\left(\frac{pL\ltwo{\C}}{\epsilon}\right).$$\vspace{-0.4cm}
\item \mypar{Running time} Assuming $\C$ is in isotropic position, the algorithm runs in time\footnote{The case where $\C$ is not in isotropic position is discussed below.}
$$O\left(\ltwo{\C}^2p^3n^3\max\left\{p\log(\ltwo{\C}pn), \epsilon \ltwo{\C}  n \right\}\right).$$
\end{enumerate}
\label{thm:eff-samp-guarantees}
\end{thm}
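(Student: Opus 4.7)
The plan is to reduce the task of sampling from the target log-concave density $\mathcal{P}(\theta) \propto \exp(-\alpha\, \empL(\theta;\D))$ on $\C$, with $\alpha = \epsilon/(2L\ltwo{\C})$, to the task of sampling from a related log-concave density on an axis-aligned bounding box $\B \supseteq \C$, and then invoke the grid random walk of \cite{applegate1991sampling} which converges in the \emph{multiplicative} sense. The key difficulty compared with standard log-concave samplers such as hit-and-run or the ball walk of \cite{LovaszV07} is that those come with only total-variation convergence guarantees, which at best yield $(\epsilon,\delta)$-differential privacy. A multiplicative bound is essential here to preserve pure $\epsilon$-privacy when composing with the $\epsilon$ already spent by the exponential mechanism itself.

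\textbf{Key steps.} First, choose $\B$ of side length $O(\ltwo{\C})$ in isotropic coordinates; if $\C$ is not already in isotropic position, precompute an approximate isotropic transform by standard methods and work in the transformed frame (this is the source of the isotropy assumption in the running-time statement). Second, define the Lipschitz extension
\[
\tilde{\empL}(\theta) \;=\; \inf_{\theta' \in \C}\bigl[\empL(\theta';\D) + nL\,\ltwo{\theta - \theta'}\bigr] \quad\text{for } \theta \in \B.
\]
This function is convex, $nL$-Lipschitz, and coincides with $\empL$ on $\C$; therefore $\tilde{\mathcal{P}}(\theta) \propto \exp(-\alpha\,\tilde{\empL}(\theta))$ is log-concave on $\B$. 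Third, discretize $\B$ by a cubic grid whose spacing $h$ is small enough that adjacent cells differ in $\tilde{\empL}$ by at most $O(\epsilon/p)\cdot\alpha^{-1}$, which guarantees that the discretized distribution is multiplicatively $(1+O(\epsilon))$-close to the continuous one. Fourth, run the grid walk of \cite{applegate1991sampling} for enough steps that its current distribution is within multiplicative factor $1+O(\epsilon)$ of the stationary $\tilde{\mathcal{P}}$ on the grid. Fifth, output the sample if it lies in $\C$; otherwise reject and repeat.

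\textbf{Analyses and obstacles.} For privacy, composing the ideal $\epsilon$-DP exponential mechanism with a sampler that is multiplicatively $(1+O(\epsilon))$ close to it gives $(1+O(\epsilon))\epsilon$-DP, and rescaling $\alpha$ by the appropriate constant restores exactly $\epsilon$-DP. For utility, the multiplicative closeness of the realized output distribution to $\mathcal{P}$ implies the same excess-risk bound as in Theorem~\ref{thm:abcd2}, since $\E[\risk(\privtheta)]$ changes by at most a $1+O(\epsilon)$ factor and is absorbed in the big-$O$. For running time, one must bound two quantities: (a) the acceptance probability $\tilde{\mathcal{P}}(\C)$, which is $\Omega(1)$ because $\tilde{\empL}$ grows at rate $nL$ outside $\C$ so the Lipschitz extension penalty $\alpha nL = \Theta(n/\ltwo{\C})$ controls the tail; and (b) the mixing time of the grid walk, which by \cite{applegate1991sampling} is polynomial in $p$, $\log(1/\gamma)$, the grid diameter in units of $h$, and the Lipschitz constant of $\alpha\tilde{\empL}$, yielding the claimed bound. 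The \emph{main obstacle} is the joint calibration of the grid spacing $h$, the walk length, and the box size so that all three requirements — multiplicative closeness of the discretized vs.\ continuous density, multiplicative closeness of the walk vs.\ its stationary distribution, and constant acceptance probability under rejection — are satisfied simultaneously while keeping the total running time polynomial; handling the non-isotropic case adds a preconditioning layer to prevent the bounding box from blowing up in diameter.
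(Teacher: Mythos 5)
Your architecture matches the paper's at a high level: enclose $\C$ in a cube, take the convex Lipschitz extension of the loss, run the Applegate--Kannan grid walk to get multiplicative convergence, and use rejection to land back in $\C$. But there is a genuine gap at the step you label (a). You claim the acceptance probability is $\Omega(1)$ ``because $\tilde{\empL}$ grows at rate $nL$ outside $\C$.'' This is where the construction actually lives, and the claim fails. First, the extension $\tilde{\empL}(\theta)=\inf_{\theta'\in\C}[\empL(\theta';\D)+nL\ltwo{\theta-\theta'}]$ grows at rate $nL$ only relative to the \emph{global minimum} of $\empL$ over $\C$; along a ray leaving $\C$ through a boundary point $\theta_0$ it can \emph{decrease} at rate up to $nL$ relative to the boundary value $\empL(\theta_0;\D)$, and the exterior mass must be compared cone-by-cone against the interior mass through the same $\theta_0$. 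Second, and more fundamentally, even where the scaled density does decay at rate $\alpha nL=\Theta(\eps n/\ltwo{\C})$ per unit distance, it has to beat the $r^{p-1}$ Jacobian of the radial volume element outside $\C$; this requires a radial decay coefficient of order $\eps n + p$ (in Minkowski-norm units), not just $\eps n$. When $p\gg \eps n$, the Lipschitz extension alone leaves exterior mass that can be exponentially large compared to the interior mass, and your rejection loop almost never terminates. The paper fixes exactly this by multiplying the extended density by a gauge penalty $e^{-\bar{\psi}_{\alpha}(\theta)}$, where $\bar{\psi}_{\alpha}(\theta)=\alpha\max\{0,\psi(\theta)-1\}$ is built from the Minkowski norm of $\C$ with $\alpha=\Theta(\eps n + p)$; the differential-cone computation comparing $\Ico$ to $\Ici$ shows this coefficient is what forces acceptance probability at least $1/2$. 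This penalty term is the central new idea of the construction and cannot be omitted.

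Two smaller issues. For privacy you need relative $L_\infty$ (multiplicative) convergence of the walk, and the Applegate--Kannan analysis supplies a conductance bound rather than this directly; the paper combines the conductance bound with the evolving-sets mixing bound of Morris--Peres together with a lower bound on the minimum stationary probability (which again uses Lipschitzness) to get the stated $L_\infty$ mixing time --- your step four treats this as given. And an unbounded reject-and-repeat loop yields only an expected running time and no worst-case guarantee; the paper caps the number of restarts at $m=O(\eps n+p\log\ltwo{\C}+\log(1/\teps))$, falls back to a uniform sample from the unit ball inscribed in $\C$ if all restarts fail, and verifies that this fallback perturbs the output distribution by only an $e^{O(\teps)}$ multiplicative factor, which is needed to keep the final guarantee pure $\eps$-differential privacy.
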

In fact, the running time of our algorithm depends on $\linf{\C}$ rather than $\ltwo{\C}$. Namely, all the $\ltwo{\C}$ terms in the running time can be replaced with $\linf{\C}$, however, we chose to write it in this less conservative way since all the bounds in this paper are expressed in terms of $\ltwo{\C}$. 

Before describing our construction, we first introduce some useful notation and discuss some preliminaries. 

For any two probability measures $\mu, \nu$ defined with respect to the same sample space $\Q\subseteq\re^p$, the relative (multiplicative) distance between $\mu$ and $\nu$, denoted as $\dist(\mu,\nu)$ is defined as
$$\dist(\mu, \nu)=\sup\limits_{q\in\Q}\left\vert\log\frac{d\mu(q)}{d\nu(q)}\right\vert.$$
where $\frac{d\mu(q)}{d\nu(q)}$ (resp., $\frac{d\nu(q)}{d\mu(q)}$) denotes the ratio of the two measures (more precisely, the Radon-Nikodym derivative).

\mypar{Assumptions} We assume that we can efficiently test whether a given point $\theta\in\re^{p}$ lies in $\C$ using a membership oracle. We also assume that we can efficienly optimize an efficiently computable convex function over a convex set. To do this, it suffices to have a projection oracle. We do not take into account the extra polynomial factor in the running time which is required to perform such operations since this factor is highly dependent on the specific structure of the set $\C$.


\mypar{When $\C$ is not isotropic} In Theorem~\ref{thm:eff-samp-guarantees} and in the rest of this section, we assume that $\C$ is in isotropic position. In particular, we assume that $\B\subseteq\C\subseteq p\B$. However, if the convex set is not in isotropic position, fortunately, we know of efficient algorithms for placing it in isotropic position (for example, the algorithm of \cite{Lov-Vemp03}). In such case, the first step of our algorithm would be to transform $\C$ to an isotropic position (and apply the corresponding transformation to the loss function). This step takes time polynomial in $p$ with additional polylog factor in $\frac{1}{r}$ where $r>0$ is the diameter of the largest ball we can fit inside $\C$ (See \cite{Lov-Vemp03} and \cite{flaxman2005online}). Specifically, if $r=\Omega(1)$, then our set $\C$ would be already in isotropic position. Then, we sample efficiently from the transformed set. Finally, we apply the inverse transformation to the output sample to obtain a sample from the desired distribution over $\C$ in its original position. In the worst case, the isotropic transformation can amplify the diameter of $\C$ by a factor of $p$. Putting all this together, the running time in Theorem~\ref{thm:eff-samp-guarantees} above will pick up an extra factor of $O\left(\max\left(p^3,\mbox{polylog}\left(\frac{1}{r}\right)\right)\right)$ in the worst case if $\C$ is not in isotropic position.

\subsection{Our construction}

Let $\tau$ denote the $L_{\infty}$ diameter of $\C$. The Minkowski's norm of $\theta\in\re^p$ with respect to $\C$, denoted as $\psi(\theta)$, is defined as $\psi(\theta)=\inf\{r>0: \theta\in r\C\}$. We define $\bar{\psi}_{\alpha}(\theta)\triangleq\alpha\cdot\max\{0, \psi(\theta)-1\}$ for $\alpha>0$. Note that $\bar{\psi}_{\alpha}(\theta)>0$ if and only if $\theta\notin\C$. Moreover, it is not hard to verify that $\bar{\psi}_{\alpha}$ is $\alpha$-Lipschitz.

We use the grid-walk algorithm of \cite{applegate1991sampling} for sampling from a logconcave distribution defined over a \emph{cube} as a building block. Our construction is described as follows: 

\begin{enumerate}

\item Enclose the set $\C$ with a cube $A$ with edges of length $\tau$. 

\item Obtain a convex Lipschitz extension $\bar{\empL}(. ; \D)$ of the loss function $\empL(. ;\D)$ over $A$. This can be done efficiently using a projection oracle. 

\item Define $F(\theta)\triangleq e^{-\frac{\eps}{6L\ltwo{\C}}\bar{\empL}(\theta; \D)-\bar{\psi}_{\alpha}(\theta)},~\theta\in A$, for a specific choice of $\alpha=O(\frac{\eps n}{\ltwo{C}})$ (See Section~\ref{sec:implement-eff-exp} for details). 

\item Run the grid-walk algorithm of \cite{applegate1991sampling} with $F$ as the input weight function and $A$ as the input cube, and output a sample $\theta$ whose distribution is close, with respect to $\dist$, to the distribution induced by $F$ on $A$ which is given by $\frac{F(\theta)}{\int\limits_{v\in A}F(v)dv},~\theta\in A$. 
\end{enumerate}

Now, note that what we have so far is an efficient procedure (let's denote it by $\A_{\sf cube-samp}$) that outputs a sample from a distribution over $A$  which close, with respect to $\dist$, to the continuous distribution $\frac{F(u)}{\int\limits_{v\in A}F(v)dv},~u\in A$. We then argue that due to the choices made for the values of the parameters above, $\A_{\sf cube-samp}$ outputs a sample in $\C$ with probability at least $\frac{1}{2}$. That is, the algorithm succeeds to output a sample from a distribution close to the right distribution on $\C$ with probability at least $1/2$. Hence, we can amplify the probability of success by repeating $\A_{\sf cube-samp}$ sufficiently many times where fresh random coins are used by $\A_{\sf cube-samp}$ in every time (specifically, $O(n)$ iterations would suffice). If $\A_{\sf cube-samp}$ returns a sample $\theta\in\C$ in one of those iterations, then our algorithm terminates outputting $\theta$. Otherwise, it outputs a uniformly random sample $\theta^\perp$ from the unit ball $\B$ (Note that $\B\subseteq\C$ since $\C$ is in isotropic position). We finally show that this termination condition can only change the distribution of the output sample by a constant factor sufficiently close to $1$. Hence, we obtain our efficient algorithm referred to in Theorem~\ref{thm:eff-samp-guarantees}.

\section{Localization and Optimal Private Algorithms for Strongly Convex Loss}\label{sec:out-pert}

It is unclear how to get a direct variant of Algorithm \ref{Alg:GenSamp} in Section \ref{sec:privConvexUpper} for Lipschitz and strongly convex losses that can achieve optimal excess risk guarantees. The issue in extending Algorithm \ref{Alg:GenSamp} directly is that the convex set $\C$ over which the exponential mechanism is defined is ``too large'' to provide tight guarantees. 

We show a generic $\epsilon$-differentially private algorithm  for minimizing Lipschitz strongly convex loss functions based on a combination of a simple pre-processing step (called the \emph{localization step}) and any generic $\epsilon$-differentially private algorithm for Lipschitz convex loss functions. We carry out the localization step using a simple output perturbation algorithm which ensures that the convex set over which the $\epsilon$-differentially private algorithm (in the second step) is run has diameter $\tilde O(p/n)$. 

Next, we instantiate the generic $\epsilon$-differentially private algorithm in the second step with our efficient exponential mechanism of Section\ref{sec:lipschitzConvex} (Algorithm \ref{Alg:GenSamp}) to obtain an algorithm with optimal excess risk bound (Theorem \ref{thm:utility_eps_st_cnvx_vol_samp}).

\mypar{Note} The localization technique is not specific to pure $\epsilon$-differential privacy, and extends naturally to $(\epsilon,\delta)$ case. Although it is not relevant in our current context, since we already have gradient descent based algorithm which achieves optimal excess risk bound. We defer the details for the $(\epsilon,\delta)$ case to Appendix \ref{sec:localization-eps-delta}.



\mypar{Details of the generic algorithm} We first give a simple algorithm that carries out the desired localization step. The crux of the algorithm is the same as to that of the output perturbation algorithm of \citep{CM08,CMS11}. The high-level idea is to first compute $\theta^*=\arg\min\limits_{\theta\in\C}\empL(\theta;\D)$ and add noise according to the sensitivity of $\theta^*$. The details of the algorithm are given in Algorithm \ref{Alg:OutPert}.

\begin{algorithm}[htb]
	\caption{$\A^{\epsilon}_{\sf out-pert}$: Output Perturbation for Strongly Convex Loss}
	\begin{algorithmic}[1]
		\REQUIRE data set of size $n$: $\D$, loss function $\ell$, strong convexity parameter $\Delta$, privacy parameter $\epsilon$, convex set $\C$, and radius parameter $\zeta<1$.
        \STATE $\empL(\theta;\D)=\sum\limits_{i=1}^n\ell(\theta;d_i)$.
	\STATE Find $\theta^*=\arg\min\limits_{\theta\in\C}\empL(\theta; \D)$.
        \STATE $\theta_0=\Pi_\C\left(\theta^*+b\right)$, where $b$ is random noise vector with density $\frac{1}{\alpha}e^{-\frac{n\Delta\epsilon}{2L}\ltwo{b}}$ (where $\alpha$ is a normalizing constant) and $\Pi_\C$ is the projection on to the convex set $\C$.
        \STATE Output $\C_0=\{\theta\in\C: \ltwo{\theta-\theta_0}\leq \zeta\frac{2Lp}{\Delta\epsilon n}\}$.
	\end{algorithmic}
	\label{Alg:OutPert}
\end{algorithm}

Having Algorithm~\ref{Alg:OutPert} in hand, we now give a generic $\epsilon$-differentially private algorithm for minimizing $\empL$ over $\C$. Let $\A^{\epsilon}_{\sf gen-Lip}$ denote any generic $\epsilon$-differentially private algorithm for optimizing $\empL$ over some arbitrary convex set $\tilde{\C}\subseteq\C$.  Algorithm~\ref{Alg:GenSamp} from Section~\ref{sec:lipschitzConvex} or its efficient version Algorithm~\ref{Alg:eff-exp-samp}(See Theorem~\ref{thm:eff-samp-guarantees} and Section~\ref{sec:implement-eff-exp} for details) is an example of $\A^{\epsilon}_{\sf gen-Lip}$. The algorithm we present here (Algorithm~\ref{Alg:eps-gen-st-cnvx} below) makes a black-box call in its first step to $\A^{\frac{\epsilon}{2}}_{\sf out-pert}$ (Algorithm~\ref{Alg:OutPert} shown above), then, in the second step, it feeds the output of $\A^{\frac{\epsilon}{2}}_{\sf out-pert}$ into $\A^{\frac{\epsilon}{2}}_{\sf gen-Lip}$ and ouptut.

\begin{algorithm}[htb]
	\caption{Output-perturbation-based Generic Algorithm}
	\begin{algorithmic}[1]
		\REQUIRE data set of size $n$: $\D$, loss function $\ell$, strong convexity parameter $\Delta$, privacy parameter $\epsilon$, and convex set $\C$.
        \STATE Run $\A^{\frac{\epsilon}{2}}_{\sf out-pert}$ (Algorithm~\ref{Alg:OutPert}) with input privacy parameter $\epsilon/2$, radius parameter $\zeta=3\log{(n)}$ and output $\C_0$.
        \STATE Run $\A^{\frac{\epsilon}{2}}_{\sf gen-Lip}$ on inputs $n, \D, \ell,$ privacy parameter $\epsilon/2$, and convex set $\C_0$, and output $\privtheta$.
	\end{algorithmic}
	\label{Alg:eps-gen-st-cnvx}
\end{algorithm}

\begin{thm}[Privacy guarantee]
Algorithm \ref{Alg:eps-gen-st-cnvx} is $\epsilon$-differentially private.
\label{thm:priv_eps_Gen_st_cnvx}
\end{thm}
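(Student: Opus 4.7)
The plan is to decompose Algorithm~\ref{Alg:eps-gen-st-cnvx} into its two constituent mechanisms, show that each is $(\eps/2)$-differentially private in the appropriate (possibly adaptive) sense, and invoke basic composition of pure differential privacy. Since we are composing only two mechanisms, strong composition (Lemma~\ref{thm:strongComposition}) is not needed here; the simple ``$\eps_1+\eps_2$'' rule suffices.

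For Step~1, I would first prove that $\A^{\eps/2}_{\sf out-pert}$ is $(\eps/2)$-differentially private. The key ingredient is the $\ell_2$-sensitivity bound on the exact minimizer of an $L$-Lipschitz, $n\Delta$-strongly convex loss established by \citet{CMS11}: for every pair of neighboring datasets $\D,\D'$, $\ltwo{\theta^*(\D)-\theta^*(\D')}\leq \frac{2L}{n\Delta}$, which follows by applying strong convexity to the first-order optimality condition of $\empL(\cdot;\D)$ and $\empL(\cdot;\D')$ at their respective minimizers. Plugging privacy parameter $\eps/2$ into the noise density gives $b$ with density proportional to $\exp(-\frac{n\Delta\eps}{4L}\ltwo{b})$, so the multivariate Laplace mechanism calculation yields a pointwise density ratio bounded by $\exp(\frac{n\Delta\eps}{4L}\cdot\frac{2L}{n\Delta})=e^{\eps/2}$. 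The projection onto $\C$ and the passage from $\theta_0$ to the ball $\C_0=\{\theta\in\C:\ltwo{\theta-\theta_0}\leq \zeta\frac{2Lp}{\Delta\eps n}\}$ are both data-independent post-processings of $\theta^*+b$, so $\C_0$ is $(\eps/2)$-differentially private.

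For Step~2, by hypothesis $\A^{\eps/2}_{\sf gen-Lip}$ is $(\eps/2)$-differentially private whenever it is run on any fixed convex set $\tilde\C\subseteq\C$ supplied as a parameter. The subtlety is that here $\C_0$ is itself a (data-dependent) random output of Step~1, so I would frame Algorithm~\ref{Alg:eps-gen-st-cnvx} as an adaptive composition: the first mechanism releases $\C_0$, and the second mechanism, whose parameter is chosen as a function of $\C_0$, is $(\eps/2)$-DP as a function of the dataset for every fixed value of that parameter. Basic (sequential) composition for pure differential privacy then implies that the joint output $(\C_0,\privtheta)$ is $\eps$-differentially private, and the final release of $\privtheta$ alone is $\eps$-DP by post-processing.

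The main (mild) obstacle is simply being careful with the adaptive composition step, since the same dataset $\D$ is used both to select $\C_0$ in Step~1 and to run $\A^{\eps/2}_{\sf gen-Lip}$ in Step~2; this is handled cleanly by the standard formulation of adaptive composition in which each successive mechanism may depend on the outputs of the previous ones but must be $(\eps_i)$-DP as a function of the data for every fixed value of its parameters.
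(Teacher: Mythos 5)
Your proof is correct and follows essentially the same route as the paper's: both reduce to showing that each of the two constituent mechanisms is $(\eps/2)$-differentially private (invoking \citet{CMS11} for the output-perturbation step and the hypothesis on $\A_{\sf gen-Lip}$ for the second step) and then apply basic composition. You supply more detail than the paper does, in particular the sensitivity bound $\ltwo{\theta^*(\D)-\theta^*(\D')}\leq 2L/(n\Delta)$ and the care about adaptive composition, but the underlying argument is the same.
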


\begin{proof}
The privacy guarantee follows directly from the composition theorem together with the fact that $\A^{\frac{\epsilon}{2}}_{\sf out-pert}$ is $\frac{\epsilon}{2}$-differentially private (see \cite{CMS11}) and that $\A^{\frac{\epsilon}{2}}_{\sf gen-Lip}$ is $\frac{\epsilon}{2}$-differentially private by assumption.
 \end{proof}

In the following theorem, we provide a generic expression for the excess risk of Algorithm~\ref{Alg:eps-gen-st-cnvx} in terms of the expected excess risk of any given algorithm $\A_{\sf gen-Lip}$.

\begin{lem}[Generic utility guarantee]
Let $\htheta$ denote the output of Algorithm $\A^{\epsilon}_{\sf gen-Lip}$ on inputs $n, \D, \ell, \epsilon, \tilde{\C}$ (for an arbitrary convex set $\tilde{\C}\subseteq\C$). Let $\nptheta$ denote the minimizer of $\empL(. ; \D)$ over $\tilde{\C}$. If
$$\E\left[\empL(\htheta; \D)-\empL(\nptheta; \D)\right]\leq F\left(p, n, \epsilon, L,\ltwo{\tilde{\C}}\right)$$
for some function $F$, then the output $\privtheta$ of Algorithm~\ref{Alg:eps-gen-st-cnvx} satisfies
$$\E\left[\empL(\privtheta; \D)-\empL(\theta^*; \D)\right]= O\left(F\left(p, n, \epsilon, L, O\left(\frac{Lp\log(n)}{\Delta\epsilon n}\right)\right)\right),$$
where $\theta^*=\arg\min\limits_{\theta\in\C}\eL(\theta; \D)$.
\label{thm:utility_eps_Gen_st_cnvx}
\end{lem}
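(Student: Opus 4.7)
The plan is to decompose the expected excess risk according to whether the true minimizer $\theta^*$ lands inside the localized set $\C_0$ produced by $\A^{\epsilon/2}_{\sf out-pert}$. First, I would fix the randomness of Step 1, which determines the noise vector $b$, the perturbed estimate $\theta_0 = \Pi_\C(\theta^* + b)$, and thus the ball $\C_0 = \{\theta \in \C : \ltwo{\theta - \theta_0} \le \zeta\cdot\tfrac{2Lp}{\Delta(\epsilon/2)n}\}$ of diameter $O\!\left(\frac{Lp\log n}{\Delta \epsilon n}\right)$ (using $\zeta = 3\log n$ and the halved privacy parameter $\epsilon/2$).

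The first key step is to show that $\theta^* \in \C_0$ with high probability. Since $\theta^* \in \C$, the projection onto $\C$ contracts, so $\ltwo{\theta_0 - \theta^*} \le \ltwo{b}$. The vector $b$ has density $\propto e^{-(n\Delta\epsilon/4L)\ltwo{b}}$, so $\ltwo{b}$ follows a Gamma$(p, c)$ distribution with scale $1/c = 4L/(n\Delta\epsilon)$, in particular with mean $4Lp/(n\Delta\epsilon)$. A standard Chernoff-style tail bound on the Gamma distribution gives $\Pr[\ltwo{b} > \zeta \cdot 4Lp/(n\Delta\epsilon)] \le (3e\log n/n^3)^p \le 1/n^2$ for $\zeta = 3\log n$ (and $n$ larger than a constant). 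Hence $\theta^* \in \C_0$ except with probability at most $1/n^2$.

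The second key step is to observe that whenever $\theta^* \in \C_0 \subseteq \C$, the minimizer of $\empL(\cdot;\D)$ over $\C_0$ coincides with the global minimizer $\theta^*$. Thus $\nptheta = \theta^*$ in the hypothesized guarantee for $\A^{\epsilon/2}_{\sf gen-Lip}$ applied to the set $\C_0$, and so, conditional on the Step 1 noise realization with $\theta^* \in \C_0$, the inner expectation over the coins of $\A^{\epsilon/2}_{\sf gen-Lip}$ is at most $F\!\left(p, n, \epsilon/2, L, \ltwo{\C_0}\right) = F\!\left(p, n, \epsilon/2, L, O\!\left(\tfrac{Lp\log n}{\Delta\epsilon n}\right)\right)$. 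Since this bound is uniform in the conditioning event, it survives integration over $b$ restricted to the success event.

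For the failure event $\theta^* \notin \C_0$, I would bound the excess risk trivially by Lipschitzness: for every $\theta \in \C$, $\empL(\theta;\D) - \empL(\theta^*;\D) \le nL\ltwo{\C}$. Multiplying by the failure probability $\le 1/n^2$ contributes only $O(L\ltwo{\C}/n)$, which is dominated by the utility bound $F(\cdot)$ in every regime of interest (note $F$ scales at least linearly in $\ltwo{\C_0}$ for all natural choices). Combining the two contributions yields the claimed bound, with the absolute constant absorbed by the $O(\cdot)$ inside $F$'s last argument (and by replacing $\epsilon/2$ with $\epsilon$, which only affects constants). The main technical obstacle is the Gamma tail bound for $\ltwo{b}$; everything else is a clean decomposition plus the observation that localization does not move the minimizer once it is captured.
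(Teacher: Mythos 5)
Your proposal is correct and follows essentially the same route as the paper's proof: condition on the high-probability event that $\theta^*\in\C_0$ (via the Gamma tail bound on $\ltwo{b}$ with $\zeta=3\log n$), apply the hypothesized guarantee with $\tilde\C=\C_0$ and $\nptheta=\theta^*$, and absorb the failure event by the trivial bound $nL\ltwo{\C}$ times the small failure probability. The only cosmetic difference is in how the failure-event term is argued to be dominated — the paper invokes the $\Omega(1/n)$ lower bound on $F$ from Section~\ref{sec:lipschitzConvexLowerStrong}, while you appeal to the natural scaling of $F$ in $\ltwo{\C_0}$ — but both yield the same conclusion.
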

\begin{proof}
The proof follows from the fact that, in Algorithm $\A^{\frac{\epsilon}{2}}_{\sf out-pert}$, the norm of the noise vector $\ltwo{b}$ is distributed according to Gamma distribution $\Gamma(p, \frac{4L}{\Delta\epsilon n})$ and hence satisfies
$$\Pr\left(\ltwo{b}\leq \zeta\frac{4Lp}{\Delta\epsilon n}\right)\geq 1- e^{-\zeta}$$
(see, for example, \cite{CMS11}). Now, set $\zeta=3\log(n)$. Hence, with probability $1-\frac{1}{n^3}$,
$\C_0$ (the output of $\A^{\frac{\epsilon}{2}}_{\sf out-pert}$) contains $\theta^*$. Hence, by setting $\tilde{\C}$ in the statement of the lemma to $\C_0$ (and noting that $\ltwo{\C_0}=O\left(\frac{Lp\log(n)}{\Delta\epsilon n}\right)$), then \emph{conditioned on} the event that $\C_0$ contains $\theta^*$, we have $\nptheta=\theta^*$ and hence
$$E\left[\empL(\privtheta; \D)-\empL(\theta^*; \D)~\vert \theta^*\in\C_0\right]\leq F\left(p, n, \epsilon, L, O\left(\frac{Lp\log(n)}{\Delta\epsilon n}\right)\right)$$
Thus,
$$E\left[\empL(\privtheta; \D)-\empL(\theta^*; \D)\right]\leq F\left(p, n, \epsilon, L, O\left(\frac{Lp\log(n)}{\Delta\epsilon n}\right)\right)(1-\frac{1}{n^3}) + nL\ltwo{\C}\frac{1}{n^3}$$
Note that the second term on the right-hand side above becomes $O(\frac{1}{n^2})$. From our lower bound (Section~\ref{sec:lipschitzConvexLowerStrong} below), $F(. , n, . , . , .)$ must be at least $\Omega(\frac{1}{n})$. Hence, we have
$$E\left[\empL(\privtheta; \D)-\empL(\theta^*; \D)\right]= O\left(F\left(p, n, \epsilon, L, O\left(\frac{Lp\log(n)}{\Delta\epsilon n}\right)\right)\right)$$
which completes the proof of the theorem.
\end{proof}

\mypar{Instantiation of Algorithm $\A^{\frac{\epsilon}{2}}_{\sf gen-Lip}$ with the exponential sampling algorithm}  Next, we give our optimal $\epsilon$-differentially private algorithm for Lipschitz strongly convex loss functions. To do this, we instantiate the generic Algorithm $\A_{\sf gen-Lip}$ in Algorithm \ref{Alg:eps-gen-st-cnvx} with our exponential sampling algorithm from Section~\ref{sec:lipschitzConvex} (Algorithm \ref{Alg:GenSamp}), or its efficient version Algorithm~$\A_{\sf eff-exp-samp}$ (See Section~\ref{sec:eff-pure-eps-alg}) to obtain the optimal excess risk bound. We formally state the bound in Theorem \ref{thm:utility_eps_st_cnvx_vol_samp}) below. The proof of Theorem \ref{thm:utility_eps_st_cnvx_vol_samp} follows from Theorem \ref{thm:abcd2} and Lemma~\ref{thm:utility_eps_Gen_st_cnvx} above.

\begin{thm}[Utility guarantee with Algorithm~\ref{Alg:eff-exp-samp} as an instantiation of $\A_{\sf gen-Lip}$]
Suppose we replace  $\A^{\frac{\epsilon}{2}}_{\sf gen-Lip}$ in Algorithm~\ref{Alg:eps-gen-st-cnvx} with Algorithm~\ref{Alg:GenSamp} (Section~\ref{sec:lipschitzConvex}), or its efficient version Algorithm~\ref{Alg:eff-exp-samp} (See Theorem~\ref{thm:eff-samp-guarantees} and Section~\ref{sec:implement-eff-exp} for details). Then, the output $\privtheta$ satisfies
$$E\left[\empL(\privtheta; \D)-\empL(\theta^*; \D)\right]= O\left(\frac{p^2L^2}{n\Delta\epsilon^2}\log(n)\right)$$
where $\theta^*=\arg\min\limits_{\theta\in\C}\eL(\theta; \D)$.
\label{thm:utility_eps_st_cnvx_vol_samp}
\end{thm}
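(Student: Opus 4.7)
The plan is to combine the two results already established in this section: Theorem \ref{thm:abcd2}, which bounds the expected excess risk of the exponential-sampling mechanism (Algorithm \ref{Alg:GenSamp}) on an arbitrary convex domain, and the generic localization lemma (Lemma \ref{thm:utility_eps_Gen_st_cnvx}), which reduces strongly convex optimization on $\C$ to Lipschitz convex optimization on the much smaller set $\C_0$ produced by the output-perturbation preprocessing step.

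First I would identify the function $F$ in the hypothesis of Lemma \ref{thm:utility_eps_Gen_st_cnvx} for our choice of $\A_{\sf gen-Lip}$. Applying Theorem \ref{thm:abcd2} to Algorithm \ref{Alg:GenSamp} run with privacy parameter $\epsilon$ on an arbitrary convex subset $\tilde\C\subseteq\C$ gives
$$\E\left[\empL(\htheta;\D) - \empL(\nptheta;\D)\right] = O\!\left(\frac{pL\,\ltwo{\tilde\C}}{\epsilon}\right),$$
so one can take $F(p,n,\epsilon,L,\ltwo{\tilde\C}) = O(pL\,\ltwo{\tilde\C}/\epsilon)$. The same bound $F$ applies when $\A_{\sf gen-Lip}$ is instantiated as the efficient sampler of Algorithm \ref{Alg:eff-exp-samp}, since Theorem \ref{thm:eff-samp-guarantees} preserves the excess-risk guarantee of Theorem \ref{thm:abcd2} up to constants.

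Next I would plug this $F$ into the conclusion of Lemma \ref{thm:utility_eps_Gen_st_cnvx}. In Algorithm \ref{Alg:eps-gen-st-cnvx} the base sampler is invoked with privacy parameter $\epsilon/2$ (the other $\epsilon/2$ being spent on the localization step $\A^{\epsilon/2}_{\sf out-pert}$), and the lemma guarantees that the localized set $\C_0$ satisfies $\ltwo{\C_0} = O\!\left(Lp\log(n)/(\Delta\epsilon n)\right)$. Substituting $\ltwo{\tilde\C}$ by this diameter bound into $F$ (and absorbing the factor-of-two privacy split into constants) yields
$$\E\left[\empL(\privtheta;\D) - \empL(\theta^*;\D)\right] = O\!\left(\frac{pL}{\epsilon}\cdot\frac{Lp\log(n)}{\Delta\epsilon n}\right) = O\!\left(\frac{p^{2}L^{2}\log(n)}{n\Delta\epsilon^{2}}\right),$$
which is exactly the claimed bound.

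Since both ingredients are already proved and the privacy composition is handled by Theorem \ref{thm:priv_eps_Gen_st_cnvx}, there is no real obstacle: the argument is a one-line composition, and the only bookkeeping point is that the halving of the privacy budget between the localization and sampling steps, as well as the tail-event contribution (bounded by $nL\ltwo{\C}/n^{3} = O(1/n^{2})$) already absorbed inside Lemma \ref{thm:utility_eps_Gen_st_cnvx}, both affect only constant factors.
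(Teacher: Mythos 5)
Your proposal is correct and is essentially identical to the paper's own argument, which likewise obtains the bound by instantiating the function $F$ in Lemma~\ref{thm:utility_eps_Gen_st_cnvx} with the $O(pL\ltwo{\tilde\C}/\epsilon)$ guarantee of Theorem~\ref{thm:abcd2} and substituting $\ltwo{\C_0}=O(Lp\log(n)/(\Delta\epsilon n))$. The bookkeeping points you mention (the $\epsilon/2$ privacy split and the tail event) are indeed already absorbed into the lemma, so nothing further is needed.
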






\section{Lower Bounds on Excess Risk}
\label{sec:privConvexLower}

In this section, we complete the picture by deriving lower bounds on the excess risk caused by differentially private algorithm for risk minimization.  As before, for a dataset $\D=\{d_1, \ldots, d_n\}$, our decomposable loss  function is expressed as $\eL(\theta; \D)=\sum\limits_{i=1}^n\ell(\theta; d_i),~\theta\in\C$ for some convex set $\C\subset\re^p$. In Section~\ref{sec:lipschitzConvexLower}, we consider the case of convex Lipschitz loss functions, whereas in Section~\ref{sec:lipschitzConvexLowerStrong}, we consider the case of strongly convex and Lipschitz loss functions.

Before we state and prove our lower bounds, we first give the following useful lemma which gives lower bounds on the $L_2$-error incurred by $\eps$ and $(\eps, \delta)$-differentially private algorithms for estimating the $1$-way marginals of datasets over $\hypcnz^p$. This lemma is based on the results of \cite{HT09} and \cite{BUV13}, however, for the sake of completeness, we give a detailed proof of this lemma in Appendix~\ref{app:proof-lower-bounds-lem}. 

\begin{lem}[Lower bounds for 1-way marginals]
\hspace{7cm} 
\begin{enumerate}
\item \mypar{$\boldsymbol\epsilon$-differential private algorithms} Let $n, p\in\mathbb{N}$ and $\epsilon > 0$. There is a number $M=\Omega\left(\min\left( n, p/\epsilon\right)\right)$ such that for every $\eps$-differentially private algorithm $\A$, there is a dataset $\D=\{d_1, \ldots, d_n\} \subseteq \hypcnz^p$ with $\ltwo{\sum_{i=1}^nd_i}\in [M-1, M+1]$ such that, with probability at least $1/2$ (taken over the algorithm random coins), we have 
$$ \ltwo{\A(\D)-q(\D)}=\Omega\left(\min\left(1, \frac{p}{\epsilon n}\right)\right)$$
where $q(\D)=\frac{1}{n}\sum_{i=1}^{n}d_i$. 

\item \mypar{$(\boldsymbol\epsilon,\boldsymbol\delta)$-differential private algorithms}  Let $n, p\in\mathbb{N}$, $\epsilon>0$, and $\delta=o(\frac{1}{n})$. There is a number $M=\Omega\left(\min\left( n, \sqrt{p}/\epsilon\right)\right)$ such that for every $(\eps,\delta)$-differentially private algorithm $\A$, there is a dataset $\D=\{d_1, \ldots, d_n\} \subseteq \hypcnz^p$ with ${\small\ltwo{\sum_{i=1}^nd_i}\in[M-1, M+1]}$ such that, with probability at least $1/3$ (taken over the algorithm random coins), we have 
$$l \ltwo{\A(\D)-q(\D)}=\Omega\left(\min\left(1, \frac{\sqrt{p}}{\epsilon n}\right)\right)$$
where $q(\D)=\frac{1}{n}\sum_{i=1}^{n}d_i$.
\end{enumerate}
\label{lem:1-way-marg-lower}
\end{lem}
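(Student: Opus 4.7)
My plan is to prove the two parts by separate, standard techniques (packing for pure DP, fingerprinting codes for approximate DP), both after a common rescaling step from $\{-1,+1\}^{p}$ to $\hypcnz^p$.

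For part~1 ($\epsilon$-DP), I would follow the packing/volume approach of Hardt--Talwar. Set $m = \min\!\big(n,\, c\,p/\epsilon\big)$ for a small absolute constant $c$. By a probabilistic (or Gilbert--Varshamov) argument, pick $W \subseteq \{-1,+1\}^p$ with $|W| \geq 2^{p/8}$ such that any $w,w' \in W$ satisfy $\ltwo{w-w'} \geq \sqrt{p/4}$. For each $w \in W$, define the dataset $\D_w \subseteq \hypcnz^p$ to consist of $m$ copies of $w/\sqrt{p}$ together with $n-m$ copies of a fixed reference point $v_0 \in \hypcnz^p$ chosen so that $\ltwo{\sum_i d_i}$ falls in the target window $[M-1,M+1]$. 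Then the means satisfy $\ltwo{q(\D_w)-q(\D_{w'})} \geq \frac{m}{2n} = \Omega\big(\min(1, p/(\epsilon n))\big)$, while $\D_w$ and $\D_{w'}$ differ in at most $m$ entries.

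I would then apply the standard packing lemma: if some $\epsilon$-DP $\A$ achieved $L_2$ error less than half this separation with probability $\geq 1/2$ on every $\D_w$, then putting $B_w$ to be disjoint balls of that radius around $q(\D_w)$, group privacy gives, for any fixed $w_0 \in W$,
\[
1 \;\geq\; \sum_{w \in W} \Pr[\A(\D_{w_0}) \in B_w] \;\geq\; |W| \cdot \tfrac{1}{2} \, e^{-\epsilon m}.
\]
Choosing $c$ small enough that $\epsilon m \leq p/16$ makes this impossible since $|W| \geq 2^{p/8}$. Hence some $\D_w$ forces error $\Omega(\min(1, p/(\epsilon n)))$ with probability $\geq 1/2$.

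For part~2 ($(\epsilon,\delta)$-DP), I would invoke the fingerprinting-codes lower bound of Bun--Ullman--Vadhan, which states that for $\delta = o(1/n)$, any $(\epsilon,\delta)$-DP algorithm estimating the coordinate-wise mean of a dataset in $\{-1,+1\}^p$ of size $n$ incurs expected $L_2$ error $\Omega(\min(1, \sqrt{p}/(\epsilon n)))$ on some dataset. Composing with the rescaling $d \mapsto d/\sqrt{p}$ places the hard instance inside $\hypcnz^p$; since the map is linear and scales both the sample points and the mean by $1/\sqrt{p}$, the error statement transfers with the appropriate normalization. As in part~1, I would enforce the target sum-norm by appending $n - m$ copies of a fixed $v_0 \in \hypcnz^p$ (where now $m$ is the BUV13 sample-size lower bound $\Omega(\sqrt{p}/\epsilon)$) and verifying that the averaged separation becomes $\Omega(\min(1,\sqrt{p}/(\epsilon n)))$.

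The main obstacle I anticipate is the bookkeeping needed to realize the norm condition $\ltwo{\sum_i d_i} \in [M-1, M+1]$ uniformly across the packing: the raw Hardt--Talwar/BUV13 constructions produce datasets whose sum-norms vary with $w$. I would handle this by choosing $v_0$ and, if necessary, permuting a small number of coordinates of $v_0$ per instance so that the sum lies on a fixed sphere; the number of perturbed points is $o(m)$, so the Hamming distance and separation bounds are preserved up to constants. With these pieces in place the lemma follows by averaging: an algorithm that succeeded on \emph{every} dataset in the packing would violate the deduced volume inequality, so some specific $\D$ in the family must exhibit the stated error, completing both parts.
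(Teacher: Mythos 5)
Your high-level strategy matches the paper exactly: a packing argument (via group privacy and a Gilbert--Varshamov code) for the pure-$\epsilon$ case, and the Bun--Ullman--Vadhan fingerprinting-code lower bound for the $(\eps,\delta)$ case, with padding to reduce the sample size $n$ to the hard regime $n^\ast$. The only substantive difference in Part~1 is that you apply the packing bound directly via group privacy on datasets at Hamming distance $m$, whereas the paper first proves the claim for $n\leq n^\ast$ (where datasets differ in all $n$ entries) and then reduces $n>n^\ast$ to that case by a post-processing argument; both are standard and equivalent.

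There is, however, a genuine gap in how you enforce the sum-norm constraint $\ltwo{\sum_i d_i}\in[M-1,M+1]$, and you have correctly identified it as the point of difficulty but your proposed fix does not work as stated. With $m$ copies of $w/\sqrt p$ and $n-m$ copies of a \emph{fixed} $v_0\in\hypcnz^p$, the sum is $\tfrac{m}{\sqrt p}w+(n-m)v_0$, whose norm depends on $\langle w,v_0\rangle$, and this inner product swings by $\Theta(1)$ across the packing $W$; multiplied by $m(n-m)$ it dominates the error tolerance $\pm1$ you need. ``Permuting a small number of coordinates of $v_0$'' does not repair this: permuting coordinates of $v_0$ leaves $v_0$ unchanged when $v_0$ is constant, and flipping $o(m)$ coordinate signs cannot cancel a cross term that scales with $m(n-m)$. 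The paper's fix is cleaner and you should adopt it: instead of $n-m$ copies of one $v_0$, append $\lceil (n-m)/2\rceil$ copies of a fixed $\mathbf c\in\hypcnz^p$ and $\lfloor (n-m)/2\rfloor$ copies of $-\mathbf c$, so that the padding contributes either $\mathbf 0$ or $\mathbf c$ to the sum, and $\ltwo{\sum_i d_i}$ equals $m$ up to an additive $\pm 1$ for \emph{every} $w$ simultaneously, with no instance-specific tuning. (Note also that any two padded datasets still differ in exactly $m$ entries, so the group-privacy factor $e^{-\eps m}$ is unaffected.) In Part~2 you also gloss over one step that the paper has to argue explicitly: you need the BUV13 hard datasets of size $n^\ast$ to themselves have sum-norm $\Omega(n^\ast)$ before padding, which is a property of the fingerprinting-code construction that requires inspection of their proof rather than following from the black-box error statement.
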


%

\subsection{Lower bounds for Lipschitz Convex Functions}
\label{sec:lipschitzConvexLower} 

In this section, we give lower bounds for both $\epsilon$ and $(\epsilon, \delta)$ differentially private algorithms for minimizing any convex Lipschitz loss function $\empL(\theta; \D)$. We consider the following loss function. Define 
\begin{equation}
\ell(\theta; d)= - \langle \theta, d\rangle,~\theta\in\B, ~d\in\hypcnz^p\label{lin-loss}
\end{equation}
For any dataset $\D=\{d_1, \ldots, d_n\}$ with data points drawn from $\hypcnz^p$, and any $\theta\in\B$, define 
\begin{equation}
\empL(\theta; \D)=-\langle\theta, \sum_{i=1}^n d_i\rangle \label{lin-dec-loss}
\end{equation}
Clearly, $\empL$ is linear and, hence, Lipschitz and convex. Note that, whenever $\ltwo{\sum_{i=1}^nd_i}>0$, $\theta^*=\frac{\sum_{i=1}^nd_i}{\ltwo{\sum_{i=1}^nd_i}}$ is the minimizer of $\eL( . ;\D)$ over $\B$. Next, we show lower bounds on the excess risk incurred by any $\epsilon$ and $(\epsilon, \delta)$ differentially private algorithm with output $\privtheta\in\B$.

\begin{thm}[Lower bound for $\epsilon$-differentially private algorithms]
Let $n, p\in\mathbb{N}$ and $\epsilon>0$. For every $\epsilon$-differentially private algorithm (whose output is denoted by $\privtheta$), there is a dataset $\D=\{d_1, \ldots, d_n\} \subseteq \hypcnz^p$ such that, with probability at least $1/2$ (over the algorithm random coins), we must have 
$$\empL(\theta; \D)-\empL(\theta^*; \D)= \Omega\left(\min\left(n, p/\epsilon\right)\right)$$ 
where $\theta^*=\frac{\sum_{i=1}^nd_i}{\ltwo{\sum_{i=1}^nd_i}}$ is the minimizer of $\empL( . ; \D)$ over $\B$ and $\empL$ is given by (\ref{lin-dec-loss}). 
\label{thm:lower-lip-eps}
\end{thm}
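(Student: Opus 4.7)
My plan is to reduce the excess risk lower bound to the $\ell_2$ lower bound for $\eps$-differentially private estimation of $1$-way marginals given by Lemma~\ref{lem:1-way-marg-lower}. Suppose $\A$ is an $\eps$-differentially private algorithm whose output $\privtheta\in\B$ minimizes $\empL(\cdot;\D)=-\ip{\theta}{\sum_i d_i}$. For the instances from the lemma, the minimizer is $\theta^*=\frac{\sum_i d_i}{\ltwo{\sum_i d_i}}$ and the data-independent quantity $M=\Omega(\min(n,p/\eps))$ satisfies $\ltwo{\sum_i d_i}\in[M-1,M+1]$.

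The key observation is a simple geometric one: writing $\privtheta=\lambda\theta^*+v$ with $v\perp\theta^*$ and $\ltwo{\privtheta}\le 1$, the excess risk equals
\[
\empL(\privtheta;\D)-\empL(\theta^*;\D)=\ltwo{\textstyle\sum_i d_i}\bigl(1-\lambda\bigr).
\]
Hence if the excess risk is at most $\alpha$, then $\lambda\ge 1-\alpha/(M-1)$, and since $\ltwo{v}^2\le 1-\lambda^2\le 2(1-\lambda)$,
\[
\ltwo{\privtheta-\theta^*}^2=(1-\lambda)^2+\ltwo{v}^2\le 2(1-\lambda)\le \frac{2\alpha}{M-1}.
\]

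Next, I would use $\privtheta$ as a black box to build an $\eps$-DP estimator of $q(\D)=\frac{1}{n}\sum_i d_i$: output $\widehat q(\D)\triangleq\frac{M}{n}\,\privtheta$. Since this is pure post-processing of $\A$ and $M$ is a fixed number depending only on $n,p,\eps$, the resulting algorithm remains $\eps$-DP. Using $q(\D)=\frac{\ltwo{\sum_i d_i}}{n}\theta^*$ and the triangle inequality,
\[
\ltwo{\widehat q(\D)-q(\D)}\le \frac{M}{n}\ltwo{\privtheta-\theta^*}+\frac{\bigl|M-\ltwo{\sum_i d_i}\bigr|}{n}
\le \frac{\sqrt{2\alpha M}}{n}+\frac{1}{n}.
\]
On the other hand, Lemma~\ref{lem:1-way-marg-lower}(1) exhibits a dataset on which, with probability at least $1/2$, the $\ell_2$-error of any $\eps$-DP estimator of $q(\D)$ is $\Omega(\min(1,p/(\eps n)))$.

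Combining the two bounds, on that hard dataset we must have (w.p.\ at least $1/2$) either $\alpha=\Omega(n)$ (when $p/\eps\ge n$, so $M=\Omega(n)$ and the marginal lower bound is $\Omega(1)$, giving $\sqrt{\alpha M}=\Omega(n)$ and thus $\alpha=\Omega(n^2/M)=\Omega(n)$), or $\alpha=\Omega(p/\eps)$ (when $p/\eps<n$, so $M=\Omega(p/\eps)$ and the marginal lower bound is $\Omega(p/(\eps n))$, giving $\sqrt{\alpha M}=\Omega(p/\eps)$ and thus $\alpha=\Omega((p/\eps)^2/M)=\Omega(p/\eps)$). In both regimes the excess risk is $\Omega(\min(n,p/\eps))$, as claimed. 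The main conceptual step is the geometric inequality $\ltwo{\privtheta-\theta^*}^2\le 2\alpha/\ltwo{\sum_i d_i}$ that converts risk to distance from the minimizer; once that is in hand, the rest is a careful rescaling and application of the marginals lower bound, and I expect no serious obstacle beyond tracking the two parameter regimes.
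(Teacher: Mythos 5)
Your proposal is correct and follows essentially the same route as the paper's proof: the same identity $\empL(\theta;\D)-\empL(\theta^*;\D)=\ltwo{\sum_i d_i}(1-\ip{\theta}{\theta^*})\geq\tfrac12\ltwo{\sum_i d_i}\ltwo{\theta-\theta^*}^2$, the same post-processed estimator $\frac{M}{n}\privtheta$, and the same invocation of Part~1 of Lemma~\ref{lem:1-way-marg-lower}. The only difference is presentational (you track the error $\alpha$ quantitatively in two regimes, while the paper argues by contradiction that $\ltwo{\privtheta-\theta^*}=\Omega(1)$), so there is nothing substantive to add.
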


\begin{proof}
Let $\A$ be an $\eps$-differentially private algorithm for minimizing $\eL$ and let $\privtheta$ denote its output. First, observe that for any $\theta\in\B$ and dataset $\D$, $\empL(\theta; \D)-\empL(\theta^*; \D)=\ltwo{\sum_{i=1}^nd_i}\left(1-\langle\theta, \theta^*\rangle\right)$. Hence, we have $\empL(\theta; \D)-\empL(\theta^*; \D)\geq\frac{1}{2}\ltwo{\sum_{i=1}^nd_i}\ltwo{\theta-\theta^*}^2$. This is due to the fact that $\ltwo{\theta-\theta^*}^2=\ltwo{\theta^*}^2 + \ltwo{\theta}^2- 2\langle \theta, \theta^*\rangle$ and the fact that $\theta^*, \theta \in\B$.

Let $M=\Omega\left(\min\left( n, p/\epsilon\right)\right)$ be as in Part~1 of Lemma~\ref{lem:1-way-marg-lower}. Suppose, for the sake of a contradiction, that for every dataset $\D\subseteq\hypcnz^p$ with $\ltwo{\sum_{i=1}^n d_i}\in [M-1, M+1]$, with probability more than $1/2$, we have $\ltwo{\privtheta-\theta^*}\neq \Omega\left(1\right)$. Let $\tilde\A$ be an $\eps$-differentially private algorithm that first runs $\A$ on the data and then outputs $\frac{M}{n}\privtheta$. Note that this implies that for every dataset $\D\subseteq\hypcnz^p$ with $\ltwo{\sum_{i=1}^n d_i}\in [M-1, M+1]$, with probability more than $1/2$, $\ltwo{\tilde\A(\D)-q(\D)}\neq \Omega\left(\min\left(1, \frac{p}{\eps n}\right)\right)$ which contradicts Part~1 of Lemma~\ref{lem:1-way-marg-lower}. Thus, there must exist a dataset $\D\subseteq\hypcnz^p$ with $\ltwo{\sum_{i=1}^n d_i}=\Omega\left(\min\left(n, p/\eps\right)\right)$ such that with probability at least $1/2$, we have ${\small\ltwo{\privtheta-\theta^*}= \Omega\left(1\right)}$. Therefore, from the observation we made in the previous paragraph, we have, with probability at least $1/2$, 
$\empL(\privtheta; \D)-\empL(\theta^*; \D)= \Omega\left(\min\left(n, p/\epsilon\right)\right).$
\end{proof}

\begin{thm}[Lower bound for $(\epsilon,\delta)$-differentially private algorithms]
Let $n, p\in\mathbb{N}$, $\epsilon>0$, and $\delta=o(\frac{1}{n})$. For every $(\epsilon,\delta)$-differentially private algorithm (whose output is denoted by $\privtheta$), there is a dataset $\D=\{d_1, \ldots, d_n\} \subseteq \hypcnz^p$ such that, with probability at least $1/3$ (over the algorithm random coins), we must have 
$$\empL(\theta; \D)-\empL(\theta^*; \D)= \Omega\left(\min\left(n, \sqrt{p}/\epsilon\right)\right)$$ 
where $\theta^*=\frac{\sum_{i=1}^nd_i}{\ltwo{\sum_{i=1}^nd_i}}$ is the minimizer of $\empL( . ; \D)$ over $\B$ and $\empL$ is given by (\ref{lin-dec-loss}).
\label{thm:lower-lip-eps-delta}
\end{thm}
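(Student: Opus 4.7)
The plan is to mirror the proof strategy of Theorem~\ref{thm:lower-lip-eps} essentially verbatim, but invoking Part~2 of Lemma~\ref{lem:1-way-marg-lower} (the $(\eps,\delta)$-DP lower bound for 1-way marginals) in place of Part~1. The key reduction is already set up by the linearity of the loss: for $\D\subseteq\hypcnz^p$ with $\ltwo{\sum_i d_i} > 0$, the minimizer $\theta^*$ is the unit vector in the direction of $\sum_i d_i$, so the empirical mean $q(\D)=\frac{1}{n}\sum_i d_i$ equals $\frac{\ltwo{\sum_i d_i}}{n}\theta^*$. Thus approximating $\theta^*$ in Euclidean norm is essentially equivalent to approximating $q(\D)$ (up to the scaling $\frac{\ltwo{\sum_i d_i}}{n}$).

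Concretely, I would start by recording the same elementary inequality used in the proof of Theorem~\ref{thm:lower-lip-eps}: for any $\theta\in\B$,
\[
\empL(\theta;\D)-\empL(\theta^*;\D)\;=\;\ltwo{\textstyle\sum_i d_i}\bigl(1-\langle\theta,\theta^*\rangle\bigr)\;\geq\;\tfrac{1}{2}\,\ltwo{\textstyle\sum_i d_i}\,\ltwo{\theta-\theta^*}^2,
\]
using $\ltwo{\theta}\leq 1=\ltwo{\theta^*}$. So it suffices to prove that with probability at least $1/3$, any $(\eps,\delta)$-DP output satisfies $\ltwo{\privtheta-\theta^*}=\Omega(1)$ on some dataset with $\ltwo{\sum_i d_i}=\Omega(\min(n,\sqrt{p}/\eps))$.

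Next I would run the contrapositive reduction. Let $M=\Omega(\min(n,\sqrt{p}/\eps))$ be the quantity from Part~2 of Lemma~\ref{lem:1-way-marg-lower}, and suppose toward contradiction that there is an $(\eps,\delta)$-DP algorithm $\A$ such that for \emph{every} dataset $\D\subseteq\hypcnz^p$ with $\ltwo{\sum_i d_i}\in[M-1,M+1]$, with probability greater than $2/3$ we have $\ltwo{\privtheta-\theta^*}=o(1)$. Post-process $\A$ by defining $\tilde\A(\D)=\tfrac{M}{n}\,\A(\D)$; this is still $(\eps,\delta)$-DP by closure under post-processing. Since $q(\D)=\tfrac{\ltwo{\sum_i d_i}}{n}\theta^*$ with $\ltwo{\sum_i d_i}\in[M-1,M+1]$, a triangle-inequality calculation gives
\[
\ltwo{\tilde\A(\D)-q(\D)}\;\leq\;\tfrac{M}{n}\ltwo{\privtheta-\theta^*}+\tfrac{1}{n}\;=\;o\!\left(\tfrac{M}{n}\right)+\tfrac{1}{n},
\]
which is $o\bigl(\min(1,\sqrt{p}/(\eps n))\bigr)$ whenever $M=\Theta(\min(n,\sqrt{p}/\eps))$. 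This contradicts Part~2 of Lemma~\ref{lem:1-way-marg-lower} applied to $\tilde\A$. Hence there exists $\D\subseteq\hypcnz^p$ with $\ltwo{\sum_i d_i}=\Omega(\min(n,\sqrt{p}/\eps))$ on which $\ltwo{\privtheta-\theta^*}=\Omega(1)$ with probability at least $1/3$. Plugging these two bounds into the quadratic lower bound on excess risk yields
\[
\empL(\privtheta;\D)-\empL(\theta^*;\D)\;=\;\Omega\bigl(\min(n,\sqrt{p}/\eps)\bigr),
\]
as required.

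I do not expect any significant technical obstacle, since all of the hard work is packaged inside Lemma~\ref{lem:1-way-marg-lower}. The only care points are (i) verifying that the post-processing $\theta\mapsto\tfrac{M}{n}\theta$ preserves $(\eps,\delta)$-DP (immediate), (ii) managing the additive $\tfrac{1}{n}$ slack coming from $\ltwo{\sum_i d_i}\in[M-1,M+1]$ rather than exactly $M$ (absorbed into the $\Omega(\cdot)$), and (iii) making sure the failure probability bookkeeping works out: the lemma gives a $1/3$ lower bound on the failure probability for $\tilde\A$, which transfers to $\privtheta$ through the deterministic post-processing. Everything else is the same template as the $(\eps,0)$ case, just with the sharper $\sqrt{p}$ instead of $p$ coming from the $(\eps,\delta)$ marginals lower bound.
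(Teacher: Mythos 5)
Your proposal is correct and matches the paper's proof exactly: the paper's argument for this theorem is literally to invoke Part~2 of Lemma~\ref{lem:1-way-marg-lower} and repeat the reduction from the proof of Theorem~\ref{thm:lower-lip-eps}, which is precisely what you do (including the quadratic lower bound on excess risk via $\ltwo{\theta-\theta^*}^2$ and the post-processing map $\theta\mapsto\frac{M}{n}\theta$). The probability bookkeeping with the $1/3$ threshold and the $\frac{1}{n}$ slack from $\ltwo{\sum_i d_i}\in[M-1,M+1]$ are handled correctly.
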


\begin{proof}
We use Part~2 of Lemma~\ref{lem:1-way-marg-lower} and follow the same lines of the proof of Theorem~\ref{thm:lower-lip-eps}.
\end{proof}

\mypar{Dependence on $L$ and $\ltwo{\C}$}  Although our lower bounds above are derived for $L=1$ and $\ltwo{\C}=2$, one can easily get their counterparts in the general case, i.e., for arbitrary values of $L$ and $\ltwo{\C}$. The only difference is that the lower bounds for the general case pick up an extra factor of $L\ltwo{C}$. To see this, let $L=\alpha$ and $\ltwo{\C}=2\beta$ for arbitrary $\alpha, \beta>0$. First, we change (inflate or shrink) the parameter set from $\B$ to $\beta\B$ and we change our loss function in (\ref{lin-loss}) to $\tilde\ell(\theta; d)= - \alpha \langle \theta, d\rangle,~\theta\in\beta\B, ~d\in\hypcnz^p.$ Let's denote the corresponding big loss function by $\tilde\eL$. Let $\theta^{\dagger}$ be the minimizer of $\tilde\eL(. ; \D)$. Now, note that $\theta^{\dagger}=\beta\frac{\sum_{i=1}^nd_i}{\ltwo{\sum_{i=1}^nd_i}}=\beta\theta^*$ where $\theta^*$ is the minimizer of our original big loss function $\eL(. ; \D)$ given by (\ref{lin-dec-loss}). Finally, observe that for any $\theta\in\beta\B$ and dataset $\D$, we have $\tilde\eL(\theta; \D)-\tilde\eL(\theta^{\dagger}; \D)= \alpha\beta \left(\empL\left(\frac{\theta}{\beta}; \D\right)-\empL(\theta^*; \D)\right)$. This shows that our bounds above get scaled by $L\ltwo{\C}$ in the general case. Hence, given our upper bounds in Sections~\ref{sec:gradDesc} and \ref{sec:privConvexUpper}, our lower bounds in this section are tight for all values of $L$ and $\ltwo{\C}$.
\subsection{Lower bounds for Strongly Convex Functions}
\label{sec:lipschitzConvexLowerStrong} 

We give here lower bounds on the excess risk of $\epsilon$ and $(\epsilon, \delta)$ differentially private optimization algorithms for the class of strongly convex decomposable loss function $\empL(\theta;\D)$. Let $\ell(\theta;d)$ be half the squared $L_2$-distance between $\theta \in \B$ and $d\in\hypcnz^p$, that is 
\begin{equation}
\ell(\theta;d)=\frac{1}{2}\ltwo{\theta - d}^2\label{quad-loss}
\end{equation}
Note that $\ell$, as defined, is $1$-Lipschitz and $1$-strongly convex. For a dataset $\D=\{d_1, \ldots, d_n\}\subseteq \hypcnz^p$, the decomposable loss function is defined as 
\begin{equation}
\empL(\theta;\D)=\frac{1}{2}\sum_{i=1}^n\ltwo{\theta - d_i}^2\label{quad-dec-loss}
\end{equation} 
Notice that the minimizer of $\eL( . ;\D)$ over $\B$ is $\theta^{\ast}=\frac{1}{n}\sum{d_i}$ which is equal to $q(\D)$ in the terminology of Lemma~\ref{lem:1-way-marg-lower}. Note also that we can write the excess risk as
\begin{equation}
\empL(\privtheta; \D)-\empL(\theta^{\ast})=\frac{n}{2} \ltwo{\privtheta-q(\D)}^2\label{excess-to-l2-sq}
\end{equation}

\begin{thm}[Lower bound for $\epsilon$-differentially private algorithms]
Let $n, p\in\mathbb{N}$ and $\epsilon>0$. For every $\epsilon$-differentially private algorithm (whose output is denoted by $\privtheta$), there is a dataset $\D=\{d_1, \ldots, d_n\} \subseteq \hypcnz^p$ such that, with probability at least $1/2$ (over the algorithm random coins), we must have
$$\empL(\privtheta; \D)-\empL(\theta^{\ast}; \D)=\Omega\left(\min\left(n, \frac{p^2}{\epsilon^2 n}\right)\right)$$
where $\theta^{\ast}=\frac{1}{n}\sum{d_i}$ is the minimizer of $\empL( . ; \D)$ over $\B$ and $\empL$ is given by (\ref{quad-dec-loss}).
\label{thm:lower-str-cnvx-eps}
\end{thm}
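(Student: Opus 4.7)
The strategy is a direct reduction to the $L_2$-estimation lower bound for $1$-way marginals provided in Part~1 of Lemma~\ref{lem:1-way-marg-lower}. The key observation is that the quadratic loss identifies the unconstrained minimizer with the empirical mean, and crucially, that mean already lies in $\B$, so no rescaling trick (like the one used in Theorem~\ref{thm:lower-lip-eps}) is needed.

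First I would express the excess risk in terms of estimation error. The unconstrained minimizer of $\empL(\cdot;\D)$ is $q(\D)=\frac{1}{n}\sum_{i=1}^n d_i$. Since each $d_i\in\hypcnz^p$ satisfies $\ltwo{d_i}=1$, the triangle inequality gives $\ltwo{q(\D)}\leq 1$, so $q(\D)\in\B$ and hence $\theta^{\ast}=q(\D)$. Expanding the sum of squares then yields identity~(\ref{excess-to-l2-sq}), namely $\empL(\privtheta;\D)-\empL(\theta^{\ast};\D)=\frac{n}{2}\ltwo{\privtheta-q(\D)}^2$, so it suffices to produce a dataset forcing $\ltwo{\privtheta-q(\D)}=\Omega(\min(1,\,p/(\eps n)))$ with probability at least $1/2$.

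Next I would invoke Part~1 of Lemma~\ref{lem:1-way-marg-lower} directly on the given $\eps$-DP algorithm $\A$, viewing its output $\privtheta\in\B$ as an $\eps$-DP estimator of the $1$-way marginal $q(\D)$. The lemma supplies a dataset $\D\subseteq\hypcnz^p$ (with $\ltwo{\sum_i d_i}\in[M-1,M+1]$ for $M=\Omega(\min(n,p/\eps))$) on which, with probability at least $1/2$ over $\A$'s coins, $\ltwo{\privtheta-q(\D)}=\Omega(\min(1,\,p/(\eps n)))$. Squaring and multiplying by $n/2$ yields, with probability $\geq 1/2$,
$$\empL(\privtheta;\D)-\empL(\theta^{\ast};\D)=\Omega\!\left(n\cdot\min\!\left(1,\tfrac{p^2}{\eps^2 n^2}\right)\right)=\Omega\!\left(\min\!\left(n,\tfrac{p^2}{\eps^2 n}\right)\right),$$
which is the desired bound.

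There is no serious obstacle: all the hard work (fingerprinting/packing behind Lemma~\ref{lem:1-way-marg-lower}) has already been absorbed. The only subtle point is the verification that $q(\D)\in\B$, which fails in the linear-loss setting of Theorem~\ref{thm:lower-lip-eps} (and forces the authors there to rescale $\privtheta$ by $M/n$ before invoking the lemma). Here, thanks to the normalization $\ltwo{d_i}=1$ built into $\hypcnz^p$ combined with the triangle inequality, the mean automatically lies in the unit ball, so $\theta^{\ast}=q(\D)$ without any modification and the reduction is clean.
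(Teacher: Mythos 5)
Your proposal is correct and is exactly the paper's argument: the paper's proof of this theorem is the one-line reduction via identity~(\ref{excess-to-l2-sq}) combined with Part~1 of Lemma~\ref{lem:1-way-marg-lower}. You merely spell out the (correct) details that the paper leaves implicit, in particular the check that $q(\D)\in\B$ so that $\theta^{\ast}=q(\D)$ and no rescaling is needed.
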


\begin{proof}
The proof follows directly from (\ref{excess-to-l2-sq}) and Part~1 of Lemma~\ref{lem:1-way-marg-lower}.
\end{proof}

\begin{thm}[Lower bound for $(\epsilon, \delta)$-differentially private algorithms]
Let $n, p\in\mathbb{N}$, $\epsilon>0$, and $\delta=o(\frac{1}{n})$. For every $(\epsilon,\delta)$-differentially private algorithm (whose output is denoted by $\privtheta$), there is a dataset $\D=\{d_1, \ldots, d_n\} \subseteq \hypcnz^p$ such that, with probability at least $1/3$ (over the algorithm random coins), we must have 
$$\empL(\privtheta; \D)-\empL(\theta^{\ast}; \D)= \Omega\left(\min\left(n, \frac{p}{\epsilon^2 n}\right)\right)$$
where $\theta^{\ast}=\frac{1}{n}\sum{d_i}$ is the minimizer of $\empL( . ; \D)$ over $\B$ and $\empL$ is given by (\ref{quad-dec-loss}).
\label{thm:lower-str-cnvx-eps-delta}
\end{thm}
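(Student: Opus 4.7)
The plan is to replicate the short argument used for Theorem~\ref{thm:lower-str-cnvx-eps}, but invoking Part~2 of Lemma~\ref{lem:1-way-marg-lower} in place of Part~1. The essential ingredients are already in place: the identity (\ref{excess-to-l2-sq}), which rewrites the excess empirical risk of the quadratic loss (\ref{quad-dec-loss}) as
$$\empL(\privtheta;\D) - \empL(\theta^\ast;\D) = \frac{n}{2}\,\ltwo{\privtheta - q(\D)}^2,$$
where $\theta^\ast = q(\D) = \frac{1}{n}\sum_i d_i$, together with the 1-way marginals lower bound for $(\eps,\delta)$-DP mechanisms. Note that for any $\D \subseteq \hypcnz^p$ one has $\ltwo{q(\D)} \le 1$, so the unconstrained minimizer lies in $\B$ and equals the constrained minimizer; no projection is needed.

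First I would fix an arbitrary $(\eps,\delta)$-differentially private algorithm $\A$ producing $\privtheta$, viewing it as a private mechanism whose output we will use as an estimator of $q(\D)$. Applying Part~2 of Lemma~\ref{lem:1-way-marg-lower} to $\A$ yields a dataset $\D \subseteq \hypcnz^p$ such that, with probability at least $1/3$ over $\A$'s coins,
$$\ltwo{\privtheta - q(\D)} \;=\; \Omega\!\left(\min\!\left(1,\tfrac{\sqrt{p}}{\eps n}\right)\right).$$
Substituting this lower bound into the identity (\ref{excess-to-l2-sq}) and squaring gives
$$\empL(\privtheta;\D) - \empL(\theta^\ast;\D) \;=\; \frac{n}{2}\cdot \Omega\!\left(\min\!\left(1,\tfrac{p}{\eps^2 n^2}\right)\right) \;=\; \Omega\!\left(\min\!\left(n,\tfrac{p}{\eps^2 n}\right)\right),$$
which is exactly the claimed bound.

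There is no real obstacle here beyond verifying the reduction is clean: the only subtlety is ensuring that the scale of $\A$'s output matches the scale of $q(\D)$. Because $\hypcnz^p \subseteq \B$, both $\privtheta \in \B$ and $q(\D) \in \B$ live at the correct scale, so unlike in the proof of Theorem~\ref{thm:lower-lip-eps-delta} no rescaling factor $M/n$ is required and no additional assumption on $\|\A(\D)\|$ is needed. The dependence on $L$ and $\ltwo{\C}$ in the general (non-unit-scale) case is handled by the same rescaling remark given at the end of Section~\ref{sec:lipschitzConvexLower}: inflating $\C$ by $\beta$ and the loss's Lipschitz constant by $\alpha$ scales the excess risk by $L\ltwo{\C}$, yielding the general form of the lower bound.
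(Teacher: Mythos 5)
Your proposal is correct and is exactly the paper's argument: the paper proves this theorem by combining the identity (\ref{excess-to-l2-sq}) with Part~2 of Lemma~\ref{lem:1-way-marg-lower}, precisely as you do. Your additional observations (that $q(\D)\in\B$ so no projection or rescaling by $M/n$ is needed, and the arithmetic $\frac{n}{2}\min(1,\frac{p}{\eps^2 n^2})=\Omega(\min(n,\frac{p}{\eps^2 n}))$) are accurate elaborations of the same one-line reduction.
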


\begin{proof}
The proof follows directly from (\ref{excess-to-l2-sq}) and Part~2 of Lemma~\ref{lem:1-way-marg-lower}.
\end{proof}

\mypar{Dependence on $L, \ltwo{\C},$ and $\Delta$} Our lower bounds in this section are derived for the case where $L=\Delta=1$ and $\ltwo{\C}=2$. For any values $L, \ltwo{\C},$ and $\Delta$ such that $\frac{\Delta\ltwo{\C}}{L}=\Omega(1)$, these lower bounds pick up an extra factor of $\frac{L^2}{\Delta}$.  To see this, let $L=\alpha$ and $\ltwo{\C}=2\beta$ for arbitrary $\alpha, \beta>0$. First, instead of the data universe $\hypcnz^p$, we choose the dataset entries from $\hypcnzsc^p$ and let any such dataset be denoted by $\tilde\D$. We change the parameter set from $\B$ to $\beta\B$ and change the loss function in (\ref{quad-loss}) to $\tilde\ell(\theta, \tilde{d})=\frac{\alpha}{2\beta}\ltwo{\theta - \tilde{d}}^2, \theta~\in\beta\B,~ \tilde{d}\in\hypcnzsc^p$. Clearly, $\tilde\ell(. , \tilde{d})$ is $\alpha$-Lipschitz and $\frac{\alpha}{\beta}$-strongly convex. Let $\tilde\eL$ denote the corresponding big loss function and $\theta^\dagger$ denote the minimizer of $\tilde\eL(. ; \tilde\D)$. Note that $\theta^{\dagger}=q(\tilde\D)=\beta \theta^*$ where $\theta^*$ is the minimizer of our original big loss function $\eL( ; \D)$ given by (\ref{quad-dec-loss}) for the dataset $\D=\frac{1}{\beta}\tilde\D$. Finally, observe that for any $\theta\in\beta\B$ and dataset $\tilde\D\subseteq\hypcnzsc^p$, we have $\tilde\eL(\theta; \tilde\D)-\tilde\eL(\theta^{\dagger}; \tilde\D)=\alpha\beta \left(\empL\left(\frac{\theta}{\beta}; \D\right)-\empL(\theta^*; \D)\right)$ where $\D=\frac{1}{\beta}\tilde\D\subseteq\hypcnz^p$. This shows that the lower bounds in this case get scaled by $\alpha\beta = \frac{\alpha^2}{\alpha/\beta}= L^2/\Delta$. In fact, this is true as long as $\frac{\Delta\ltwo{\C}}{L}=\Omega(1)$. 

Hence, our upper bounds in Sections~\ref{sec:gradDesc} and \ref{sec:out-pert} imply that our lower bounds are tight for all values of $L, \ltwo{\C},$ and $\Delta$ for which $\frac{\Delta\ltwo{\C}}{L}=\Omega(1)$. In other words, in the general case (where $\frac{\Delta\ltwo{\C}}{L}$ is not necessarily $\Omega(1)$), these lower bounds are tight up to a factor of $\frac{\Delta\ltwo{\C}}{L}$.
\section{Efficient Sampling from Logconcave Distributions over Convex Sets and The Proof of Theorem~\ref{thm:eff-samp-guarantees}}\label{sec:implement-eff-exp}

In this section, we discuss a generic construction of an efficient algorithm for sampling from a logconcave distribution over an arbitrary convex bounded set. Such algorithm gives a multiplicative distance guarantee on the distribution of its output, that is, it outputs a sample from a distribution that is within a constant factor (close to 1) from the desired logconcave distribution. As a by-product of our generic construction, we give the construction of our efficient $\epsilon$-differentially private algorithm $\A_{\sf eff-exp-samp}$ whose construction is outlined in Section~\ref{sec:eff-pure-eps-alg} and prove Theorem~\ref{thm:eff-samp-guarantees}. As argued in Section~\ref{sec:eff-pure-eps-alg}, we will assume that the convex set is already in isotropic position. The reader may refer to Section~\ref{sec:eff-pure-eps-alg} for the details of dealing with the general case (where the set is not necessarily isotropic) and the effect of that on the running time.

We start by the following lemma which describes Algorithm $\A_{\sf cube-samp}$ for sampling from a distribution proprtional to a given logconcave function $F$ defined over a hypercube $A$.

\begin{lem}
Let $A\subset\re^p$ be a $p$-dimensional hypercube with edge length $\tau$. Let $F$ be a logconcave function that is strictly positive over $A$ where $\log F$ is $\eta$-Lipschitz. Let $\mu_{\sf A}$ be the probability measure induced by the density $\frac{F}{\int\limits_{u\in A}F(u)du}$. Let $\tilde{\epsilon} >0$. There is an algorithm $\A_{\sf cube-samp}$ that takes $A,~F,~\eta$ and $\teps$ as inputs, and outputs a sample $\htheta\in A$ that is drawn from a continuous  distribution $\hmuA$ over $A$ with the property that $\dist(\hmuA, \muA)\leq \teps$. Moreover, the running time of $\A_{\sf cube-samp}$ is
$$O\left(\frac{\eta^2\tau^2}{\teps^2}p^3\max\left(p\log\left(\frac{\eta\tau p}{\teps}\right), \eta \tau\right)\right).$$
\label{lem:samp-A}
\end{lem}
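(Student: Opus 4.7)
The plan is to invoke the grid-walk sampler of Applegate and Kannan with a carefully chosen discretization and stopping time, and then argue that the combined discretization and mixing errors give the desired multiplicative bound. The Applegate--Kannan chain is a Metropolis-style walk on a regular grid inside $A$ with acceptance probabilities defined in terms of ratios $F(u)/F(v)$; its stationary distribution on the grid points is proportional to $F$ evaluated at those points, and its mixing time is polynomial in the dimension, the grid ratio $\tau/s$ where $s$ is the grid spacing, and the log-concavity/Lipschitz parameter $\eta\tau$.

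The first step is to choose the grid spacing. Partition $A$ into axis-aligned subcubes of edge length $s$; the diameter of any such subcube is $s\sqrt{p}$, so the $\eta$-Lipschitzness of $\log F$ guarantees that $F$ varies within any cell by a multiplicative factor of at most $e^{\eta s \sqrt{p}}$. To control the discretization error by $\teps/2$ in multiplicative distance, I would set $s = \Theta(\teps/(\eta\sqrt{p}))$. With this choice, if we sample a uniformly random point from the cell containing the grid point returned by the walk, the conditional density inside each cell agrees with the true density $F/\int_A F$ up to a factor of $e^{\teps/2}$ pointwise; this handles the continuous-to-discrete gap.

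The second step is to run the grid walk for enough steps so that its distribution on the $(\tau/s)^p$ grid points is within multiplicative distance $\teps/2$ of the piecewise-uniform distribution defined above. The Applegate--Kannan analysis bounds the mixing time (in total variation) as polynomial in $p$, $\tau/s$, and $\eta\tau$; to convert to a multiplicative guarantee, I would run the chain for an additional $O(\log(1/\teps))$-factor number of steps and use the fact that the stationary probabilities of neighboring grid points differ by a bounded multiplicative factor (again thanks to $\log F$ being $\eta$-Lipschitz), so the ratio between the chain's distribution and the stationary one is uniformly bounded once the TV distance is sufficiently small. Plugging in $s = \Theta(\teps/(\eta\sqrt{p}))$ gives $\tau^2/s^2 = \Theta(\eta^2 \tau^2 p / \teps^2)$; combining this with the per-step cost (an inner product and an $F$-evaluation plus bit-complexity overhead of order $p\log(\eta\tau p/\teps)$, and the $\eta\tau$ term accounting for the log-density range across $A$) yields the stated running time $O(\frac{\eta^2\tau^2}{\teps^2} p^3 \max(p\log(\eta\tau p/\teps), \eta\tau))$.

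Finally, I would compose the two errors: the output $\hat\theta$ is a random point in the cell containing the grid state, and by the triangle inequality for $\dist$, the distribution $\hmuA$ of $\hat\theta$ satisfies $\dist(\hmuA,\muA) \leq \teps/2 + \teps/2 = \teps$. The main obstacle I anticipate is the conversion from the total-variation mixing guarantee of Applegate--Kannan into a multiplicative-distance guarantee, since $\dist$ is a strictly stronger metric; the rescue is that the target density $F$ is bounded above and below by a factor of at most $e^{\eta\tau\sqrt{p}}$ on $A$, which lets one upgrade a sufficiently small TV bound to a pointwise ratio bound, at the price of only logarithmic factors absorbed into the $p\log(\eta\tau p/\teps)$ term.
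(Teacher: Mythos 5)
Your construction is essentially the paper's: the same grid spacing $s=\Theta(\teps/(\eta\sqrt{p}))$, the same "sample a grid point, then a uniform point in its cell" output rule, and the same Lipschitz argument for the continuous-to-discrete half of the error budget. Where you genuinely diverge is in how the \emph{multiplicative} convergence of the grid walk is obtained. The paper does not pass through total variation at all: it invokes the evolving-sets theorem of Morris and Peres, which bounds the relative $L_\infty$ mixing time directly in terms of the conductance profile, and then plugs in the Applegate--Kannan conductance bound together with the lower bound $\pi^*\geq e^{-\eta\tau}(\gamma/\tau)^p$ on the minimum stationary probability. Your route --- run to TV accuracy $\epsilon_{TV}$ and then upgrade --- also works and gives the same asymptotics, but only if you make "sufficiently small" quantitative: you need $\epsilon_{TV}\leq \teps\,\pi^*/2$, whereupon $|\hat{\pi}(u)-\pi(u)|\leq 2\epsilon_{TV}\leq \teps\,\pi(u)$ for every grid point $u$, which is the pointwise ratio bound. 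Note that the operative fact here is the \emph{global} lower bound on $\pi^*$ (coming from the Lipschitz bound on $\log F$ over the diameter of $A$ and the count of grid points), not the comparability of \emph{neighboring} grid points' stationary probabilities; as literally stated, "small TV plus bounded ratios between neighbors" does not rule out the chain assigning probability zero to some cell, which would make $\dist$ infinite. Since the mixing time depends only logarithmically on the target accuracy and $\log(1/\pi^*)=O\left(p\log(\eta\tau p/\teps)+\eta\tau\sqrt{p}\right)$ is of the same order as the $\max$ factor already present, your version costs nothing asymptotically; the paper's evolving-sets route is cleaner in that it never requires driving the error to an exponentially small level in an intermediate metric.
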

\begin{proof}
Let $\gamma=\frac{\teps}{2\eta\sqrt{p}}$. We construct a grid $\G_{\gamma}\triangleq\{u\in\re^p: ~u_j + \frac{\gamma}{2}\text{ is integer multiple of }\gamma, 1\leq j\leq p\}$. Next, we run the grid-walk algorithm of \cite{applegate1991sampling} with the logconcave weight function $F$ on $A\cap\G_{\gamma}$. It follows from the results of \cite{applegate1991sampling} that (i) the grid-walk is a lazy, time-reversible Markov chain, (ii) the stationary distribution of such grid-walk is $\pi=\frac{F}{\sum_{u\in A\cap\G_{\gamma}}F(u)}$, and (iii) the grid-walk has conductance $\phi\geq \frac{\teps}{8\eta\tau p^{\frac{3}{2}}e^{\frac{\teps}{2}}}$. We run the grid-walk for $t_{\infty}$ steps (namely, the $L_{\infty}$ mixing time\footnote{That is, the mixing time w.r.t. the relative distance $\dist$ defined in Section~\ref{sec:eff-pure-eps-alg}} of the walk) and output a sample $\hat{u}\in A\cap\G_{\gamma}$. Then, we uniformly sample a point $\theta$ from the grid cell whose center is $\hat{u}$. Let $\hat{\pi}$ denote the distribution of the output $\hat{u}$ of the grid-walk after $t_{\infty}$ steps. Let $\hmuA$, as in the statement of the lemma, denote the distribution of $\theta$ that is uniformly sampled from the grid cell whose center is $\hat{u}$. Now, suppose that after $t_{\infty}$ steps it is guaranteed to have $\dist(\hat{\pi}, \pi)\leq \frac{\teps}{2}$. Then, since $\log F$ is $\eta$-Lipschitz and $\gamma=\frac{\teps}{2\eta\sqrt{p}}$ (where, as defined above, $\gamma$ is the edge length of every cell of $\G_{\gamma}$), it is easy to show that $\dist(\hmuA, \muA)\leq \teps$. Hence, it remains to show a bound on $t_{\infty}$, the $L_{\infty}$ mixing time of the Markov chain given by the grid-walk. Specifically, $t_{\infty}$ is the number of the steps on the grid-walk required to have $\dist(\hat{\pi}, \pi)\leq \frac{\teps}{2}$. Towards this end, we use the result of \cite{morris2005evolving} on the rapid mixing of lazy Markov chains with finite state space. We formally restate this result in the following lemma.
\begin{lem}[Theorem~1 in \cite{morris2005evolving}]
Let $P$ be a lazy, time reversible Markov chain over a finite state space $\Gamma$. Then, the time $t_{\infty}$ required for relative $L_\infty$ convergence of $\epsilon'$ is at most $1+\int_{4\pi^*}^{4/\epsilon'}\frac{4dx}{x\Phi^2(x)}$. Here, $\Phi(x)=\inf\{\phi_S:\pi(S)\leq x\}$ where $\phi_S$ denotes the conductance of the set $S\subseteq\Gamma$ and $\pi^*$ is the minimum probability assigned by the stationary distribution.
\label{thm:countable}
\end{lem}
Now, setting $\epsilon'=\frac{\teps}{2}$ in the above lemma and using the fact that $\Phi(x)\geq\phi\geq\frac{\teps}{8\eta\tau p^{\frac{3}{2}}e^{\frac{\teps}{2}}}$ for all $x$, we get
$$t_{\infty}= O\left(\frac{\eta^2\tau^2 p^3}{\teps^2}\log\left(\frac{1}{\teps\pi^*}\right)\right).$$ Observe that
$$\pi(u)=\frac{F(u)}{\sum_{v\in A\cap\G_{\gamma}}F(v)}\geq\frac{e^{-\eta \tau}}{\left(\frac{\tau}{\gamma}\right)^p}$$
where the last inequality follows from the fact that $\log F$ is $\eta$-Lipschitz. Plugging the value we set for  $\gamma$, we get $t_{\infty}=O\left(\frac{\eta^2\tau^2}{\teps^2}p^3\max\left(p\log\left(\frac{\eta\tau p}{\teps}\right), \eta \tau\right)\right)$. This completes the proof.
\end{proof}

Now, suppose we are given an arbitrary bounded convex set $\C$ and a logconcave function $F(\theta)=e^{-f(\theta)},~\theta\in\C$ where $f$ is a convex $\eta-$Lipschitz function on $\C$. Having Algorithm~$\A_{\sf cube-samp}$ of Lemma~\ref{lem:samp-A} in hand, we now construct an efficient algorithm $\A_{\sf init-samp}$ that, \emph{with probability at least $1/2$}, outputs a sample in $\C$ from a distribution close (w.r.t. $\dist$) to the distribution on $\C$ proportional to $F$. Our algorithm $\A_{\sf init-samp}$ does this by first enclosing the set $\C$ by a hypercube $A$, then constructing a convex Lipschitz extension of $f$ over $A$ (note that we need this step since $f$ may not be defined outside $\C$). Using a standard trick in literature, our algorithm modulates $F$ by a \emph{guage function} to reduce the weight attributed to points outside $\C$. Namely, given the convex Lipschitz extension of $f$ over $A$, denoted as $\bar{f}$, Algorithm $\A_{\sf init-samp}$ defines a modified function $\tilde{F}(\theta)=e^{-\bar{f}(\theta)-\bar{\psi}_{\alpha}(\theta)}$ where $\bar{\psi}_{\alpha}$ is our guage function with a tuning parameter $\alpha$. The function $\bar{\psi}_{\alpha}$ is chosen such that it is zero inside $\C$ and is montonically increasing outside $\C$ as we move away from $\C$. The exact form of $\bar{\psi}_{\alpha}$ will be given shortly. Algorithm~$\A_{\sf init-samp}$ then calls Algorithm $\A_{\sf cube-samp}$ on inputs $A$ and $\hat{F}$. By appropriately choosing the parameter $\alpha$  of our gauge function, it is guaranteed that, with probability at least $1/2$, $\A_{\sf cube-samp}$ will return a sample in $\C$. Thus, by Lemma~\ref{lem:samp-A}, we reach the desired goal.

The function $\bar{\psi}_{\alpha}(\theta)$ is defined through the Minkowski's norm of $\theta$.The Minkowski's norm of $\theta\in\re^p$ with respect to $\C$, denoted as $\psi(\theta)$, is defined as $\psi(\theta)=\inf\{r>0: \theta\in r\C\}$. We define $\bar{\psi}_{\alpha}(\theta)\triangleq\alpha\cdot\max\{0, \psi(\theta)-1\}$ for some tuning parameter $\alpha>0$ (to be specified later). Note that $\bar{\psi}_{\alpha}(\theta)>0$ if and only if $\theta\notin\C$ since $\mathbf{0}^p\in\C$ (as $\C$ is assumed to be in isotropic position) and $\C$ is convex. Moreover, it is not hard to verify that $\bar{\psi}_{\alpha}$ is $\alpha$-Lipschitz when $\C$ is in isotropic position.

Before we give the precise construction of Algorithm~$\A_{\sf init-samp}$, we discuss first an important step in this algorithm, namely, the construction of convex Lipschitz extension of $f$. The following lemma (which is a variant of Theorem~1 in \cite{lip-ext78}) asserts that any convex Lipschitz efficiently computable function defined over some convex set $\C$ has a Lipschitz extension (with the same Lipschitz constant) over $\re^p$ that is also convex and efficiently computable where the computation efficiency of the extension is granted under the assumption of the existence of some projection oracle.

\begin{lem}[Convex Lipschitz extension, Theorem~1 in \cite{lip-ext78}]
Let $f$ be an efficiently computable, $\eta$-Lipschitz, convex function defined on a convex bounded set $\C\subset \re^{p}$. Then there exists an efficiently computable, $\eta$-Lipschitz convex function $\bar{f}$ defined over $\re^p$ such that $\bar{f}$, restricted to $\C$, is equal to $f$. The efficient computation of $\bar{f}$ is based on the assumption of the existence of a projection oracle.
\label{lem:lipext}
\end{lem}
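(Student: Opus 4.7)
The plan is to construct $\bar f$ via an inf-convolution (a.k.a. infimal convolution) of $f$ with a multiple of the Euclidean norm, namely
\[
\bar f(x) \;=\; \inf_{y\in\C}\bigl(\, f(y) + \eta\,\ltwo{x-y}\,\bigr), \qquad x\in\re^p.
\]
Equivalently, if we extend $f$ to $\tilde f:\re^p\to\re\cup\{+\infty\}$ by setting $\tilde f(y) = +\infty$ for $y\notin\C$, then $\bar f = \tilde f \,\square\, (\eta\,\ltwo{\cdot})$ is the inf-convolution of two proper convex functions. This is the standard McShane/Kirszbraun-style extension adapted to preserve convexity.

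First I would check the three defining properties. To see $\bar f|_\C = f$: for any $x\in\C$, plugging $y=x$ gives $\bar f(x)\le f(x)$, while for any other $y\in\C$ the $\eta$-Lipschitzness of $f$ yields $f(y)+\eta\ltwo{x-y}\ge f(x)$, so the infimum is attained at $y=x$. To see $\bar f$ is $\eta$-Lipschitz on all of $\re^p$: for any $x_1,x_2\in\re^p$ and any $y\in\C$, $f(y)+\eta\ltwo{x_1-y}\le f(y)+\eta\ltwo{x_2-y}+\eta\ltwo{x_1-x_2}$ by the triangle inequality; taking infima on both sides and swapping roles gives $|\bar f(x_1)-\bar f(x_2)|\le\eta\ltwo{x_1-x_2}$. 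Convexity follows from the general fact that the inf-convolution of two proper convex functions is convex: for $x_1,x_2$, $\lambda\in[0,1]$, if $y_i$ nearly attains the infimum defining $\bar f(x_i)$, then $\lambda y_1+(1-\lambda)y_2\in\C$ by convexity of $\C$, and convexity of $f$ together with convexity of $\ltwo{\cdot}$ gives $\bar f(\lambda x_1+(1-\lambda)x_2)\le\lambda\bar f(x_1)+(1-\lambda)\bar f(x_2)+o(1)$, and passing to the limit closes the argument.

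For efficient computation of $\bar f(x)$ at a query point $x$, the point is that evaluating $\bar f(x)$ amounts to solving the convex optimization problem
\[
\min_{y\in\C}\;\bigl(f(y)+\eta\ltwo{x-y}\bigr).
\]
The objective is convex (sum of two convex functions) and its subgradients are available from an oracle for $f$ together with a subgradient of $\ltwo{x-y}$ (which is $-\,(x-y)/\ltwo{x-y}$ when $x\ne y$, or any unit vector otherwise). Using the projection oracle onto $\C$, a projected subgradient (or projected interior-point) method converges in polynomial time to a point achieving objective value within any desired additive $\gamma$ of the optimum, which suffices for our downstream use in Section~\ref{sec:implement-eff-exp} (the grid-walk is oblivious to additive slack of size much smaller than $\teps$). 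This is why the statement only needs to assert efficient computability modulo a projection oracle; the extra polynomial overhead is absorbed into the running-time bookkeeping of Section~\ref{sec:eff-pure-eps-alg}.

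The main obstacle is the efficient-computation clause: the existence, Lipschitzness, and convexity of $\bar f$ are clean consequences of inf-convolution, but to justify ``efficiently computable'' one must argue that the inner convex program can be solved to sufficient accuracy in polynomial time from a projection oracle and an evaluation oracle for $f$, and that the resulting approximate value is good enough to preserve the guarantees of Lemma~\ref{lem:samp-A} (which requires $\log\bar F$ to be Lipschitz, a property preserved under small additive perturbations of $\bar f$). Everything else is routine once the inf-convolution formula is in hand.
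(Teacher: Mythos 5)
Your proposal is correct and matches the paper's own proof essentially line for line: both define $\bar f$ by the same inf-convolution formula $\bar f(x)=\inf_{y\in\C}\bigl(f(y)+\eta\ltwo{x-y}\bigr)$, verify Lipschitzness and the restriction property, prove convexity by interpolating the near-minimizers $y_1,y_2$, and reduce efficient evaluation to a convex program solvable via the projection oracle. Your added remark that an approximate minimizer suffices because the downstream grid-walk tolerates small additive error is a harmless refinement, not a divergence.
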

For clarity and completeness, we give a proof of this lemma here.
\begin{proof}
For the sake of simplicity, let's assume that $\C$ is closed. Actually, this is no loss of generality since we can always redefine $f$ such that it is defined on the closure of $\C$ which is possible because $f$ is continuous on $\C$. We use a standard extension in literature. Namely, define
$$g_{y}(x)\triangleq f(y) + \eta\ltwo{x-y},~ y\in\C, x\in\re^{p}$$
$$\bar{f}(x)=\min\limits_{y\in\C}g_{y}(x), x\in\re^{p}.$$
As a standard result (for example, see \cite{czipser55}), we know that the function $\bar{f}$ on $\re^p$ is $\eta$-Lipschitz extension of $f$. Moreover, since $f$ is convex and $\C$ is a convex set, then for every $x\in\re^{p}$, the computation of $\bar{f}(x)$ is a convex program which can be implemented efficiently using a linear optimization oracle. In particular, a projection oracle would suffice and hence $\bar{f}$ is efficiently computable. It remains to show that $\bar{f}$ is convex. Let $x_1, x_2 \in \re^{p}$. Let $y_1$ and  $y_2$ denote the minimizers of $g_{y}(x_1)$ and $g_{y}(x_2)$ over $y\in\C$, respectively. Let $0\leq \lambda\leq 1$. Define $x_{\lambda}=\lambda x_1 +(1-\lambda)x_2$ and let $y_{\lambda}$ denote the minimzer of $g_{y}(x_{\lambda})$ over $y\in\C$. Now, observe that
\begin{align}
\bar{f}(x_{\lambda})&=g_{y_{\lambda}}(x_{\lambda})\leq g_{\lambda y_1+(1-\lambda)y_2}(x_{\lambda})\nonumber\\
&= f(\lambda y_1 + (1-\lambda) y_2 ) + \eta \ltwo{\lambda (y_1 - x_1) + (1-\lambda). (y_2 - x_2)} \nonumber\\
&\leq \lambda \left( f(y_1) +  \eta \ltwo{y_1 - x_1}\right) + (1-\lambda)\left(f(y_2) + \eta\ltwo{y_2 - x_2}\right)\nonumber\\
&=\lambda \bar{f}(x_1) + (1-\lambda)F(x_2)\nonumber
\end{align}
where the inequality in the first line follows from the fact that $y_{\lambda}$ is the minimzer (w.r.t. $y$) of $g_{y}(x_{\lambda})$ and the inequality in the third line follows from the convexity of $f$ and the $L_2$-norm. This completes the proof of the lemma.
\end{proof}

Now, we give the construction of Algorithm $\A_{\sf init-samp}$ followed by a Lemma asserting the probabilistic guarantee discussed above.

\begin{algorithm}[htb]
	\caption{$\A_{\sf init-samp}$: Efficient Log-Concave Sampling with a Probabilistic Guarantee}
	\begin{algorithmic}[1]
		\REQUIRE A bounded convex set $\C$, a convex function $f$ defined over $\C$, Lipschitz constant $\eta$ of $f$, desired multiplicative distance guarantee $\teps$.
        \STATE Find a cube $A\supseteq\C$ with edge length $\tau=\linf{\C}$.
        \STATE Find a convex Lipschitz extension $\bar{f}(\theta)=\min\limits_{u\in\C}\left(f(u) + \eta \ltwo{\theta-u}\right)$.
        \STATE  $\bar{\psi}_{\alpha}(\theta) = \alpha\cdot\max\{0, \psi(\theta)-1\}$ with $\alpha=3e^{2\teps}(\eta\ltwo{\C} + p)$, where $\psi(\theta)$ is the Minkowski's norm of $\theta$ w.r.t. $\C$ as defined above.
        \STATE $F(\theta)=e^{-\bar{f}(\theta)-\bar{\psi}_{\alpha}(\theta)}$.
         \STATE Output $\htheta=\A_{\sf cube-samp}\left(A, F, \eta+\alpha, \frac{\teps}{2}\right)$\label{step5-alg-init-samp}
	\end{algorithmic}
	\label{Alg:init-samp}
\end{algorithm}

\begin{lem}
With probability at least $1/2$, $\A_{\sf init-samp}$ (Algorithm~\ref{Alg:init-samp} above) outputs $\htheta\in\C$. Moreover, the conditional distribution of $\htheta$ conditioned on the event $\htheta\in\C$ is at multiplicative distance $\dist$ which is at most $\teps$ from the distribution induced by $F$ over $\C$, i.e., within multiplicative distance $\teps$ from the desired distribution $\frac{e^{-f(\theta)}}{\int\limits_{\theta\in\C}e^{-f(\theta)}d\theta},~\theta\in\C$. The running time of $\A_{\sf init-samp}$ is
$$O\left(\frac{\tilde\eta^2\tau^2}{\teps^2}p^3\max\left(p\log\left(\frac{\tilde\eta\tau p}{\teps}\right), \tilde\eta \tau\right)\right)$$
where $\tilde\eta=\max(\eta\ltwo{\C}, p)$.
\label{lem:init-samp}
\end{lem}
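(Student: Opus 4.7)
The plan is to verify the three claims of the lemma—the probability that $\htheta\in\C$, the conditional distribution guarantee, and the running time—by combining three ingredients: the guarantee of $\A_{\sf cube-samp}$ from Lemma~\ref{lem:samp-A}, the convex Lipschitz extension from Lemma~\ref{lem:lipext}, and a shell-integration argument that uses the gauge $\bar{\psi}_{\alpha}$ to control the mass of $F$ outside $\C$. Throughout, let $\muA$ denote the ideal distribution on $A$ proportional to $F$, and let $\hmuA$ denote the distribution actually produced in Step~\ref{step5-alg-init-samp}.

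For the running time, I would first check that $F$ is a valid input to Lemma~\ref{lem:samp-A}. Since $\log F=-\bar{f}-\bar{\psi}_{\alpha}$ is a sum of convex functions, $F$ is log-concave. By Lemma~\ref{lem:lipext}, $\bar{f}$ is $\eta$-Lipschitz; and because $\B\subseteq\C$ (isotropic position), the Minkowski gauge $\psi$ is $1$-Lipschitz, so $\bar{\psi}_{\alpha}$ is $\alpha$-Lipschitz. Hence $\log F$ is $(\eta+\alpha)$-Lipschitz, and with $\alpha=\Theta(\eta\ltwo{\C}+p)=\Theta(\tilde\eta)$ the running time claim follows by plugging $(\eta+\alpha,\tau,\teps/2)$ into Lemma~\ref{lem:samp-A}.

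The main technical step is to lower bound $\muA(\C)$. Because $\psi(\theta)\le1$ on $\C$, we have $\bar{\psi}_{\alpha}\equiv0$ and $\bar{f}\equiv f$ on $\C$, so $F|_{\C}=e^{-f}$ and $Z_{\C}\triangleq\int_{\C}F=\int_{\C}e^{-f}$. Passing to radial coordinates $\theta=r\phi$ with $\phi\in\partial\C$ and $r=\psi(\theta)$, on the shell $\psi=r$ we have $\bar{\psi}_{\alpha}(r\phi)=\alpha(r-1)$ and, by the $\eta$-Lipschitzness of $\bar{f}$, $|\bar{f}(r\phi)-f(\phi)|\le\eta(r-1)\|\phi\|\le\eta(r-1)\ltwo{\C}$. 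Writing $Z_{\partial\C}\triangleq\int_{\partial\C}e^{-f(\phi)}d\sigma(\phi)$, the resulting shell integration gives
\[
\int_{A\setminus\C}F \;\le\; Z_{\partial\C}\int_{0}^{\infty}(1+u)^{p-1}e^{-u(\alpha-\eta\ltwo{\C})}\,du \;\le\; \frac{Z_{\partial\C}}{\alpha-\eta\ltwo{\C}-p+1},
\]
using $(1+u)^{p-1}\le e^{(p-1)u}$. A symmetric computation over $r\in[0,1]$ in the same coordinates, exploiting $|f(r\phi)-f(\phi)|\le\eta(1-r)\ltwo{\C}$, produces $Z_{\C}\ge Z_{\partial\C}/\bigl(e(\eta\ltwo{\C}+p)\bigr)$. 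Taking the ratio and substituting $\alpha=3e^{2\teps}(\eta\ltwo{\C}+p)$ shows $\int_{A\setminus\C}F/Z_{\C}$ is bounded by a small constant, so $\muA(\C)\ge\tfrac{1}{2}e^{\teps/2}$; since $\A_{\sf cube-samp}$ returns a sample whose law is within relative distance $\teps/2$ of $\muA$, we conclude $\Pr_{\hmuA}[\htheta\in\C]\ge e^{-\teps/2}\muA(\C)\ge1/2$.

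For the conditional distribution, the key observation is that because $F$ coincides with $e^{-f}$ on $\C$, the normalized restriction of $\muA$ to $\C$ equals the target distribution proportional to $e^{-f}$ on $\C$. A standard fact about relative distance is that conditioning inflates $\dist$ by at most a factor of $2$: if $\dist(\hmuA,\muA)\le\teps/2$, then for any event, both the numerator $\hmuA(S\cap\C)/\muA(S\cap\C)$ and the denominator $\hmuA(\C)/\muA(\C)$ lie in $[e^{-\teps/2},e^{\teps/2}]$, so their ratio lies in $[e^{-\teps},e^{\teps}]$. Thus the conditional law of $\htheta$ given $\htheta\in\C$ is within relative distance $\teps$ of the target, completing the proof. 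The hard part will be pinning down the constant $3e^{2\teps}$ precisely—each of the two shell integrals loses an $e$ factor at the boundary $r=1$, and the constant in $\alpha$ has to be large enough to dominate those losses while still permitting $\mu_A(\C)$ to exceed $\tfrac12 e^{\teps/2}$.
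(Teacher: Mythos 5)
Your proposal is correct and follows essentially the same route as the paper: the paper's proof also reduces to comparing the mass of $F$ inside and outside $\C$ along rays from the origin (organized as differential cones rather than your global shell integration, but the computation is identical --- Lipschitzness of $\bar{f}$ relative to the boundary value $\bar{f}(\theta_0)$ inside, the gauge penalty $e^{-\alpha(r-1)}$ outside), and then obtains the conditional-distribution and running-time claims exactly as you do. The only substantive difference is the constant you flag at the end: the paper lower-bounds the inside integral by $\frac{1}{2(\eta\ltwo{\C}+p)}$ (via $e^{-x}\geq 1-x$ and $(1-x)^{p-1}\geq 1-px$ on $[0,\tfrac{1}{\eta\ltwo{\C}+p}]$) rather than your $\frac{1}{e(\eta\ltwo{\C}+p)}$, which is exactly what makes the ratio $\Ico/\Ici\leq e^{-2\teps}$ and hence the clean probability-$\tfrac12$ bound go through with $\alpha=3e^{2\teps}(\eta\ltwo{\C}+p)$.
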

\begin{proof}
By Lemma~\ref{lem:samp-A}, we know that $\htheta$ (the output of $\A_{\sf cube-samp}$ in Step~\ref{step5-alg-init-samp}) has a distribution $\hmuA$ with the property that $\dist(\hmuA, \muA)\leq \teps$ where $\muA(u)=\frac{F(u)}{\int\limits_{v\in A}F(v)dv},~u\in A$. We will show that $\int\limits_{\theta\in A\setminus\C}\hmuA(\theta)d\theta\leq\int\limits_{\theta\in\C}\hmuA(\theta)d\theta$. In particular, it suffices to show that $\int\limits_{\theta\in A\setminus\C}F(\theta)d\theta\leq e^{-2\teps}\int\limits_{\theta\in\C}F(\theta)d\theta$. Towards this end, consider a differential ($p$-dimensional) cone with a differential angle $d\omega$ at its vertex which is located at the origin (i.e., inside $\C$ since $\C$ is in isotropic position). Let $\theta_0$ be the point where the axis of the cone intersects with the boundary of $\C$. The set $\C$ divides the cone into two regions; one inside $\C$ and the other is outside $\C$. We now show that, for any such cone, the integral of $F$ over its region outside $\C$, denoted by $\Ico$, is less than the integral of $e^{-2\teps}F$ over the region inside $\C$ which is denoted by $\Ici$. First, observe that
\begin{align}
\Ici&=d\omega~ p\ltwo{\htheta}^p\int_{0}^{1}e^{-\bar{f}(r \theta_0)}r^{p-1}dr\geq d\omega~ p\ltwo{\htheta}^p e^{-\bar{f}(\theta_0)}\int_{0}^{1}e^{-\eta\ltwo{\C} (1-r)}r^{p-1}dr\nonumber\\
&= d\omega~ p\ltwo{\htheta}^p e^{-\bar{f}(\theta_0)}\int_{0}^{1}e^{-\eta\ltwo{\C} r}(1-r)^{p-1}dr\geq d\omega~ p\ltwo{\htheta}^p e^{-\bar{f}(\theta_0)}\int_{0}^{\frac{1}{\eta\ltwo{\C} + p}}
(1-\eta\ltwo{\C} r) (1-pr)dr\nonumber\\
&\geq d\omega~ p\ltwo{\htheta}^p e^{-\bar{f}(\theta_0)}\int_{0}^{\frac{1}{\eta\ltwo{\C} + p}}
\left(1-\left(\eta\ltwo{\C}+p\right) r\right)dr= d\omega~ p\ltwo{\htheta}^p e^{-\bar{f}(\theta_0)}\frac{1}{2(\eta\ltwo{\C} + p)} \nonumber
\end{align}
where the second inequality in the first line follows from the Lipschitz property of $\bar{f}$ and the second inequality in the second line follows from the fact that $e^{-x}\geq 1-x$ and $(1-x)^{p-1}\geq 1-px$.
On the other hand, we can upper bound $\Ico$ as follows.
\begin{align}
\Ico&\leq d\omega~ p\ltwo{\htheta}^p\int_{1}^{\infty}e^{-\bar{f}(r \theta_0)}e^{-\alpha (r-1)}r^{p-1}dr\leq d\omega~ p\ltwo{\htheta}^p e^{-\bar{f}(\theta_0)}\int_{1}^{\infty}e^{\eta\ltwo{\C} (r-1)}e^{-\alpha(r-1)}r^{p-1}dr\nonumber\\
&\leq 2(\eta\ltwo{\C} + p)\Ici \int_{0}^{\infty}e^{-\left(\alpha - \left(\eta\ltwo{\C}+ p\right)\right) r}\leq e^{-2\teps}\Ici
\end{align}
where the last inequality follows from the setting of $\alpha$ we made in Algorithm~\ref{Alg:init-samp}. Since this is true for any differential cone as described above, this proves that $\A_{\sf init-samp}$ outputs $\htheta\in\C$ with probability at least $1/2$.

Next, let $\good$ denote the event that $\htheta\in\C$ and let $\hmug$ denote the conditional distribution of $\htheta$ conditioned on $\good$. Let $\muC$ denote the distribution induced by $F$ on $\C$, that is, $\frac{F}{\int_{\theta\in\C}F(\theta)d\theta}$. Observe that, for any measurable set  $\mU\subseteq\C$, $\hmug(\mU)=\frac{\hmuA(\mU)}{\hmuA(\C)}$. Now, since $\muC(\mU)=\frac{\muA(\mU)}{\muA(\C)}$, by Lemma~\ref{lem:samp-A}, we have $\dist\left(\hmug, \muC\right)\leq \teps$.

Finally, regarding the running time of $\A_{\sf init-samp}$, note that, as pointed out at the end of Section~\ref{sec:eff-pure-eps-alg}, we assume that $f$ is efficiently computable and that there exist a membership oracle (to efficiently test the membership of a point w.r.t. $\C$) and a projection oracle (to efficiently construct the convex Lipschitz extension $\bar{f}$). This enables us to efficiently implement the first four steps of $\A_{\sf init-samp}$. However, we do not take into account the extra polynomial factor in running time that is required to perform those steps since it would be highly dependent on the specific structure of $\C$. Thus, under this assumption, the running time of $\A_{\sf init-samp}$ is the same as the running time of $\A_{\sf cube-samp}$ with inputs $A, ~F, ~\eta+\alpha, $ and $\frac{\teps}{2}$. The expression in the lemma follows directly from the running time of $\A_{\sf cube-samp}$ in Lemma~\ref{lem:samp-A} and the fact that $\eta+\alpha=O\left(\max(\eta\ltwo{\C}, ~p)\right)$.
\end{proof}

Now, using a standard boosting approach, we construct an algorithm $\A_{\sf eff-samp}$ that outputs a sample $\theta\in\C$ with probability $1$ whose distribution is at multiplicative distance at most $\teps$ from the desired distibution on $\C$.

\begin{algorithm}[htb]
	\caption{$\A_{\sf eff-samp}$: Efficient Log-Concave Sampling over a Convex Set}
	\begin{algorithmic}[1]
		\REQUIRE A bounded convex set $\C$, a convex function $f$ defined over $\C$, Lipschitz constant $\eta$ of $f$, desired multiplicative distance guarantee $\teps$.
		\STATE Find $\tau=\linf{\C}$.
		\STATE Set $m = 4\eta\ltwo{\C} + p\log(\ltwo{\C}) + \log\left(\frac{1}{1-e^{-\frac{\teps}{4}}}\right)$.
       \FOR {$1\leq i\leq m~$}
		\STATE $\htheta = \A_{\sf init-samp}(\C, f, \eta, \frac{\teps}{4})$.
		\IF {$\htheta\in\C$}
			\STATE Output $\htheta$ and \textbf{abort}.
		\ENDIF
	\ENDFOR
	\STATE Output a unifromly random sample $\htheta$ from the unit ball $\B$. (Note that $\B\subseteq\C$ since $\C$ is in isotropic position.)\label{step7-Alg-eff-samp}
	\end{algorithmic}
	\label{Alg:eff-samp}
\end{algorithm}

\begin{lem}
Let $\hmuC$ denote the distribution of $\htheta$ (the output of Algorithm $\A_{\sf eff-samp}$) and $\muC$ denote the desired distribution $\frac{e^{-f}}{\int\limits_{\theta\in\C}e^{-f(\theta)}d\theta}$ on $\C$. We have $\dist(\hmuC, \muC)\leq\teps$. Moreover, the running time of $\A_{\sf eff-samp}$ is
$$O\left(\frac{\tilde\eta^2\tau^2}{\teps^2}p^3\cdot\max\left(p\log\left(\frac{\tilde\eta\tau p}{\teps}\right), \tilde\eta \tau\right)\cdot\max\left(p\log(\ltwo{\C}), \eta\ltwo{\C}, \log\left(\frac{1}{\teps}\right)\right)\right)$$
where  $\tilde\eta=\max(\eta\ltwo{\C}, p)$.
\label{lem:eff-logconc-samp}
\end{lem}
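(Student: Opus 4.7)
\medskip
\noindent\textbf{Proof proposal.} The plan is to view $\hmuC$ as a two-component mixture and then check the multiplicative closeness component by component. Let $q\geq 1/2$ denote the success probability of a single invocation of $\A_{\sf init-samp}$ with parameter $\teps/4$ (as guaranteed by Lemma~\ref{lem:init-samp}), and let $\mu_{\good}$ denote the conditional distribution of its output conditioned on lying in $\C$. By independence across the $m$ trials of Step~\ref{Alg:eff-samp}, every trial fails to return a point in $\C$ with probability $p_{\bad}\leq (1-q)^m\leq 2^{-m}$, and conditional on some trial succeeding, the returned sample has distribution $\mu_{\good}$ (memorylessness). Hence
\[
\hmuC(\theta)=(1-p_{\bad})\,\mu_{\good}(\theta)+p_{\bad}\,\mu_{\B}(\theta)\,\mathbf{1}[\theta\in\B],
\]
where $\mu_{\B}$ is the uniform density $1/V_p$ on $\B$, and $\mu_{\good}$ satisfies $e^{-\teps/4}\muC(\theta)\leq\mu_{\good}(\theta)\leq e^{\teps/4}\muC(\theta)$ for every $\theta\in\C$ by Lemma~\ref{lem:init-samp}.

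The main step is to bound the perturbation $p_{\bad}\mu_{\B}(\theta)$ relative to $\muC(\theta)$ on $\B$; outside $\B$ only the $\mu_{\good}$ term appears and closeness is immediate from Lemma~\ref{lem:init-samp}. Using that $f$ is $\eta$-Lipschitz on $\C$ and $\mathbf{0}\in\B\subseteq\C$, for every $\theta\in\B$ one has $f(\theta)\leq f(\mathbf 0)+\eta$ and $\inf_\C f\geq f(\mathbf 0)-\eta\ltwo{\C}$, so together with $\vol(\C)\leq \ltwo{\C}^{p}V_p$ I would derive the pointwise lower bound
\[
\muC(\theta)\ \geq\ \frac{e^{-2\eta\ltwo{\C}}}{\ltwo{\C}^{p}V_p},\qquad\theta\in\B.
\]
With the algorithm's choice $m=4\eta\ltwo{\C}+p\log(\ltwo{\C})+\log(1/(1-e^{-\teps/4}))$, one gets $p_{\bad}/V_p\leq (1-e^{-\teps/4})\,e^{-2\eta\ltwo{\C}}/(\ltwo{\C}^p V_p)\leq (e^{\teps}-e^{\teps/4})\,\muC(\theta)$ for $\theta\in\B$ (using $2^{4}>e^{2}$ in the exponential factor). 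Plugging back,
\[
\hmuC(\theta)\ \leq\ e^{\teps/4}\muC(\theta)+p_{\bad}/V_p\ \leq\ e^{\teps}\muC(\theta),
\]
which gives the upper bound. For the lower bound, drop the nonnegative $p_{\bad}\mu_{\B}$ term and obtain $\hmuC(\theta)\geq (1-p_{\bad})e^{-\teps/4}\muC(\theta)\geq e^{-\teps}\muC(\theta)$, since the chosen $m$ makes $p_{\bad}$ much smaller than $1-e^{-3\teps/4}$. Combining these two inequalities yields $\dist(\hmuC,\muC)\leq\teps$ as claimed.

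The running time bound is a straightforward product. Each of the at most $m$ iterations invokes $\A_{\sf init-samp}$ with inputs $(\C,f,\eta,\teps/4)$; by Lemma~\ref{lem:init-samp} this costs $O\bigl(\frac{\tilde\eta^{2}\tau^{2}}{\teps^{2}}p^{3}\max(p\log(\tilde\eta\tau p/\teps),\tilde\eta\tau)\bigr)$ with $\tilde\eta=\max(\eta\ltwo{\C},p)$. Multiplying by $m=O\bigl(\max(p\log\ltwo{\C},\,\eta\ltwo{\C},\,\log(1/\teps))\bigr)$ and absorbing the $\log(1/(1-e^{-\teps/4}))=O(\log(1/\teps))$ term produces exactly the running time asserted in the lemma.

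I expect the main obstacle to be the quantitative calibration of the $\teps/4$-accurate sampler and the number of trials $m$ so that the two sources of error --- the residual multiplicative error from $\mu_{\good}$ and the additive contamination $p_{\bad}\mu_{\B}$ on $\B$ --- can simultaneously be absorbed into a single relative-distance bound of $\teps$. This is precisely why the pointwise lower bound on $\muC$ over $\B$ (obtained from Lipschitzness of $f$ together with the volume bound $\vol(\C)\leq\ltwo{\C}^{p}V_p$) is the crucial ingredient, and why the choice of $m$ has the three additive terms that it does.
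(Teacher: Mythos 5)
Your proof is correct and follows essentially the same route as the paper's: decompose $\hmuC$ into the success-conditioned distribution (within $\dist$ at most $\teps/4$ of $\muC$ by Lemma~\ref{lem:init-samp}) plus a uniform-on-$\B$ fallback weighted by the failure probability, lower-bound $\muC$ pointwise via Lipschitzness of $f$ and $\vol(\C)\leq\ltwo{\C}^p\vol(\B)$, and use the choice of $m$ to absorb both error sources into a relative bound of $\teps$; the running time is the product $m\cdot T_{\A_{\sf init-samp}}$ exactly as in the paper. Your write-up is if anything slightly more careful than the paper's (you track $p_{\bad}\leq 2^{-m}$ rather than treating the failure probability as exactly $2^{-m}$, and you verify both directions of the ratio explicitly).
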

\begin{proof}
Let $\hmug$ denote the conditional distribution of $\htheta$ (the output of Algorithm~$\A_{\sf eff-samp}$) conditioned on the event that $\A_{\sf init-samp}$ outputs a sample in $\C$ in one of the $m$ iterations of the \textbf{for} loop.  From Lemma~\ref{lem:init-samp}, it is easy to see that the probability measure of the output of $\A_{\sf eff-samp}$ can be expressed as
$$d\hmuC(\htheta)=(1-2^{-m})d\hmug(\htheta)+2^{-m}\cdot\frac{1}{\sf Vol(\B)}\cdot\mathbf{1}(\htheta\in\B)$$
where $\mathbf{1}( . )$ is the standard indicator function, i.e., it takes value $1$ whenever $\htheta\in\B$ and zero otherwise. Also, from Lemma~\ref{lem:init-samp}, we know that $\dist(\hmug, \muC)\leq\frac{\teps}{4}$. Let $\mu^*$ denote the minimum value of the density function $\frac{e^{-f(\theta)}}{\int\limits_{u\in\C}e^{-f(u)}du}$ for $\theta\in\C$. By the Lipschitz property of $f$, we have
$$\mu^*\geq \frac{e^{-2\eta\tau}}{{\sf Vol}(\C)}=\frac{1}{\sf Vol(\B)}\frac{\sf Vol(\B)}{\sf Vol(\C)} e^{-2\eta\ltwo{\C}}=\frac{1}{\sf Vol(\B)}\frac{e^{-2\eta\ltwo{\C}}}{\ltwo{\C}^p}.$$
Hence, our choice of $m$  guarantees that
$$\frac{2^{-m}}{\mu^*{\sf Vol(\B)}}\leq e^{\frac{\teps}{2}}\left(e^{\frac{\teps}{2}}-1\right).$$
It also guarantees that $(1-2^{-m})\geq e^{-\frac{\teps}{4}}$. Putting this together, we get
$$e^{\dist(\hmuC,~ \muC)}\leq e^{\frac{\teps}{4}}~e^{\dist(\hmug, ~\muC)}+\frac{2^{-m}}{\mu^*{\sf Vol(\B)}}\leq e^{\teps}.$$

The running time of $\A_{\sf eff-samp}$ is at most $O(m \cdot T_{\A_{\sf init-samp}})$ where $T_{\A_{\sf init-samp}}$ is the running time of $\A_{\sf init-samp}$ (which is of the same order as that of $\A_{\sf cube-samp}$ given in Lemma~\ref{lem:samp-A}). Note that Step~\ref{step7-Alg-eff-samp} can be carried out in linear time using standard methods in literature. Finally, by plugging in our choice for the value of $m$ gives the expression in the lemma statement. This completes the proof.
\end{proof}

\subsection{Efficient $\epsilon$-Differentially Private Algorithm for Lipschitz Convex Loss}

In this section, we show a straightforward construction for our efficient Algorithm $\A_{\sf eff-exp-samp}$ (referred to in  Section~\ref{sec:eff-pure-eps-alg}) based on the construction established above for efficient logconcave sampling. Based on the results established above in this section, we give a proof of Theorem~\ref{thm:eff-samp-guarantees} which will also be fairly straightforward. First, fix a dataset $\D$. Our goal is to construct an efficient version of Algorithm $\A_{\sf exp-samp}$ (Algorithm~\ref{Alg:GenSamp} from Section~\ref{sec:lipschitzConvex}). To do this, we simply run Algorithm  $\A_{\sf eff-samp}$ (Algorithm~\ref{Alg:eff-samp} above) with the function $f$ instantiated with the scaled decomposable loss function $\frac{\epsilon}{6L\ltwo{\C}}\eL( . ;\D)$ defined over the convex bounded set $\C$ (which is assumed to be in isotropic position as discussed in Section~\ref{sec:eff-pure-eps-alg}). Hence, $\eta$ in our case is $\frac{n\epsilon}{6\ltwo{\C}}$. Namely, as shown below, our $\epsilon$-differentially private algorithm $\A_{\sf eff-exp-samp}$ is an instantiation of $\A_{\sf eff-samp}$ on inputs $\C,~ \frac{\epsilon}{6L\ltwo{\C}}\eL( . ;\D),~ \frac{n\epsilon}{6\ltwo{\C}},$ and $\frac{\epsilon}{3}$.

\begin{algorithm}[htb]
	\caption{$\A_{\sf eff-exp-samp}$: Efficient Log-Concave Sampling over a Convex Set}
	\begin{algorithmic}[1]
		\REQUIRE A dataset $\D$ of size $n$, a bounded convex set $\C$, convex $L-$Lipschitz loss function $\ell$, and a privacy parameter $\epsilon$.
		\STATE $\empL(\theta;\D)=\sum\limits_{i=1}^n\ell(\theta;d_i)$.
		\STATE Output $\privtheta=\A_{\sf eff-samp}\left(\C,~\frac{\epsilon}{6L\ltwo{\C}}\eL( . ;\D),~ \frac{n\epsilon}{6\ltwo{\C}},~\frac{\epsilon}{3}\right)$.
\end{algorithmic}
	\label{Alg:eff-exp-samp}
\end{algorithm}

The choice of the scaling factor $\frac{\epsilon}{6L\ltwo{\C}}$ of the loss function and the multiplicative distance guarantee of $\frac{\epsilon}{3}$ is tuned to yield an $\epsilon$-differentially private algorithm. To see this, we rely on the following simple lemma given in \cite{HT09}.

\begin{lem}[follows from Lemma~A.1 in \cite{HT09}]
Let $\epsilon, \teps >0$. Let $\Q\subseteq\re^p$. For every dataset $\D$, let $\mu^{\D}$ denote the distribution (over $\Q$) of the output of an $\epsilon$-differentially private algorithm $\A_1$ when run on the input dataset $\D$, and $\hat{\mu}^{\D}$ be the distribution (over $\Q$) of the output of some algorithm $\A_2$ when run on $\D$. Suppose that $\dist(\hat{\mu}^{\D}, \mu^{\D})\leq \teps$ for all $\D$. Then, $\A_2$ is $(2\teps+\epsilon)$-differentially private.
\label{lem:diff-priv-from-close-dist}
\end{lem}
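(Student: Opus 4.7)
The plan is to chain three pointwise density inequalities: two supplied by the multiplicative-distance hypothesis, and one supplied by the $\epsilon$-differential privacy of $\A_1$. Recall that the definition of $\dist$ in Section~\ref{sec:eff-pure-eps-alg} is a supremum of $\bigl|\log(d\hat{\mu}/d\mu)\bigr|$, so $\dist(\hat{\mu}^{\D},\mu^{\D})\leq \teps$ is equivalent to the pointwise (Radon-Nikodym) sandwich $e^{-\teps}\,\mu^{\D}(q) \leq \hat\mu^{\D}(q) \leq e^{\teps}\,\mu^{\D}(q)$ for all $q\in\Q$. Integrating this over any measurable set $S\subseteq\Q$ preserves the same multiplicative sandwich for set probabilities.

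With this in hand, I would fix an arbitrary pair of neighboring datasets $\D, \D'$ and an arbitrary measurable $S\subseteq\Q$, and chain the following three bounds in order: (i) $\hat{\mu}^{\D}(S) \leq e^{\teps}\,\mu^{\D}(S)$ by the hypothesis applied to $\D$; (ii) $\mu^{\D}(S) \leq e^{\epsilon}\,\mu^{\D'}(S)$ by the $\epsilon$-differential privacy of $\A_1$; (iii) $\mu^{\D'}(S) \leq e^{\teps}\,\hat{\mu}^{\D'}(S)$ by applying the hypothesis to $\D'$ in the other direction. Multiplying these three inequalities yields $\hat\mu^{\D}(S) \leq e^{2\teps+\epsilon}\,\hat\mu^{\D'}(S)$, which is exactly the definition of $(2\teps+\epsilon)$-differential privacy for $\A_2$.

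There is essentially no obstacle here: the proof is a one-line sandwich argument, and the only subtlety worth flagging is that one must invoke the hypothesis symmetrically, once as an upper bound on $\hat\mu^{\D}$ in terms of $\mu^{\D}$ and once as an upper bound on $\mu^{\D'}$ in terms of $\hat\mu^{\D'}$, both of which follow immediately from the symmetric definition of $\dist$. (If one were proving an $(\epsilon,\delta)$ analogue, the corresponding care would be needed with the additive $\delta$ term, but pure $\epsilon$-DP composes cleanly with multiplicative distance, producing exactly the additive loss $2\teps$ in the privacy parameter.)
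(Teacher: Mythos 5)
Your proof is correct and is exactly the standard sandwich argument that the paper implicitly relies on by citing Lemma~A.1 of \cite{HT09}: the paper gives no separate proof, and the chain $\hat\mu^{\D}(S)\leq e^{\teps}\mu^{\D}(S)\leq e^{\teps+\epsilon}\mu^{\D'}(S)\leq e^{2\teps+\epsilon}\hat\mu^{\D'}(S)$ is precisely the intended argument. The only implicit point worth noting is that finiteness of $\dist$ already guarantees the mutual absolute continuity needed to integrate the pointwise Radon--Nikodym bounds over an arbitrary measurable $S$, which you handle correctly.
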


\boldpar{Proof of Theorem~\ref{thm:eff-samp-guarantees}} Having Lemmas~\ref{lem:eff-logconc-samp} and \ref{lem:diff-priv-from-close-dist} in hand, the proof of Theorem~\ref{thm:eff-samp-guarantees} becomes straightforward. First, we show differential privacy of Algorithm~\ref{Alg:eff-exp-samp}. For any dataset $\D$, let $\mu^{\D}$  be the distribution of $\theta$ proportional to $e^{- \frac{\epsilon}{6L\ltwo{\C}}\eL(\theta ;\D)}$. Note that $\mu^{\D}$ is the distribution of the output of Algorithm $\A_{\sf exp-samp}$ (Algorithm~\ref{Alg:GenSamp} from Section~\ref{sec:lipschitzConvex}) when $\epsilon$ is replaced with $\frac{\epsilon}{3}$. Let $\hmu^{\D}$  be the distribution of the output $\privtheta$ of $\A_{\sf eff-exp-samp}$ (Algorithm~\ref{Alg:eff-exp-samp} above). From Lemma~\ref{lem:eff-logconc-samp}, it follows that $\dist\left(\hmu^{\D}, \mu^{\D}\right)\leq \frac{\epsilon}{3}$. Hence, from Theorem~\ref{thm:privVol} and Lemma~\ref{lem:diff-priv-from-close-dist}, we reach the fact that $\A_{\sf eff-exp-samp}$ is $\epsilon$-differentially private.

To show the utility guarantee of $\A_{\sf eff-exp-samp}$, we first observe that the distribution of the output $\privtheta$ is close with respect to $\dist$ to (i.e., within a constant factor of) the distribution of the output of $\A_{\sf exp-samp}$ (Algorithm~\ref{Alg:GenSamp} from Section~\ref{sec:lipschitzConvex}), and hence, the utility analysis follows the same lines of Theorem~\ref{thm:abcd2}.

Finally, observe that the running time of $\A_{\sf eff-exp-samp}$ is the same as the running time of $\A_{\sf eff-samp}$ with $\eta$ replaced with $\frac{\epsilon n}{6\ltwo{\C}}$ and $\teps$ replaced with $\frac{\epsilon}{3}$. Also observe that $\tau=\linf{\C}\leq\ltwo{\C}$ and, as a standard assumption, $n=\omega(p)$. Putting this together, we get the running time given in the statement of the theorem. This completes the proof of Theorem~\ref{thm:eff-samp-guarantees}.

\section*{Acknowledgments}

We are grateful to Santosh Vempala and Ravi Kannan for discussions
about efficient sampling algorithms for log-concave distributions over
convex bodies. In particular, Ravi suggested the idea of using a
penalty term to reduce from sampling over $\C$ to sampling over the cube.

\bibliographystyle{plainnat}
\bibliography{reference}
\appendix
\section{Straightforward Smoothing Does Not Yield Optimal Algorithms}
\label{sec:huberization}

In Section \ref{sec:SmoothFn} we saw that the objective perturbation algorithm \eqref{eq:a1Obj} of \cite{CMS11,KST12} already matches the optimal excess risk bounds  for Lipschitz, and Lipschitz and strongly convex functions when the loss function $\ell$ is twice-continuously differentiable with a bounded double derivative $\beta$. A natural question that arises, \emph{is it possible to smoothen out a non-smooth loss function by convolving with a smooth kernel (like the Gaussian kernel) or by Huberization, and still achieve the optimal excess risk bound?} In this section we look at a simple loss functions (the hinge loss) and a very popular Huberization method (quadratic smoothing) argue that there is an inherent cost due to smoothing which will not allow one to get the optimal excess risk bounds.

Consider the loss function $\ell(\theta;d)=(y-x\theta)^+$, where the data point $d=(x,y)$, and $x,y\in[-1,1]$ and $\theta\in\re$. Here the function $f(z)=(z)^+$ is equal to $z$ when $z>0$ and zero otherwise. Clearly $f(z)$ has a point of non-differentiability at zero. We can modify the function $f$, in the following way, to ensure that the resulting function $\hat f$ is smooth (or twice-continuously differentiable). Define $\hat f(z)=f(z)$, when $z<-h$ or when $z>h$. In the range $[-h,h]$, we set $\hat f(z)=\frac{z^2}{4h}+\frac{z}{2}+\frac{h}{4}$. It is not hard to verify that the function $\hat f(z)$ is twice-continuously differentiable everywhere. This form of smoothing is commonly called Huberization. Let the smoothed version of $\ell(\theta;d)$ be defined as $\hat \ell(\theta;d)=\hat f((y-x\theta))$ for $d=(x,y)$.

With the choice of loss function $\hat\ell$, the objective perturbation algorithm is as below. (The regularization coefficient is chosen to ensure that it is at least $\frac{\beta}{2\epsilon}$, where $\beta$ is the smoothness parameter of $\hat\ell$.):
\begin{equation}
\privtheta=\arg\min\limits_{\theta\in[-2,2]}\sum\limits_{i=1}^n\hat\ell(\theta;d_i)+\frac{\theta^2}{8\epsilon h}+b\theta
\label{eq:a1Obj1}
\end{equation}
In \eqref{eq:a1Obj1} the noise $b\sim\mathcal{N}(0,\frac{8\log(1/\delta)}{\epsilon^2})$. In the results to follow, we show that for any choice of the Huberization parameter $h$, there exists data sets of size $n$ from the domain above where the excess risk for objective perturbation will be provably worse than our results in this paper. We present the results for the $(\epsilon,\delta)$-differential privacy case, but the same conclusions hold for the pure $\epsilon$-differential privacy case.

\begin{thm}
  For every $h>0$, there exists $\D$ such the excess risk for the objective perturbation algorithm in \eqref{eq:a1Obj1} satisfies:
  $$\E\left[\empL(\privtheta; \D)-\empL(\theta^*; \D)\right] = \Omega\left(\min \left\{n, \max\{nh, \tfrac 1 h\}\right\}\right) = \Omega(\sqrt{n})\,.$$
  Here the loss function $\empL(\theta;\D)=\sum\limits_{i=1}^n\ell(\theta;d_i)$ (where $\D=\{d_1,\cdots,d_n\}$) and $\theta^*=\arg\min\limits_{\theta\in[-2,2]}\sum\limits_{i=1}^n\ell(\theta;d_i)$.
  \label{thm:hbr}
\end{thm}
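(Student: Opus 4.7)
The plan is to exhibit, for each $h > 0$, specific datasets on which the objective perturbation algorithm \eqref{eq:a1Obj1} incurs large expected excess risk. I will use two datasets, each producing a different lower bound; combining them via the identity $\min(n,\max\{a,b\}) = \max(\min(n,a),\min(n,b))$ gives the claim, and AM-GM ($\max(nh,1/h) \geq 2\sqrt{n}$) then produces the $\Omega(\sqrt{n})$ consequence. In each case, I would analyze the piecewise-smooth objective $g(\theta) = \sum_i \hat\ell(\theta;d_i) + \theta^2/(8\epsilon h) + b\theta$, locate its global minimizer by checking signs of $g'$ at the breakpoints $\theta = 1 \pm h$, and then bound the excess risk using convexity of the \emph{original} empirical loss.

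\emph{Dataset $\D_1$ (target $\Omega(\min(n,1/h))$, regularization-dominated regime).} Set $m = \min(n,\lceil 1/(8\epsilon h)\rceil)$ copies of $(x,y)=(1,1)$ and the remaining $n-m$ copies to $(0,0)$, which contribute zero loss uniformly. Then $\empL(\theta;\D_1) = m(1-\theta)^+$ with $\theta^*=1$. In the linear piece $\theta < 1-h$, $g'(\theta) = -m + \theta/(4\epsilon h) + b$ vanishes at $\theta_L = 4\epsilon h(m-b)$. For $m = \lceil 1/(8\epsilon h)\rceil$ one gets $\theta_L \approx 1/2 < 1-h$ (for $h<1/2$); for $m=n$ and $h \leq 1/(8\epsilon n)$ one similarly obtains $\theta_L \leq 1/2$. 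Strong convexity of $g$ and a positive sign of $g'(1)$ (from the dominating regularizer) confirm $\privtheta = \theta_L$ is the unique global minimizer. The excess risk equals $m(1-\theta_L)^+ = m(1/2 + 4\epsilon h b)^+$; restricting to the event $b\geq 0$ (probability $1/2$) yields $\E[\empL(\privtheta;\D_1) - \empL(\theta^*;\D_1)] \geq m/4$, which is $\Omega(1/h)$ when $m \approx 1/(8\epsilon h)$ and $\Omega(n)$ when $m = n$.

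\emph{Dataset $\D_2$ (target $\Omega(\min(n,nh))$, smoothing-dominated regime).} Take $\lfloor 3n/4\rfloor$ copies of $(1,1)$ and $\lceil n/4\rceil$ copies of $(-1,-1)$, so that $\empL(\theta;\D_2) = (3n/4)(1-\theta)^+ + (n/4)(\theta-1)^+$ with unique minimizer $\theta^*=1$. In the interior region $\theta \in [1-h,1+h]$, the quadratic parts of the two Huberized losses add and the linear parts partially cancel, yielding $\hat{\mathcal{L}}(\theta) = n(\theta-1)^2/(4h) + n(1-\theta)/4 + nh/4$. Solving $g'(\theta)=0$ then gives $\privtheta = (2\epsilon n + \epsilon n h - 4\epsilon h b)/(2\epsilon n + 1) \approx 1 + h/2 - 2hb/n$ for $n\epsilon$ large, and sign checks of $g'$ at $1\pm h$ confirm this is the global minimum whenever $h = \Omega(1/(\epsilon n))$. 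Convexity of $\empL(\cdot;\D_2)$ together with Jensen's inequality give $\E[\empL(\privtheta;\D_2)] \geq \empL(\E[\privtheta];\D_2) = (n/4) \cdot (\epsilon n h - 1)/(2\epsilon n + 1) = \Omega(nh)$ in this regime. For the residual range of very small $h$ where the minimizer of $g$ migrates into the left piece $\theta < 1-h$, the same dataset still yields $\Omega(n)$ by a computation parallel to $\D_1$, with the role of $m$ played by the effective slope $3n/4$.

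The main obstacle will be the careful case analysis required to pin down the global minimizer of the piecewise objective: matching $g'$ across each breakpoint, invoking convexity of $g$, and verifying that the Gaussian noise $b$ (of standard deviation $O(\sqrt{\log(1/\delta)}/\epsilon)$) does not move $\privtheta$ into a different piece. These checks are conceptually simple but require attention to constants, especially at the transitions $h \sim 1/(\epsilon n)$, $h \sim 1/\sqrt{n}$, and $h \sim 1/2$. Once the minimizer is located in each regime, the expected excess-risk bound follows cleanly from Jensen's inequality applied to the (original, non-smoothed) convex loss $\empL$, which neatly converts the random location of $\privtheta$ into a deterministic lower bound on the average excess risk.
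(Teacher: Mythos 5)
Your proposal is correct and follows essentially the same route as the paper: two hard instances, one where the regularizer $\theta^2/(8\epsilon h)$ overwhelms a data-loss slope of order $1/(\epsilon h)$ and pulls $\privtheta$ a constant distance from $\theta^*$ (giving $\Omega(\min(n,1/h))$), and one where an asymmetric kink is displaced by $\Theta(h)$ under Huberization against a slope of order $n$ (giving $\Omega(\min(n,nh))$), combined via the max and AM--GM; the paper's $\D_2$ and $\D_1$ play exactly these two roles. The only point to tighten is the Jensen step for your second dataset: the affine formula for $\privtheta(b)$ holds only when $b$ keeps the minimizer inside the quadratic piece, so you should instead condition on the high-probability event $|b|=O(\sqrt{\log(1/\delta)}/\epsilon)$ and argue $\privtheta-1=\Omega(h)$ deterministically on that event (which is how the paper handles the noise).
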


\begin{proof}

Consider the data set $\D_1$ with $\frac{n}{3}$ entries being $(x=-1,y=1)$ and $\frac{2n}{3}$ entries being $(x=1,y=-1)$. In the following lemma we lower bound the excess risk on $\D_1$ for a given huberization parameter $h$.

\begin{lem}
Let $\epsilon,\delta$ be the privacy parameters with $\epsilon$ being a constant ($< 1$) and $\delta=\Omega\left(\frac{1}{n^{4}}\right)$. For the data set $\D_1$ mentioned above, the excess risk for objective perturbation \eqref{eq:a1Obj1} is as follows. For all $h>0$, we have $$\E\left[\empL(\privtheta; \D_1)-\empL(\theta; \D_1)\right]=\Omega\left(n\cdot\min\{1,h\}\right).$$
Here the loss function $\empL(\theta;\D_1)=\sum\limits_{i=1}^n\ell(\theta;d_i)$ (where $\D_1=\{d_1,\cdots,d_n\}$) and $\theta^*=\arg\min\limits_{\theta\in[-2,2]}\sum\limits_{i=1}^n\ell(\theta;d_i)$.
\label{thm:huber2}
\end{lem}
\begin{proof}
For the ease of notation, let $\hat\empL(\theta;\D_1)=\sum\limits_{i=1}^n\hat\ell(\theta;d_i)$. First notice two properties of $\hat\empL$: i) the minimizer (call it $\hat\theta$ within the set $[-2,2]$ is at $\max\left\{-2,1-h\right\}$, and ii) $\hat\empL$ is quadratic within the range $[1-h,1+h]$ with strong convexity parameter at least $\frac{n}{6h}$. Additionally notice that $\theta^*=1$ and the regularizer $\frac{\theta^2}{8\epsilon h}$ in \eqref{eq:a1Obj1} is centered at zero. Also by Markov's inequality, w.p. $\geq 2/3$, we have $|b|\leq \frac{8\sqrt{\log(1/\delta)}}{\epsilon}$. Now to satisfy optimality, $\frac{n|\privtheta-\hat\theta|}{3h}\leq |b|$. This suggests that $|\privtheta-\hat\theta|\leq \frac{3h |b|}{n}$.Therefore, the difference $\left(\theta^*-\privtheta\right)$ is at least $\min\left\{1,h\left(1-\frac{3|b|}{n}\right)\right\}$. Therefore the excess risk with probability at least $2/3$ is $\Omega\left(n\cdot\min\{1,h\}\right)$, which concludes the proof.

\end{proof}

Consider a data set $\D_2$ which has exactly $\max\{\frac{n}{2}-\frac{1}{32h},0\}$ entries with $(x=-1,y=1)$ and $\min\{\frac{n}{2}+\frac{1}{32h},n\}$ entries with $(x=1,y=1)$. In the following lemma we lower bound the excess risk on $\D_2$ for a given huberization parameter $h$.

\begin{lem}
Let $\epsilon,\delta$ be the privacy parameters with $\epsilon$ being a constant ($<1$) and $\delta=\Omega\left(\frac{1}{n^{4}}\right)$. Let $h<\frac{1}{\log n}$ be a fixed Huberization parameter. Then for the data set $\D_2$ mentioned above, the excess risk for objective perturbation \eqref{eq:a1Obj1} is as follows.
$$\E\left[\empL(\privtheta; \D_2)-\empL(\theta^*; \D_2)\right]=\Omega\left(\min\left\{\frac{1}{h},n\right\}\right).$$
Here the loss function $\empL(\theta;\D_2)=\sum\limits_{i=1}^n\ell(\theta;d_i)$ and $\theta^*=\arg\min\limits_{\theta\in[-2,2]}\sum\limits_{i=1}^n\ell(\theta;d_i)$.
\label{thm:huber1}
\end{lem}

\begin{proof}
For the ease of notation, let $\hat\empL(\theta;\D_2)=\sum\limits_{i=1}^n\hat\ell(\theta;d_i)$. Notice that within the range $[-1+h,1-h]$, the slope of $\hat\empL(\theta;\D_2)$ is $\max\{\frac{-1}{16h},-n\}$. By the optimality condition of $\privtheta$, we have the following. 
\begin{equation}
\frac{\privtheta}{4\epsilon h}+b-\min\{\frac{1}{16h},n\}=0
\label{eq:smo112}
\end{equation}
Solving for $\privtheta$, we have $\privtheta=\min\left\{\frac{\epsilon}{4},4\epsilon n h\right\}+4b\epsilon h$. By assumption $h< 1/\log n$ and w.p. $\geq 2/3$ we have $|b|\leq \frac{8\sqrt{\log(1/\delta)}}{\epsilon}$. Therefore, w.p. $\geq 2/3$, we have $\privtheta\leq \epsilon$.

Now notice that with the original loss function $\ell$,
$\arg\min\limits_{\theta\in[-2,2]}\sum\limits_{i=1}^n\ell(\theta;d_i)=1$. Since the loss function $\empL(\theta;\D_2)$ has a slope of $\max\{\frac{-1}{16h},-n\}$ in the range $[-1,1]$, the excess risk is $\Omega((1-\epsilon)\min\left\{\frac{1}{h},n\right\})$ which concludes the proof.
\end{proof}
Finally combining Lemmas \ref{thm:huber1} and \ref{thm:huber2} completes the proof of Theorem \ref{thm:hbr}.
\end{proof}

\section{Localization and $(\epsilon, \delta)$-Differentially Private Algorithms for Lipschitz, Strongly Convex Loss}
\label{sec:localization-eps-delta}

We use slightly different version of $\A^{\epsilon}_{\sf out-pert}$ (Algorithm~\ref{Alg:OutPert}) which we denote by $\A^{(\epsilon,\delta)}_{\sf out-pert}$ where the algorithm takes as input an extra privacy parameter $\delta$, it samples the noise vector $b$ from the Gaussian distribution $\mathcal{N}\left(0,\mathbb{I}_{p}\sigma_0^2\right)$ where $\sigma_0^2=4\frac{L^2\log\left(\frac{1}{\delta}\right)}{\Delta^2\epsilon^2 n^2}$, and outputs $\C_0=\{\theta\in\C: \ltwo{\theta-\theta_0}\leq \zeta\sigma_0\sqrt{p}\}$.

Let $\A^{(\epsilon,\delta)}_{\sf gen-Lip}$ denote any generic $(\epsilon,\delta)$-differentially private algorithm for optimizing a decomposable loss of  convex Lipschitz functions over some arbitrary convex set $\tilde{\C}\subseteq\C$. Algorithm~\ref{Algo:GradDesc} from Section~\ref{sec:gradDesc} is an example of $\A^{(\epsilon,\delta)}_{\sf gen-Lip}$. Now, we construct an algorithm $\A^{(\epsilon,\delta)}_{\sf gen-str-convex}$ which is the $(\epsilon,\delta)$ analog of $\A^{\epsilon}_{\sf gen-str-convex}$ (Algorithm~\ref{Alg:eps-gen-st-cnvx}). Namely,  $\A^{(\epsilon,\delta)}_{\sf gen-str-convex}$ runs in similar fashion to $\A^{\epsilon}_{\sf gen-str-convex}$ where the only difference is that it takes an extra privacy parameter $\delta$ as input and calls algorithms  $\A^{(\frac{\epsilon}{2},\frac{\delta}{2})}_{\sf out-pert}$ and $\A^{(\frac{\epsilon}{2},\frac{\delta}{2})}_{\sf gen-Lip}$ instead of $\A^{\frac{\epsilon}{2}}_{\sf out-pert}$ and $\A^{\frac{\epsilon}{2}}_{\sf gen-Lip}$, respectively.

\begin{thm}[Privacy guarantee]
Algorithm $\A^{(\epsilon,\delta)}_{\sf gen-str-convex}$ is $(\epsilon, \delta)$-differentially private.
\label{thm:priv_eps_delta_Gen_st_cnvx}
\end{thm}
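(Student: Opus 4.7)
The plan is to mirror the proof of Theorem~\ref{thm:priv_eps_Gen_st_cnvx}, replacing the $\epsilon$-DP composition with $(\epsilon,\delta)$-DP composition and replacing the Laplace-based output perturbation analysis with a Gaussian-based analysis. Concretely, $\A^{(\epsilon,\delta)}_{\sf gen-str-convex}$ is a two-stage adaptive mechanism: the first stage runs $\A^{(\epsilon/2,\delta/2)}_{\sf out-pert}$ on the data to produce the localized set $\C_0$, and the second stage runs $\A^{(\epsilon/2,\delta/2)}_{\sf gen-Lip}$ on the data using $\C_0$ as its input constraint set. So it suffices to (i) argue that each stage is $(\epsilon/2,\delta/2)$-DP as a function of the data, and (ii) apply the standard adaptive composition theorem for $(\epsilon,\delta)$-DP.

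For (i), the privacy of the second stage is by assumption: $\A^{(\epsilon/2,\delta/2)}_{\sf gen-Lip}$ is $(\epsilon/2,\delta/2)$-DP for any choice of bounded convex input set $\tilde\C$, and this continues to hold when the set $\C_0$ is itself derived from the data, by the post-processing-plus-adaptive-composition paradigm (the release of $\C_0$ is already accounted for in the first stage's privacy cost, and the second algorithm's privacy guarantee is uniform in the input set). For the first stage, I would invoke the sensitivity bound from \cite{CMS11,Rubinstein09}: when each $\ell(\cdot;d_i)$ is $L$-Lipschitz and $\Delta$-strongly convex, $\eL(\cdot;\D)$ is $n\Delta$-strongly convex, and hence the $L_2$-sensitivity of the minimizer $\theta^*=\arg\min_{\theta\in\C}\eL(\theta;\D)$ is at most $2L/(n\Delta)$. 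The projection $\Pi_\C$ is post-processing and does not affect privacy. Then the Gaussian mechanism with variance $\sigma_0^2 = 4L^2\log(1/\delta)/(\Delta^2\epsilon^2 n^2)$ (calibrated exactly so that $\sigma_0 \gtrsim \tfrac{2L}{n\Delta}\cdot\tfrac{\sqrt{2\log(2.5/\delta)}}{\epsilon/2}$, up to constants) gives $(\epsilon/2,\delta/2)$-differential privacy of $\theta_0=\Pi_\C(\theta^*+b)$. The output $\C_0$ is a deterministic function of $\theta_0$ (it is the intersection of $\C$ with a ball of fixed radius around $\theta_0$), so releasing $\C_0$ is also $(\epsilon/2,\delta/2)$-DP by post-processing.

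For (ii), I apply the basic (non-strong) adaptive composition theorem of \citet{DMNS06,DKMMN06}: if $\A_1$ is $(\epsilon_1,\delta_1)$-DP and, for every output $s$ of $\A_1$, the mechanism $\A_2(\cdot;s)$ is $(\epsilon_2,\delta_2)$-DP in its data argument, then the composed mechanism $\D\mapsto \A_2(\D;\A_1(\D))$ is $(\epsilon_1+\epsilon_2,\delta_1+\delta_2)$-DP. With $\epsilon_1=\epsilon_2=\epsilon/2$ and $\delta_1=\delta_2=\delta/2$, this yields $(\epsilon,\delta)$-differential privacy of $\A^{(\epsilon,\delta)}_{\sf gen-str-convex}$, as claimed.

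The only nontrivial step is the first part of (i), verifying the sensitivity bound for the strongly convex minimizer and checking that the specified variance $\sigma_0^2$ indeed satisfies the Gaussian-mechanism calibration condition for privacy parameters $(\epsilon/2,\delta/2)$; both are essentially bookkeeping given the sensitivity lemma from \cite{CMS11}. Everything else is an off-the-shelf composition argument, entirely parallel to the proof of Theorem~\ref{thm:priv_eps_Gen_st_cnvx}.
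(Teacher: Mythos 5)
Your proposal is correct and follows essentially the same route as the paper: the paper's proof is a one-line appeal to basic adaptive composition of the two $(\epsilon/2,\delta/2)$-DP stages, with the privacy of $\A^{(\epsilon/2,\delta/2)}_{\sf out-pert}$ asserted via the sensitivity bound of \cite{CMS11} and the privacy of $\A^{(\epsilon/2,\delta/2)}_{\sf gen-Lip}$ taken by assumption. Your write-up simply fills in the sensitivity and Gaussian-mechanism calibration details that the paper leaves implicit.
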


\begin{proof}
The privacy guarantee follows directly from the composition theorem together with the fact that $\A^{(\frac{\epsilon}{2},\frac{\delta}{2})}_{\sf out-pert}$ is $(\frac{\epsilon}{2},\frac{\delta}{2})$-differentially private and that $\A^{(\frac{\epsilon}{2},\frac{\delta}{2})}_{\sf gen-Lip}$ is $(\frac{\epsilon}{2},\frac{\delta}{2})$-differentially private by assumption.
 \end{proof}

\begin{thm}[Generic utility guarantee]
Let $\htheta$ denote the output of Algorithm $\A^{(\epsilon,\delta)}_{\sf gen-Lip}$ on inputs $n, \D, \ell, \epsilon, \delta, \tilde{\C}$ (for an arbitrary convex set $\tilde{\C}\subseteq\C$). Let $\nptheta$ denote the minimizer of $\empL(. ; \D)$ over $\tilde{\C}$. If
$$E\left[\empL(\htheta; \D)-\empL(\nptheta; \D)\right]\leq F\left(p, n, \epsilon, \delta, L,\ltwo{\tilde{\C}}\right)$$
for some function $F$, then the output $\privtheta$ of $\A^{(\epsilon,\delta)}_{\sf gen-str-convex}$  satisfies
$$E\left[\empL(\privtheta; \D)-\empL(\theta^*; \D)\right]\leq O\left(F\left(p, n, \epsilon, \delta, L, O\left(\frac{L\sqrt{p\log\left(\frac{1}{\delta}\right)\log(n)}}{\Delta\epsilon n}\right)\right)\right).$$
\label{thm:utility_eps_delta_Gen_st_cnvx}
\end{thm}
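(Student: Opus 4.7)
The plan is to mirror the proof of Lemma~\ref{thm:utility_eps_Gen_st_cnvx}, replacing the Gamma-distribution tail bound for Laplace noise by a Gaussian concentration bound. Concretely, in $\A^{(\frac{\epsilon}{2},\frac{\delta}{2})}_{\sf out-pert}$ the perturbation is $b\sim\mathcal{N}(0,\mathbb{I}_p\sigma_0^2)$ with $\sigma_0=O\!\left(\frac{L\sqrt{\log(1/\delta)}}{\Delta\epsilon n}\right)$, and $\C_0$ is the ball of radius $\zeta\sigma_0\sqrt{p}$ around $\theta_0=\Pi_\C(\theta^*+b)$. The ``localization'' event I care about is $\{\theta^*\in\C_0\}$, which holds whenever $\ltwo{b}\leq\zeta\sigma_0\sqrt{p}$, since then $\ltwo{\theta^*-\theta_0}\leq\ltwo{b}$ by non-expansiveness of projection onto $\C$.

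First I would apply a standard Gaussian norm tail bound: for $b\sim\mathcal{N}(0,\mathbb{I}_p\sigma_0^2)$, $\Pr\!\left[\ltwo{b}>\sigma_0(\sqrt{p}+t)\right]\leq e^{-t^2/2}$. Choosing $\zeta=\Theta(\sqrt{\log n})$ (say $\zeta=1+\sqrt{6\log n}/\sqrt{p}$) gives $\Pr[\theta^*\notin\C_0]\leq 1/n^3$, while $\ltwo{\C_0}=O(\zeta\sigma_0\sqrt{p})=O\!\left(\frac{L\sqrt{p\log(1/\delta)\log n}}{\Delta\epsilon n}\right)$, matching the diameter that appears inside $F$ in the theorem statement.

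Next I condition on the good event $\theta^*\in\C_0$. Under this event the constrained minimizer $\nptheta$ of $\empL(\cdot;\D)$ over $\C_0$ coincides with $\theta^*$, so the assumed excess-risk bound for $\A^{(\frac{\epsilon}{2},\frac{\delta}{2})}_{\sf gen-Lip}$ with $\tilde{\C}=\C_0$ gives
\[
\E\!\left[\empL(\privtheta;\D)-\empL(\theta^*;\D)\mid \theta^*\in\C_0\right]\leq F\!\left(p,n,\tfrac{\epsilon}{2},\tfrac{\delta}{2},L,\ltwo{\C_0}\right).
\]
On the bad event the excess risk is deterministically at most $2nL\ltwo{\C}$ by Lipschitzness and boundedness of $\C$, contributing at most $O(L\ltwo{\C}/n^2)$ to the unconditional expectation. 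Using the lower bound $F(\cdot,n,\cdot,\cdot,\cdot,\cdot)=\Omega(1/n)$ from Section~\ref{sec:lipschitzConvexLowerStrong}, this additive term is absorbed into $F$, yielding the claimed bound (absorbing constant factors like the $\epsilon/2,\delta/2$ split into the $O(\cdot)$).

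The only mildly delicate step is verifying that the bad-event contribution is indeed dominated by $F$ for the regime of parameters of interest; this is handled exactly as in the proof of Lemma~\ref{thm:utility_eps_Gen_st_cnvx} via the $\Omega(1/n)$ lower bound on $F$. No new ideas beyond swapping the Laplace/Gamma tail for a Gaussian tail and adjusting $\zeta$ from $3\log n$ to $O(\sqrt{\log n})$ are required.
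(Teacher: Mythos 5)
Your proposal is correct and follows essentially the same route as the paper: the paper's own proof simply invokes the argument of Lemma~\ref{thm:utility_eps_Gen_st_cnvx}, replacing the Gamma tail bound by the standard Gaussian norm concentration $\Pr[\ltwo{b}\leq\zeta\sigma_0\sqrt{p}]\geq 1-e^{-\Omega(\zeta^2)}$ and setting $\zeta=\sqrt{3\log n}$, exactly as you do (including the conditioning on $\theta^*\in\C_0$, the $nL\ltwo{\C}$ bound on the bad event, and absorbing that term via the $\Omega(1/n)$ lower bound on $F$).
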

\begin{proof}
The proof follows the same lines of the proof of Theorem~\ref{thm:utility_eps_Gen_st_cnvx}  except for the fact that, in Algorithm $\A^{(\frac{\epsilon}{2},\frac{\delta}{2})}_{\sf out-pert}$, the noise vector $b$ is Gaussian and hence using the standard bounds on the norm of an i.i.d. Gaussian vector, we have
$$\Pr\bigg[ \ltwo{b}\leq \zeta\sigma_0\sqrt{p} \bigg]=\Pr\left[\ltwo{b}\leq \zeta\frac{4L\sqrt{\log\left(\frac{2}{\delta}\right)}}{\Delta\epsilon n}\right]\geq 1-e^{-\Omega(\zeta^2)}$$
We set $\zeta=\sqrt{3\log(n)}$ and the rest of the proof follows in the same way as the proof of Theorem~\ref{thm:utility_eps_Gen_st_cnvx}.
\end{proof}

\section{Proof of Lemma~\ref{lem:1-way-marg-lower}}
\label{app:proof-lower-bounds-lem}

\subsection{Proof of Part~1}
We restate Part~1 of the lemma here for convenience.

\vspace{0.5cm}

\boldpar{Lemma~\ref{lem:1-way-marg-lower}- Part~1} \textit {Let $n, p\in\mathbb{N}$ and $\epsilon > 0$. There is a number $M=\Omega\left(\min\left( n, p/\epsilon\right)\right)$ such that for every $\eps$-differentially private algorithm $\A$, there is a dataset $\D=\{d_1, \ldots, d_n\} \subseteq \hypcnz^p$ with $\ltwo{\sum_{i=1}^nd_i}\in [M-1, M+1]$ such that, with probability at least $1/2$ (taken over the algorithm random coins), we have }
$$\ltwo{\A(\D)-q(\D)}=\Omega\left(\min\left(1, \frac{p}{\epsilon n}\right)\right)$$
\textit{where $q(\D)=\frac{1}{n}\sum_{i=1}^{n}d_i$.}

\begin{proof}
We use a standard packing argument. Variants of such argument have appeared in several places in literature, e.g., \cite{HT09} and \cite{De11}. We first construct $K=2^{p/2}$ points $d^{(1)}, ..., d^{(K)}$ in $\hypcnz^p$ such that for every distinct pair $d^{(i)}, d^{(j)}$ of these points, we have
$$\ltwo{d^{(i)}-d^{(j)}}\geq \frac{1}{8}.$$
It is easy to show the existence of such set of points using the probabilistic method (for example, the Gilbert-Varshamov construction of a linear random $(2^{p/2}, p)$-binary code over $\hypcnz$ achieves this property).

Fix $\eps>0$. Define $n^*=\frac{1}{20}\frac{p}{\eps}$. Let's first consider the case where $n\leq n^*$. We construct $K$ datasets $\D^{(1)}, ..., \D^{(K)}$ where for each $i\in[K]$, $\D^{(i)}$ contains n copies of $d^{(i)}$. Note that for all $i\neq j$,
\begin{equation}
\ltwo{q\left(\D^{(i)}\right)-q\left(\D^{(j)}\right)}\geq \frac{1}{8}\label{a}
\end{equation}
Let $\A$ be any $\eps$-differentially private algorithm for answering $q$. Suppose that for every $\D^{(i)}, i\in[K]$, with probability at least $1/2$, $\ltwo{\A\left(\D^{(i)}\right)-q\left(\D^{(i)}\right)}<\frac{1}{16}$, i.e., for every $ i\in[K]$, $\Pr\left[\A\left(\D^{(i)}\right)\in \Bc\left(\D^{(i)}\right)\right]\geq\frac{1}{2}$ where for any dataset $\D$, $\Bc(\D)$ is defined as
\begin{equation}
\Bc(\D)=\{\theta\in\re^p:~ \ltwo{\theta-q(\D)}<\frac{1}{16}\}\label{b}
\end{equation}
Note that for all $i\neq j$, $\D^{(i)}$ and $\D^{(j)}$ differ in all their $n$ entries. Since $\A$ is $\eps$-differentially private, for all $i\in[K]$, we have $\Pr\left[\A\left(\D^{(1)}\right)\in \Bc\left(\D^{(i)}\right)\right]\geq\frac{1}{2}e^{-\eps n}$. Since, by (\ref{a}) abd (\ref{b}), all $\Bc\left(\D^{(i)}\right),~i\in[K],$ are mutually disjoint, then
$$K\cdot\frac{1}{2}e^{-\eps n}\leq \sum_{i=1}^K \Pr\left[\A\left(\D^{(1)}\right)\in \Bc\left(\D^{(i)}\right)\right]\leq 1$$
which implies that $n>n^*$  for sufficiently large $p$ which is a contradiction to the fact that $n\leq n^*$. Hence, there must exist a dataset $\D^{(i)}$ for some $i\in[K]$ on which $\A$ makes an $L_2$-error which is at least $\frac{1}{16}$ with probability at least $\frac{1}{2}$. Note also that the $L_2$ norm of the sum of the entries of such $\D^{(i)}$ is $n$.

Next, we consider the case where $n>n^*$. Fix an arbitrary point $\mathbf{c}\in\hypcnz^p$. As before, we construct $K=2^{p/2}$ datasets $\tilde\D^{(1)}, ..., \tilde\D^{(K)}$ of size $n$ where for every $i\in[K]$, the first $n^*$ entries of each dataset $\tilde\D^{(i)}$ are the same as dataset $\D^{(i)}$ from before whereas the remaining $n-n^*$ entries are constructed as follows. The first $\lceil \frac{n-n^*}{2}\rceil$ of those entries are all copies of $\mathbf{c}$ whereas the last $\lfloor\frac{n-n^*}{2}\rfloor$ are copies of $-\mathbf{c}$. Note that any two distinct datasets $\tilde\D^{(i)}, \tilde\D^{(j)}$ in this collection differ in exactly $n^*$ entries. Let $\A$ be any $\eps$-differentially private algorithm for answering $q$. Suppose that for every $i\in[K]$, with probability at least $1/2$, we have
\begin{equation}
\ltwo{\A\left(\tilde\D^{(i)}\right)-q\left(\tilde\D^{(i)}\right)}<\frac{1}{16}\frac{n^*}{n}\label{c}
\end{equation}
Note that for all $i\in[K]$, $q\left(\tilde\D^{(i)}\right)=\frac{n^*}{n}q\left(\D^{(i)}\right)+\alpha$ where $\alpha=\frac{\mathbf{c}}{n}$ if $n-n^*$ is odd and $0$ if $n-n^*$ is even. Now, we define an algorithm $\hat\A$ for answering $q$ on datasets $\D$ of size $n^*$ as follows. First, $\hat\A$ appends $\lceil\frac{n-n^*}{2}\rceil$ copies of $\mathbf{c}$ followed by $\lfloor\frac{n-n^*}{2}\rfloor$ copies of $-\mathbf{c}$ to $\D$ to get a dataset $\tilde\D$ of size $n$. Then, it runs $\A$ on $\tilde\D$ and outputs $\frac{n}{n^*}\left(\A(\tilde\D)-\alpha\right)$. Hence, by the post-processing propertry of differential privacy, $\hat\A$ is $\eps$-differentially private since $\A$ is $\eps$-differentially private. Thus, assumption (\ref{c}) implies that for every $i\in[K]$, with probability at least $1/2$, we have $\ltwo{\hat\A\left(\D^{(i)}\right)-q\left(\D^{(i)}\right)}<\frac{1}{16}$. However, this contradicts our result in the first part of the proof. Therefore, there must exist a dataset $\tilde\D^{(i)}$ in the above collection such that, with probability at least $1/2$, $$\ltwo{\A\left(\tilde\D^{(i)}\right)-q\left(\tilde\D^{(i)}\right)}\geq\frac{1}{16}\frac{n^*}{n}=\frac{1}{320}\frac{p}{\eps n}.$$
Note also that the $L_2$ norm of the sum of entries of such $\tilde\D^{(i)}$ is always between $n^*-1$ and $n^*+1$.

Summing up, we have shown that for every $n$ and every $\eps>0$, there is a number $M=\Omega\left(\min(n, p/\eps)\right)$ such that for every $\eps$-differentially private algorithm $\A$, there exists a dataset $\D$ of size $n$ with the property that $\ltwo{\sum_{\ell=1}^n d_{\ell}}\in [M-1, M+1]$ such that, with probability at least $1/2$,
$$\ltwo{\A(\D)-q(\D)}\geq \frac{1}{16}\min\left(1,\frac{p}{20\eps n}\right)=\Omega\left(\min\left(1, \frac{p}{\eps n}\right)\right).$$
\end{proof}

\subsection{Proof of Part~2}
We restate below Part~2 of the lemma.

\vspace{0.5cm}

\boldpar{Lemma~\ref{lem:1-way-marg-lower}- Part~2} \textit{  Let $n, p\in\mathbb{N}$, $\epsilon>0$, and $\delta=o(\frac{1}{n})$. There is a number $M=\Omega\left(\min\left( n, \sqrt{p}/\epsilon\right)\right)$ such that for every $(\eps,\delta)$-differentially private algorithm $\A$, there is a dataset $\D=\{d_1, \ldots, d_n\} \subseteq \hypcnz^p$ with ${\small\ltwo{\sum_{i=1}^nd_i}\in[M-1, M+1]}$ such that, with probability at least $1/3$ (taken over the algorithm random coins), we have}
$$\ltwo{\A(\D)-q(\D)}=\Omega\left(\min\left(1, \frac{\sqrt{p}}{\epsilon n}\right)\right)$$
\textit{where $q(\D)=\frac{1}{n}\sum_{i=1}^{n}d_i$.}

\begin{proof}
Let $n\in\mathbb{N}$. Fix $\eps>0$ and $\delta=o(\frac{1}{n})$. Let $\A$ be any $(\eps, \delta)$-differentially private algorithm for answering $q$. Corollary~3.6 of \cite{BUV13} (together with Lemma~2.5 of the same reference) shows that there exists $n^*=\Omega(\frac{\sqrt{p}}{\eps})$ such that for every $n\leq n^*$, there exists a dataset $\D\subset\hypcnz^p$ of size n such that, with probability at least $1/3$, $\ltwo{\A(\D)-q(\D)}>\frac{2}{27}$. Note that to reach this statement from Corollary~3.6 of \cite{BUV13}, first, we have to translate their definition of $(\alpha, \beta)$-accuracy (given by Definition~2.2 in the same reference) to what this implies about the $L_2$-error which is fairly straightforward. Also, note that in their construction, the dataset entries are drawn from $\{0,1\}^p$ whereas here the entries come from $\hypcnz^p$, hence, we need to take this normalization into account when we translate their statement to our setting. Moreover, by a careful inspection of the construction in \cite{BUV13}\footnote{Their construction is based on robust $(n, p)$-fingerprinting codes (see Definition~3.3 in the same reference)}, one can show that such dataset whose existence was argued above has the property that the $L_2$-norm of the sum of its entries is $\Omega(n)$. (See Section~6 of \cite{BUV13} for details). This gives us the desired $\Omega(1)$ lower bound on the $L_2$-error when $n\leq n^*=\Omega(\frac{\sqrt{p}}{\eps})$.

To complete the proof, we need to consider the case where $n>n^*$. To do this, we follow the same argument of the second half of the proof of Part~1 above (the pure $\eps$ case). Namely, we proceed as folllows. First, let $\A$ be any $(\eps, \delta)$-differentially private algorithm (where $\delta=o(\frac{1}{n})$). We construct datasets $\tilde\D$ of size $n$ whose $n^*$ first entries are constructed in the same way datasets $\D$ in the first half of this proof were constructed, and the remaining $n-n^*$ entries are constructed such that half of them contain $\mathbf{c}$ and the other half conatin $-\mathbf{c}$ for some fixed $\mathbf{c}\in\hypcnz^p$. Then, we show that if for all such $\tilde\D$, with probability at least $2/3$, $\ltwo{\A(\tilde\D)-q(\tilde\D)}< \frac{2}{27}\frac{n^*}{n}$, then there is an $(\eps, \delta)$-differentially private algorithm $\hat\A$ such that for all $\D$ of size $n^*$ constructed earlier, we get $\ltwo{\hat\A({\D})-q(\D)}<\frac{2}{27}$ which contradicts the result of the first half of this proof. Hence, there is a dataset $\tilde\D$ of size $n$ (constructed as above) such that, with probability at least $1/3$, $\ltwo{\A(\tilde\D)-q(\tilde\D)}\geq \frac{2}{27}\frac{n^*}{n}=\Omega(\frac{\sqrt{p}}{\eps n})$. Note also that such $\tilde\D$ has the property that the $L_2$ norm of the sum of its entries lies in $[M-1, M+1]$ where $M$ is the $L_2$ norm of the sum of the first $n^*$ entries of $\tilde\D$ which is $\Omega(n^*)=\Omega(\frac{\sqrt{p}}{\eps})$ (following the argument of the first half of this proof).
\end{proof}

\section{Converting Excess Risk Bounds in Expectation to High-probability Bounds}\label{sec:expectatioHigh}

In this paper all of our utility guarantees are in terms of the expectation over the randomness of the algorithm. Although all the utility analysis except for the gradient descent based algorithm (Algorithm \ref{Algo:GradDesc}) provide high-probability guarantees directly, in this section we provide a generic approach for obtaining high-probability guarantee based on the expected risk bounds. The idea is to run the underlying differentially private algorithm $k$-times, with the privacy parameters $\epsilon/k$ and $\delta/k$ for each run. Let $\privtheta_1,\cdots\privtheta_k$ be the vectors output by the $k$-runs. First notice that the vector $\privtheta_1,\cdots,\privtheta_k$ is $(\epsilon,\delta)$-differentially private. Moreover if the algorithm has expected excess risk of $F(\epsilon,\delta)$ (where $F$ is the specific excess risk function of $\epsilon$ and $\delta$), then by Markov's inequality there exist an execution of the algorithm $i\in[k]$ for which the excess risk is $2F(\epsilon/k,\delta/k)$ with probability at least $1-1/2^k$.

One can now use the exponential mechanism from Algorithm \ref{Alg:GenSamp}, to pick the best $\privtheta_i$ from the list. By the same analysis of Theorem \ref{thm:abcd2}, one can show that with probability at least $1-\rho/2$, the exponential mechanism will output a vector $\privtheta$ that has excess risk of $\max\limits_i {\sf{Excess\_risk}}(\privtheta_i)-O\left(\frac{L\ltwo{\C}}{\epsilon}\log(k/\rho)\right)$. Setting $k=\log(2/\rho)$, we have that with probability at lest $1-\rho$, the excess risk for $\privtheta$ is at most $O(F(\frac{\epsilon}{\log(1/\rho)},\frac{\delta}{\log(1/\rho)}))$. Placing this bound in context of the paper, the high probability bounds are only a $\text{poly}\log(1/\rho)$ factor off from the expectation bounds.

\section{Excess Risk Bounds for Smooth Functions}
\label{sec:SmoothFn}

In this section we present the scenario where each of the loss function $\ell(\theta;d)$ (for all $d$ in the domain) is $\beta$-smooth in addition to being $L$-Lipschitz (for $\theta\in\C$). It turns out that both for $\epsilon$ and $(\epsilon,\delta)$-differential privacy, objective perturbation algorithm (see \eqref{eq:a1Obj}) \citep{CMS11,KST12} achieves the best possible error guarantees, where the random variable $b$ is either sampled i) from the Gamma distribution with the kernel $\propto e^{-\frac{\epsilon \ltwo{b}}{2L}}$. or ii) from the Normal distribution  $\mathcal{N}\left(0,\I_p\frac{8L^2\log(1/\delta)}{\epsilon^2}\right)$. In terms of privacy, when the noise vector $b$ is from Gamma distribution, the algorithm is $\epsilon$-differentially private. And when the noise is from Normal distribution, it is $(\epsilon,\delta)$-differentially private.
For completeness purposes, we also state the error bounds from \cite{KST12} (translated to the context of this paper).

\begin{equation}
\privtheta=\arg\min\limits_{\theta\in\C}\empL(\theta;\D)+\frac{\Delta}{2}\ltwo{\theta}^2+\ip{b}{\theta}
\label{eq:a1Obj}
\end{equation}

\begin{thm}[Lipschitz and smooth function] The excess risk bounds are as follows:
\begin{enumerate}
\itemsep 1pt
    \item \citep{CMS11} With Gamma density $\nu_1$, setting $\Delta=\Theta\left(\frac{L p}{\epsilon\ltwo{\C}}\right)$ and assuming $\Delta\geq\frac{\beta}{2\epsilon}$, we have\\
        $\E\left[\empL(\privtheta;\D)-\empL(\theta^*;\D)\right]=O\left(\frac{L \ltwo{\C}p}{\epsilon}\right)$.
   \item \citep{KST12} With Gaussian density, setting $\Delta=\Theta\left(\frac{\sqrt{L^2 p\log(1/\delta)}}{\epsilon\ltwo{\C}}\right)$ and assuming $\Delta\geq\frac{\beta}{2\epsilon}$, we have
        $\E\left[\empL(\privtheta;\D)-\empL(\theta^*;\D)\right]=O\left( \frac{L\ltwo{\C}\sqrt{p\ln(1/\delta)}}{\epsilon}\right)$.
\end{enumerate}

\label{thm:generrorz}
\end{thm}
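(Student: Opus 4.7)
The plan is to decompose the excess risk of $\privtheta$ against the unregularized minimizer $\theta^*$ through the intermediate quantity $\theta_\Delta^* = \arg\min_{\theta\in\C} ( \empL(\theta;\D) + \tfrac{\Delta}{2}\ltwo{\theta}^2 )$, then bound each piece in terms of the noise vector $b$. Let $G(\theta) = \empL(\theta;\D) + \tfrac{\Delta}{2}\ltwo{\theta}^2$ and $\tilde G(\theta) = G(\theta) + \ip{b}{\theta}$, so $\privtheta$ is the minimizer of $\tilde G$ over $\C$, and $\tilde G$ is $\Delta$-strongly convex. Note that smoothness of $\ell$ is only invoked to establish $(\eps,0)$- or $(\eps,\delta)$-differential privacy (which is cited from \cite{CMS11,KST12}); the utility argument itself does not need smoothness.

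First I would use strong convexity of $\tilde G$ at its minimizer: $\tilde G(\theta_\Delta^*) \geq \tilde G(\privtheta) + \tfrac{\Delta}{2}\ltwo{\privtheta - \theta_\Delta^*}^2$. Rearranging and completing the square on the $\ip{b}{\cdot}$ terms gives $G(\privtheta) - G(\theta_\Delta^*) \leq \ip{b}{\theta_\Delta^* - \privtheta} - \tfrac{\Delta}{2}\ltwo{\privtheta - \theta_\Delta^*}^2 \leq \tfrac{\ltwo{b}^2}{2\Delta}$. Since $\theta_\Delta^*$ minimizes $G$, $G(\theta_\Delta^*) \leq G(\theta^*)$, so the same bound controls $G(\privtheta) - G(\theta^*)$. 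Undoing the regularizer, $\empL(\privtheta;\D) - \empL(\theta^*;\D) = [G(\privtheta) - G(\theta^*)] + \tfrac{\Delta}{2}(\ltwo{\theta^*}^2 - \ltwo{\privtheta}^2) \leq \tfrac{\ltwo{b}^2}{2\Delta} + \tfrac{\Delta}{2}\ltwo{\C}^2$, using $\ltwo{\theta^*} \leq \ltwo{\C}$ (after translating $\C$ so $0 \in \C$ without loss of generality).

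Next I take expectation over $b$ and compute second moments. For the Gamma density $\propto e^{-\eps\ltwo{b}/(2L)}$ on $\re^p$, the radial distribution is $\mathrm{Gamma}(p, 2L/\eps)$, giving $\E[\ltwo{b}^2] = \tfrac{4L^2p(p+1)}{\eps^2} = O(L^2 p^2/\eps^2)$. For the Gaussian density with per-coordinate variance $8L^2\log(1/\delta)/\eps^2$, $\E[\ltwo{b}^2] = 8pL^2\log(1/\delta)/\eps^2$. Substituting yields excess-risk bounds of the form $\tfrac{c_1}{\Delta} + \tfrac{\Delta}{2}\ltwo{\C}^2$; balancing these by differentiating in $\Delta$ gives $\Delta = \Theta(Lp/(\eps\ltwo{\C}))$ with bound $O(L\ltwo{\C}p/\eps)$ in the Gamma case, and $\Delta = \Theta(L\sqrt{p\log(1/\delta)}/(\eps\ltwo{\C}))$ with bound $O(L\ltwo{\C}\sqrt{p\log(1/\delta)}/\eps)$ in the Gaussian case, matching the claimed bounds.

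There is no real obstacle here: the argument is a textbook strongly-convex perturbation analysis, and the only computational step is the moment calculation for the Gamma-distributed noise, which reduces to recognizing a Gamma normalizing constant after integrating $r^{p+1}e^{-\eps r/(2L)}$. The only thing to double-check is that the assumption $\Delta \geq \beta/(2\eps)$ required for privacy is consistent with the utility-optimal choice of $\Delta$ (it is, since $\beta/\eps$ is dominated by $Lp/(\eps\ltwo{\C})$ once $p$ is at least a constant, under the normalization $L = \ltwo{\C} = 1$ and $\beta = O(p)$ assumed in the table of bounds).
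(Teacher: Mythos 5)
Your argument is correct. The paper does not actually prove this theorem---it only restates the bounds from \citet{CMS11} and \citet{KST12} ``for completeness''---and your derivation is precisely the standard objective-perturbation utility analysis from those works: $\Delta$-strong convexity of $\tilde G$ at its constrained minimizer gives $G(\privtheta)-G(\theta_\Delta^*)\leq \ltwo{b}^2/(2\Delta)$, the regularizer costs an additional $\tfrac{\Delta}{2}\ltwo{\C}^2$, and the second-moment computations ($\E[\ltwo{b}^2]=O(L^2p^2/\eps^2)$ for the Gamma density, $8pL^2\log(1/\delta)/\eps^2$ for the Gaussian) together with balancing in $\Delta$ reproduce both stated bounds.
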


Additionally when the loss function $\ell(\theta;d)$ is $\Delta$-strongly convex (for $\theta\in\C$) for all $d$ in the domain with the condition that $\Delta\geq\frac{\beta}{2\epsilon}$, one can essentially recover the tight error guarantees for the $\epsilon$ and $(\epsilon,\delta)$ case of differential privacy respectively. The main observation is that for the privacy guarantee to be achieved one need not add the additional regularizer. Although not in its explicit form, a variant of this observation appears in the work of \cite{KST12}. We state the error guarantee from \cite[Theorem 31]{KST12} translated to our setting. Notice that unlike Theorem \ref{thm:generrorz}, the error guarantee in Theorem \ref{thm:generror1} does not depend on the diameter of the convex set $\C$.

\begin{thm}[Lipschitz, smooth and strongly convex function] The excess risk bounds are as follows:
\begin{enumerate}
\itemsep 1pt
    \item With Gamma density $\nu_1$, if $\Delta\geq\frac{\beta}{2\epsilon}$, we have
        $\E\left[\empL(\privtheta;\D)-\empL(\theta^*;\D)\right]=O\left(\frac{L^2 p^2}{n\Delta \epsilon^2}\right)$.
   \item  With Gaussian density, if $\Delta\geq\frac{\beta}{2\epsilon}$, we have
        $\E\left[\empL(\privtheta;\D)-\empL(\theta^*;\D)\right]=O\left( \frac{L^2p\ln(1/\delta)}{n\Delta\epsilon^2}\right)$.
\end{enumerate}
\label{thm:generror1}
\end{thm}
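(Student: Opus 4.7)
The plan is to separate the privacy and utility arguments, closely following the objective perturbation template of \cite{CMS11,KST12} but exploiting the intrinsic strong convexity of $\ell(\cdot;d)$ instead of an explicit regularizer.

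\textbf{Privacy.} The crucial observation is that in \eqref{eq:a1Obj}, the extra quadratic $\frac{\Delta}{2}\ltwo{\theta}^2$ is only used in the privacy proof to make the objective strongly convex (so that the minimizer moves smoothly with perturbations of the dataset) and to dominate a ``Hessian gap'' term of order $\beta/\epsilon$. Here $\empL(\cdot;\D)$ is already $n\Delta$-strongly convex, so we can drop the quadratic and instead verify the objective-perturbation sensitivity calculation using the $n\Delta$-strong-convexity of the unregularized $\empL$. Because of the condition $\Delta \geq \beta/(2\epsilon)$, the ``$\beta/\epsilon$ versus strong convexity'' comparison required in \cite[Thm.\ 2]{KST12} (and the analogous calculation in \cite{CMS11}) goes through, and the algorithm is $\epsilon$-DP in case~(1) and $(\epsilon,\delta)$-DP in case~(2).

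\textbf{Utility.} Let $F(\theta)=\empL(\theta;\D)+\ip{b}{\theta}$, so that $\privtheta=\arg\min_{\theta\in\C}F(\theta)$. Since $F$ is $n\Delta$-strongly convex and $\privtheta$ is its constrained minimizer, and $\theta^*$ is the constrained minimizer of $\empL(\cdot;\D)$ (whose first-order optimality gives $\ip{\grad \empL(\theta^*;\D)}{\privtheta-\theta^*}\geq 0$), two standard inequalities follow:
\begin{align*}
\empL(\privtheta;\D)-\empL(\theta^*;\D) &\geq \tfrac{n\Delta}{2}\ltwo{\privtheta-\theta^*}^2,\\
F(\privtheta)-F(\theta^*) &\leq 0 \;\;\Longrightarrow\;\; \empL(\privtheta;\D)-\empL(\theta^*;\D)\leq \ip{b}{\theta^*-\privtheta}\leq \ltwo{b}\,\ltwo{\privtheta-\theta^*}.
\end{align*}
Chaining these gives $\ltwo{\privtheta-\theta^*}\leq \tfrac{2\ltwo{b}}{n\Delta}$ and therefore
\[
\empL(\privtheta;\D)-\empL(\theta^*;\D)\;\leq\;\frac{2\ltwo{b}^2}{n\Delta}.
\]

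\textbf{Taking expectations.} It now remains to plug in $\E[\ltwo{b}^2]$ for the two noise distributions. For the Gamma density on $\re^p$ with kernel $\propto e^{-\epsilon\ltwo{b}/(2L)}$, the norm $\ltwo{b}$ is a $\Gamma(p,2L/\epsilon)$ random variable, giving $\E[\ltwo{b}^2]=O(L^2p^2/\epsilon^2)$ and hence the first bound. For the Gaussian $\mathcal{N}(0,\I_p\cdot 8L^2\log(1/\delta)/\epsilon^2)$, $\E[\ltwo{b}^2]=O(L^2p\log(1/\delta)/\epsilon^2)$, which yields the second bound.

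The main subtlety, and the step I would verify most carefully, is the privacy argument without the explicit regularizer: one needs to check that the Jacobian/change-of-variables step of \cite{CMS11,KST12}, which normally uses the added $\Delta\I_p$ term in the Hessian to bound the ratio of output densities, still works when that lower bound on the Hessian is provided by $n\Delta\I_p$ coming from the data-dependent loss itself. The assumption $\Delta\geq\beta/(2\epsilon)$ is precisely what is needed here.
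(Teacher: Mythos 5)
Your proposal is correct, and it supplies a self-contained argument where the paper gives none: the paper simply cites \cite{KST12} (Theorem~31) for this statement, so there is no in-paper proof to compare against. Your utility derivation is the standard one underlying that citation and is sound: strong convexity of $\empL(\cdot;\D)$ (which is $n\Delta$-strongly convex as a sum of $n$ terms) plus first-order optimality of $\theta^*$ gives the lower bound $\tfrac{n\Delta}{2}\ltwo{\privtheta-\theta^*}^2$, optimality of $\privtheta$ for the perturbed objective gives the upper bound $\ltwo{b}\ltwo{\privtheta-\theta^*}$, and chaining yields $2\ltwo{b}^2/(n\Delta)$; the moments of the Gamma and Gaussian noise then give exactly the two stated rates. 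One point worth making explicit: your analysis drops the $\tfrac{\Delta}{2}\ltwo{\theta}^2$ term that appears in the displayed algorithm \eqref{eq:a1Obj}, which is the right reading (the paper's surrounding text says the extra regularizer is not needed in the strongly convex case, and retaining it would introduce a bias term of order $\Delta\ltwo{\C}^2$ that can dominate the claimed bound). The privacy step you flag as the subtle one is indeed the content deferred to \cite{KST12}, and the theorem as stated only asserts the excess-risk bound, so your proof covers what is claimed.
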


\section{From Excess Empirical Risk to Generalization Error}
\label{app:generalization}

\asnote{Update title (and statements?)}
\rbnote{updated the title and revised the statements. Added a theorem for upper bounds for GLM}
In this section, provide a generic tool to interpret our ERM results in the context of generalization error (\emph{true} risk) bounds. For a given distribution $\tau$, let us define {true} risk for a model $\theta\in\C$ as follows.
\begin{equation}
{\sf TrueRisk}(\theta)=\E_{d\sim\tau}\left[\ell(\theta;d)\right].
\label{eq:trueRisk}
\end{equation}
Analogously, we define the excess risk for a given  a given model $\theta$ by $\erisk(\theta)$.
\begin{equation}
\erisk(\theta)=\E_{d\sim\tau}\left[\ell(\theta;d)\right]-\min\limits_{\theta\in\C}\E_{d\sim\tau}\left[\ell(\theta;d)\right].
\label{eq:excessRisk}
\end{equation}
Let $\D$ be a data set of $n$ data samples drawn i.i.d. from the distribution $\tau$. The following theorem from learning theory relates \emph{true} excess risk to excess empirical risk.
\begin{thm}[Section 5.4 from \cite{SSSS}]
Let $\ell$ be $L-$Lipschitz, $\Delta$-strong convex loss function. With probability at least $1-\gamma$ over the randomness of sampling the data set $\D$, the following is true.
$$\erisk(\theta)\leq \sqrt{\frac{2L^2}{n\Delta}}\sqrt{\empL(\theta;\D)-\min\limits_{\theta\in\C}\empL(\theta;\D)}+\frac{4L^2}{\gamma\Delta n}.$$
\label{thm:exRisk}
\end{thm}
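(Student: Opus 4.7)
The plan is to decompose the true excess risk into an ``optimization'' piece (comparing the candidate $\theta$ to the empirical minimizer) and a ``stability'' piece (comparing the empirical minimizer to the population minimizer). Concretely, let $F(\theta) = \E_{d\sim\tau}[\ell(\theta;d)]$, let $\hat F(\theta) = \frac{1}{n}\empL(\theta;\D)$, let $\hat\theta = \arg\min_{\theta\in\C}\empL(\theta;\D)$, and let $\theta^\star_P = \arg\min_{\theta\in\C} F(\theta)$. Write
$$\erisk(\theta) \;=\; \bigl[F(\theta) - F(\hat\theta)\bigr] \;+\; \bigl[F(\hat\theta) - F(\theta^\star_P)\bigr].$$

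For the first bracket (the optimization piece), I would combine the $L$-Lipschitz property of $F$ with the $\Delta$-strong convexity of $\hat F$. Lipschitzness gives $F(\theta) - F(\hat\theta) \le L\ltwo{\theta - \hat\theta}$. Since $\hat\theta$ minimizes the $\Delta$-strongly convex $\hat F$, first-order optimality plus strong convexity yields $\ltwo{\theta - \hat\theta} \le \sqrt{(2/\Delta)\,[\hat F(\theta) - \hat F(\hat\theta)]}$. Substituting $\hat F = \empL/n$ and multiplying produces exactly the $\sqrt{\frac{2L^2}{n\Delta}}\sqrt{\empL(\theta;\D) - \min_{\theta\in\C}\empL(\theta;\D)}$ term in the claimed inequality.

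For the second bracket (the stability piece), I would invoke the uniform stability of strongly convex, Lipschitz ERM: replacing a single data point in $\D$ shifts $\hat\theta$ by at most $O(L/(\Delta n))$ in $L_2$-norm, and hence shifts the loss on any point by at most $O(L^2/(\Delta n))$. The standard stability-to-generalization argument then gives $\E_\D[F(\hat\theta) - \hat F(\hat\theta)] = O(L^2/(\Delta n))$. Coupled with the trivial in-expectation comparison $\E_\D[\hat F(\hat\theta)] \le \E_\D[\hat F(\theta^\star_P)] = F(\theta^\star_P)$, this yields $\E_\D[F(\hat\theta) - F(\theta^\star_P)] = O(L^2/(\Delta n))$. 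Since $F(\hat\theta) - F(\theta^\star_P) \ge 0$, Markov's inequality converts the expectation bound to a high-probability bound: with probability at least $1-\gamma$ over $\D$, the stability piece is at most $\frac{4L^2}{\gamma \Delta n}$, which is exactly the additive term in the theorem.

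The main obstacle is the last step: uniform stability natively gives an in-expectation generalization guarantee, and upgrading it to a high-probability bound in the form stated forces us to use Markov on the nonnegative quantity $F(\hat\theta) - F(\theta^\star_P)$, which is what produces the (somewhat unsatisfying) $1/\gamma$ dependence rather than $\log(1/\gamma)$. Getting the constants right (the factor of $2$ inside the square root and the factor of $4$ in the additive term) is then a matter of carefully tracking the stability constant through the SSSS-style argument; no new ideas beyond Lipschitzness, strong convexity, and uniform stability are needed.
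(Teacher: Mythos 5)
Your reconstruction is correct, and it is also essentially the argument in the cited source. The paper itself does not prove Theorem~\ref{thm:exRisk}; it imports the statement verbatim from \cite{SSSS} (Section~5.4), whose proof uses exactly the decomposition you propose: split the true excess risk into an optimization term $F(\theta)-F(\hat\theta)$ and an estimation term $F(\hat\theta)-F(\theta^\star_P)$, control the first via $L$-Lipschitzness of $F$ together with $\Delta$-strong convexity of the empirical objective around its constrained minimizer, and control the second via uniform stability of strongly convex ERM plus Markov's inequality (which is what forces the $1/\gamma$ rather than a $\log(1/\gamma)$ dependence). You also correctly note that the resulting bound on the stability term is independent of $\theta$, so the good event depends only on $\D$ and the final inequality holds uniformly over $\theta\in\C$, as the theorem statement requires. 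One minor remark on constants: the standard stability computation gives $\ltwo{\hat\theta-\hat\theta'}\le \frac{2L}{\Delta n}$ under a single sample replacement, hence uniform stability $\beta=\frac{2L^2}{\Delta n}$, and Markov then yields $\frac{2L^2}{\gamma\Delta n}$, which is a factor of two better than the stated $\frac{4L^2}{\gamma\Delta n}$; so the constants close with room to spare.
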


Plugging in the utility guarantee for $\privtheta$ from Theorems \ref{thm:abcd4} and \ref{thm:utility_eps_st_cnvx_vol_samp}, and using the expectation to high-probability bound trick from Appendix \ref{sec:expectatioHigh}, we obtain the following.
\rbnote{the argument that shows the \textbf{expected} excess risk can
  be bounded by $\eps +$ Ex.Emp.Risk may be cool to give?}
\asnote{Yes, I think it is good to at least state it.}
\begin{thm}
[Lipschitz and strongly convex functions]{~}
\begin{enumerate}
    \item There exists an $\epsilon$-differentially private algorithm, that outputs $\privtheta$ such that with probability at least $1-\gamma$ over the randomness of sampling the data set $\D$ and the risk minimization algorithm, the following is true:
$$\erisk(\theta)=O\left(\frac{L^2 p\sqrt{\log n\cdot{\sf poly}\log(1/\gamma)}}{\Delta n\epsilon\gamma}\right).$$
    \item  There exists an $(\epsilon,\delta)$-differentially private algorithm, that outputs $\privtheta$ such that with probability at least $1-\gamma$ over the randomness of sampling the data set $\D$ and the risk minimization algorithm, the following is true:
        $$\erisk(\theta)=O\left(\frac{L^2\sqrt{p}\log^2(n/\delta)\cdot{\sf poly}\log(1/\gamma)}{\Delta n\epsilon\gamma}\right).$$
\end{enumerate}
\label{thm:exRiskPriv}
\end{thm}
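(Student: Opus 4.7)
The plan is to derive Theorem~\ref{thm:exRiskPriv} as a direct composition of three ingredients that are already in hand: (i) the generalization lemma Theorem~\ref{thm:exRisk} of \cite{SSSS}, which converts excess \emph{empirical} risk into excess \emph{true} risk for Lipschitz strongly convex losses; (ii) our private ERM utility guarantees for strongly convex functions, namely Theorem~\ref{thm:utility_eps_st_cnvx_vol_samp} for the $\epsilon$-DP case and Theorem~\ref{thm:abcd4} (strongly convex part) for the $(\epsilon,\delta)$-DP case; and (iii) the expectation-to-high-probability template in Appendix~\ref{sec:expectatioHigh}. Since the generalization lemma is a high-probability statement over the draw of $\D$, the only new work is to produce a high-probability bound on the empirical excess risk of $\privtheta$.

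First I would apply Appendix~\ref{sec:expectatioHigh} to each of our strongly-convex algorithms: run the base algorithm $k=\Theta(\log(1/\gamma))$ times in parallel with privacy budget $(\epsilon/k,\delta/k)$ per run, apply Markov to each run, and use the exponential mechanism on $\{\privtheta_i\}$ with scoring function $-\empL(\cdot;\D)$ to select the best candidate. This yields, with probability $1-\gamma/2$ over the algorithm's coins, an empirical excess risk that matches the expectation bound up to $\mathrm{polylog}(1/\gamma)$ factors. Concretely, for the $\epsilon$-DP case I get
\[
\empL(\privtheta;\D)-\empL(\theta^*;\D) \;=\; O\!\left(\frac{L^2 p^2 \log(n)\,\mathrm{polylog}(1/\gamma)}{n\Delta\epsilon^2}\right),
\]
and for the $(\eps,\delta)$-DP case (using Theorem~\ref{thm:abcd4} with $\eps/k,\delta/k$, and absorbing $\log(k/\delta)$ into $\log(n/\delta)$ when $\delta=o(1/n)$),
\[
\empL(\privtheta;\D)-\empL(\theta^*;\D) \;=\; O\!\left(\frac{L^2 p\,\log^{3}(n/\delta)\,\mathrm{polylog}(1/\gamma)}{n\Delta\epsilon^2}\right).
\]

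Next I would plug these into Theorem~\ref{thm:exRisk} applied with failure probability $\gamma/2$, and take a union bound. For the $\epsilon$-DP case this gives
\[
\erisk(\privtheta)\;\leq\;\sqrt{\frac{2L^2}{n\Delta}}\cdot\sqrt{O\!\left(\tfrac{L^2 p^2\log n\,\mathrm{polylog}(1/\gamma)}{n\Delta\epsilon^2}\right)}+\frac{8L^2}{\gamma\Delta n}
\;=\;O\!\left(\frac{L^2 p\sqrt{\log n\cdot\mathrm{polylog}(1/\gamma)}}{n\Delta\epsilon}\right)+O\!\left(\frac{L^2}{\gamma\Delta n}\right),
\]
and absorbing the additive $1/\gamma$ term into a common $1/\gamma$ multiplicative factor yields the first bullet of the theorem. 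The $(\eps,\delta)$-DP case is identical algebra and produces $O\!\left(\tfrac{L^2\sqrt{p}\,\log^{3/2}(n/\delta)\,\mathrm{polylog}(1/\gamma)}{n\Delta\epsilon\gamma}\right)$, which is of the form stated (the polynomial in $\log(n/\delta)$ stated as $\log^2(n/\delta)$ absorbs the $\log^{3/2}$ bound trivially).

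There is no real obstacle: the argument is mechanical once the three pieces are in place. The one subtle point worth double-checking is that the meta-algorithm used in Appendix~\ref{sec:expectatioHigh} remains $(\epsilon,\delta)$-DP and that the selection step (exponential mechanism with score $-\empL(\cdot;\D)$) only costs an additive $O(L\ltwo{\C_0}\log(k/\gamma)/\epsilon')$ in empirical risk, where $\ltwo{\C_0}$ is either the diameter of $\C$ (in the Lipschitz-only baseline) or, for the strongly-convex case, can be safely upper bounded using the diameter bound on the localization set $\C_0$ produced by $\A^{\epsilon/2}_{\sf out\text{-}pert}$. This additive term is dominated by the main terms above, so the final bound is unchanged.
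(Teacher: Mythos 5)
Your proposal is correct and follows exactly the route the paper takes: combine the SSSS generalization lemma (Theorem~\ref{thm:exRisk}) with the private ERM guarantees of Theorems~\ref{thm:utility_eps_st_cnvx_vol_samp} and \ref{thm:abcd4}, boosted to high probability via the Appendix~\ref{sec:expectatioHigh} trick. Your additional care about the selection step's cost (bounding it via the diameter of the localization set $\C_0$) is a detail the paper glosses over, but it only confirms the stated bounds.
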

\rbnote{if we state our guarantees as expectation bounds rather than high prob, we can get rid of the extra log factors in the pure eps case.}
One can use the following regularization trick to get excess risk guarantees for general convex functions. Let $\ell(\theta;d)$ be an $L$-Lipschitz function and let $\hat\ell(\theta;d)=\ell(\theta;d)+\frac{\Delta}{2}\ltwo{\theta}^2$. Notice that over the convex set $\C$, $\hat\ell$ is $(L+\ltwo{\C})$-Lipschitz and $\Delta$-strongly convex. Let $\erisk_\ell(\theta)$ denote the excess risk for the loss function $\ell$. We can observe the following.
\begin{equation}
\erisk_\ell(\theta)\leq\erisk_{\hat\ell}(\theta)+\frac{\Delta}{2}\ltwo{\C}^2.
\label{eq:exCo}
\end{equation}
Combining with Theorem \ref{thm:exRiskPriv}, plugging in \eqref{eq:exCo},  and optimizing for $\Delta$, we have the following:
\begin{thm}
[Lipschitz functions]{~}
\begin{enumerate}
    \item \asnote{Can't we get generalization error $\min(\sqrt{\frac
          p n},\frac{p}{\eps n})$ for the Lipschitz case, just by
        using uniform convergence?}\rbnote{yes, but I think won't work for $(\eps,\delta)$. I think it's easier to use same argument for both as it's done here.}
There exists an $\epsilon$-differentially private algorithm, that outputs $\privtheta$ such that with probability at least $1-\gamma$ over the randomness of sampling the data set $\D$ and the risk minimization algorithm, the following is true:
$$\erisk(\theta)=O\left(\frac{\sqrt p(L+\ltwo{\C})\ltwo{\C}\left(\log n\cdot{\sf poly}\log(1/\gamma)\right)^{1/4}}{\sqrt{ n\epsilon\gamma}}\right).$$
    \item  There exists an $(\epsilon,\delta)$-differentially private algorithm, that outputs $\privtheta$ such that with probability at least $1-\gamma$ over the randomness of sampling the data set $\D$ and the risk minimization algorithm, the following is true:
        $$\erisk(\theta)=O\left(\frac{p^{1/4}(L+\ltwo{\C})\ltwo{\C}\log(n/\delta)\cdot{\sf poly}\log(1/\gamma)}{ \sqrt{n\epsilon\gamma}}\right).$$
\end{enumerate}
\label{thm:exRiskLip}
\end{thm}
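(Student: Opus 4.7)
The plan is to exploit the regularization trick already outlined in equation (\ref{eq:exCo}) together with Theorem~\ref{thm:exRiskPriv}. Given an $L$-Lipschitz convex loss $\ell(\theta;d)$, I would first construct the regularized loss $\hat\ell(\theta;d) = \ell(\theta;d) + \frac{\Delta}{2}\ltwo{\theta}^2$ with a parameter $\Delta > 0$ to be tuned at the end. On $\C$, since $\nabla\paren{\frac{\Delta}{2}\ltwo{\theta}^2}$ has norm at most $\Delta\ltwo{\C}$, the function $\hat\ell$ is $(L+\Delta \ltwo{\C})$-Lipschitz; for simplicity (and to match the stated bound) I would absorb this into $(L+\ltwo{\C})$ by treating $\Delta = O(1)$ in the Lipschitz factor (or equivalently noting that only the scaling matters). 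Moreover $\hat\ell$ is $\Delta$-strongly convex. I would then run, on $\hat\ell$, the optimal $\epsilon$-DP algorithm from Theorem~\ref{thm:utility_eps_st_cnvx_vol_samp} (resp.\ the $(\epsilon,\delta)$-DP gradient-descent algorithm from Theorem~\ref{thm:abcd4}) followed by the expectation-to-high-probability conversion of Appendix~\ref{sec:expectatioHigh}, yielding the high-probability bounds on empirical excess risk stated in Theorem~\ref{thm:exRiskPriv} but now applied to $\hat\ell$.

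Next, I would invoke (\ref{eq:exCo}), which tells us $\erisk_\ell(\privtheta) \le \erisk_{\hat\ell}(\privtheta) + \frac{\Delta}{2}\ltwo{\C}^2$. Substituting the bound from Theorem~\ref{thm:exRiskPriv} (with Lipschitz constant $L+\ltwo{\C}$ and strong convexity $\Delta$) gives, in the pure $\epsilon$ case,
\[
\erisk_\ell(\privtheta) = O\paren{\frac{(L+\ltwo{\C})^2\, p\,\sqrt{\log n \cdot \mathrm{polylog}(1/\gamma)}}{\Delta\, n\, \epsilon\,\gamma}} + \frac{\Delta}{2}\ltwo{\C}^2,
\]
and an analogous expression in the $(\epsilon,\delta)$ case with $p$ replaced by $\sqrt{p}$ and $\sqrt{\log n}$ replaced by $\log^2(n/\delta)$.

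The final step is to optimize over $\Delta$. Writing the bound in the form $A/\Delta + B\Delta$ with $A$ the numerator of the first term and $B = \ltwo{\C}^2/2$, the minimum is achieved at $\Delta^\ast = \sqrt{A/B}$ and equals $2\sqrt{AB}$. In the pure $\epsilon$ case this gives
\[
O\paren{\sqrt{\frac{(L+\ltwo{\C})^2\, p\,\sqrt{\log n \cdot \mathrm{polylog}(1/\gamma)}}{n\,\epsilon\,\gamma}}\cdot \ltwo{\C}} = O\paren{\frac{\sqrt{p}\,(L+\ltwo{\C})\ltwo{\C}\,(\log n\cdot \mathrm{polylog}(1/\gamma))^{1/4}}{\sqrt{n\,\epsilon\,\gamma}}},
\]
which matches part~1. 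Analogously, in the $(\epsilon,\delta)$ case the square root converts $\sqrt{p}$ into $p^{1/4}$ and $\log^2(n/\delta)$ into $\log(n/\delta)$, yielding exactly part~2.

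There is no real obstacle here; the entire argument is a reduction to the strongly-convex case followed by a standard AM-GM balance. The only thing to verify carefully is the bookkeeping on the Lipschitz constant after regularization (it becomes $L + \Delta \ltwo{\C}$, not $L + \ltwo{\C}$), but since the optimal $\Delta^\ast$ will turn out to be $o(1)$ in the regime of interest (as $n\to\infty$), replacing $L + \Delta\ltwo{\C}$ by $L + \ltwo{\C}$ only inflates the constant and does not affect the asymptotic bound stated in the theorem.
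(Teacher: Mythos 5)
Your proposal is correct and follows exactly the route the paper takes: regularize with $\frac{\Delta}{2}\ltwo{\theta}^2$, apply Theorem~\ref{thm:exRiskPriv} to the resulting $\Delta$-strongly convex loss, combine with~\eqref{eq:exCo}, and balance the two terms over $\Delta$ via AM--GM. Your additional remark that the Lipschitz constant is really $L+\Delta\ltwo{\C}$ rather than $L+\ltwo{\C}$ is a valid (and slightly more careful) observation than the paper's own treatment, but it does not change the argument.
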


\paragraph{Note.} While the dependence on $n$ for our private algorithms in
Theorem \ref{thm:exRiskLip} matches the bounds for the corresponding
non-private algorithms (see\cite{SSSS}), unlike the non-private
counter parts, the private algorithms have an explicit dependence on
$p$. We leave it as an open problem to figure out the right dependence
on $p$ w.r.t. excess risk for private algorithms.\asnote{update this note.}

\paragraph{Generalized Linear Models (GLM).} \rbnote{added a small section for GLM upper bounds}
When the loss function $\ell$ is a generalized linear function, we
obtain better bounds on the true excess risk. The following bounds are actually tight (see the note after the theorem statement.) \asnote{Explain tightness briefly?}

\begin{thm}
Suppose the loss function $\ell(\theta; d)$ can be written as $g\left(\langle \theta, d\rangle; d\right)$ where $g$ is $L_g$-Lipschitz in its first input and $d\in\mathcal{X}$ where $\ltwo{\mathcal{X}}\leq R$. Let $L=L_g R$.
\begin{enumerate}
    \item  There exists an $\epsilon$-differentially private algorithm, that outputs $\privtheta$ such that with probability at least $1-\gamma$ over the randomness of sampling the data set $\D$ and the risk minimization algorithm, the following is true assuming $n=\omega\left((p/\epsilon)^2\right)$:
$$\erisk(\theta)=O\left(\frac{L\ltwo{C}\sqrt{\log(1/\gamma)}}{\sqrt{n}}\right).$$
    \item  There exists an $(\epsilon,\delta)$-differentially private algorithm, that outputs $\privtheta$ such that with probability at least $1-\gamma$ over the randomness of sampling the data set $\D$ and the risk minimization algorithm, the following is true assuming $n=\omega\left(p/\epsilon^2\right)$:
        $$\erisk(\theta)=O\left(\frac{L\ltwo{C}\sqrt{\log(1/\gamma)}\log^2(n/\delta)}{\sqrt{n}}\right).$$
\end{enumerate}
\label{thm:exRiskGLM}
\end{thm}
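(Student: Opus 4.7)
The plan is to combine two ingredients: (i) a dimension-independent uniform convergence bound for generalized linear models, obtained via Rademacher complexity and Talagrand's contraction lemma, and (ii) the excess empirical risk guarantees already established earlier in the paper (Theorem \ref{thm:abcd2} with Algorithm~\ref{Alg:GenSamp} for the $\epsilon$-DP case, and Theorem \ref{thm:abcd4} with Algorithm~\ref{Algo:GradDesc} for the $(\epsilon,\delta)$-DP case), converted to high-probability statements via the boosting argument of Appendix~\ref{sec:expectatioHigh}.

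First I would establish the uniform convergence bound. The linear class $\{x \mapsto \langle \theta, x\rangle : \theta \in \C\}$ has empirical Rademacher complexity at most $\ltwo{\C} R / \sqrt{n}$ by a standard Cauchy-Schwarz/Khintchine calculation. Since $g(\cdot; d)$ is $L_g$-Lipschitz in its first argument, Talagrand's contraction lemma gives that the loss class $\{\ell(\theta; \cdot) : \theta \in \C\}$ has Rademacher complexity $O(L_g R \ltwo{\C}/\sqrt{n}) = O(L \ltwo{\C}/\sqrt{n})$. Combined with McDiarmid's inequality (the loss takes values in an interval of length $O(L\ltwo{\C})$), this yields that with probability at least $1-\gamma/2$ over $\D \sim \tau^n$,
\begin{equation*}
\sup_{\theta \in \C} \left| \tfrac{1}{n}\empL(\theta;\D) - \E_{d\sim\tau}[\ell(\theta;d)] \right| = O\!\left(\frac{L\ltwo{\C}\sqrt{\log(1/\gamma)}}{\sqrt{n}}\right).
\end{equation*}

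Next I would invoke the excess empirical risk guarantees. Let $\theta^* = \arg\min_{\theta \in \C}\empL(\theta;\D)$ and $\theta^\dagger = \arg\min_{\theta \in \C}\E_{d\sim\tau}[\ell(\theta;d)]$. Using Theorem~\ref{thm:abcd2} together with the expectation-to-high-probability conversion of Appendix~\ref{sec:expectatioHigh}, the output $\privtheta$ of Algorithm~\ref{Alg:GenSamp} satisfies, with probability at least $1-\gamma/2$,
\begin{equation*}
\tfrac{1}{n}\bigl(\empL(\privtheta;\D) - \empL(\theta^*;\D)\bigr) = \tilde{O}\!\left(\frac{p L \ltwo{\C}}{\epsilon n}\right),
\end{equation*}
and analogously, for the $(\epsilon,\delta)$-DP case, Theorem~\ref{thm:abcd4} gives a bound of $\tilde{O}(\sqrt{p}\, L\ltwo{\C}\log^2(n/\delta)/(\epsilon n))$. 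I would then chain these bounds via the standard decomposition
\begin{equation*}
\erisk(\privtheta) \leq \Bigl(\E_{d\sim\tau}[\ell(\privtheta;d)] - \tfrac{1}{n}\empL(\privtheta;\D)\Bigr) + \tfrac{1}{n}\bigl(\empL(\privtheta;\D) - \empL(\theta^\dagger;\D)\bigr) + \Bigl(\tfrac{1}{n}\empL(\theta^\dagger;\D) - \E_{d\sim\tau}[\ell(\theta^\dagger;d)]\Bigr),
\end{equation*}
and bound the middle term using $\empL(\theta^*;\D) \leq \empL(\theta^\dagger;\D)$ together with the excess empirical risk guarantee, and the outer two terms using the uniform convergence bound above. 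A union bound over the two $\gamma/2$-probability events completes the argument.

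Finally, I would use the sample-size assumptions to simplify. Under $n = \omega((p/\epsilon)^2)$, the empirical contribution $\tilde{O}(pL\ltwo{\C}/(\epsilon n))$ is $o(L\ltwo{\C}/\sqrt{n})$ and is therefore absorbed into the uniform convergence term, yielding the first claimed bound. Under $n = \omega(p/\epsilon^2)$, the empirical contribution $\tilde{O}(\sqrt{p}\,L\ltwo{\C}\log^2(n/\delta)/(\epsilon n))$ is $o(L\ltwo{\C}\log^2(n/\delta)/\sqrt{n})$, matching the second claimed bound. The main conceptual step is recognizing that Talagrand's contraction yields dimension-free generalization for GLMs; the only mild technical subtlety is carefully tracking the polylogarithmic factors in the $(\epsilon,\delta)$ case so that the $\log^2(n/\delta)$ factor from the private optimizer is the dominant logarithmic contribution.
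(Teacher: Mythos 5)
Your argument is correct in substance, but it takes a different route from the paper. The paper's proof is essentially a citation: it invokes Theorem~2 of \cite{SSSS} together with the regularization trick described just before the theorem (add $\frac{\Delta}{2}\ltwo{\theta}^2$, apply the strongly-convex risk transfer of Theorem~\ref{thm:exRisk}, pay $\frac{\Delta}{2}\ltwo{\C}^2$ in bias, and optimize $\Delta$) --- a stability-flavored argument. You instead prove dimension-free generalization for the GLM class directly, via Rademacher complexity of the linear class plus Talagrand's contraction lemma, and then chain with the private excess empirical risk bounds through the standard three-term decomposition, using the sample-size hypotheses to absorb the private optimization error into the statistical term. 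Both routes are standard for generalized linear models; yours has the advantage of being self-contained and of making transparent exactly where the assumptions $n=\omega((p/\eps)^2)$ and $n=\omega(p/\eps^2)$ enter, whereas the paper's route leans on an external theorem and the regularization machinery already set up for the general Lipschitz case. Your accounting of the rates is right: the per-sample private error is $\tilde O(pL\ltwo{\C}/(\eps n))$ (resp.\ $\tilde O(\sqrt{p}\,L\ltwo{\C}\log^2(n/\delta)/(\eps n))$), which is dominated by the $L\ltwo{\C}/\sqrt{n}$ uniform-convergence term under the stated sample-size conditions.

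One technical point you should patch: McDiarmid's inequality requires a bounded-differences constant of order $\sup_{\theta,d,d'}|\ell(\theta;d)-\ell(\theta;d')|/n$, and for a GLM the dependence of $g$ on its \emph{second} argument is arbitrary, so the loss need not have range $O(L\ltwo{\C})$ without an extra assumption such as $g(0;d)$ bounded. The clean fix is to apply the concentration and contraction argument to the centered class $\{\ell(\theta;\cdot)-\ell(\theta_0;\cdot):\theta\in\C\}$ for a fixed $\theta_0\in\C$; each such function has range in $[-L\ltwo{\C},L\ltwo{\C}]$ by the Lipschitz property of $g$ in its first slot, and the $\theta_0$ terms cancel in every difference appearing in your excess-risk decomposition. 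With that adjustment the proof goes through.
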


\rbnote{I added a brief explanation of the tightness of these bounds.}
The above theorem follows from Theorem~2 in \cite{SSSS} and the
regularization trick above. Theorem~\ref{thm:exRiskGLM} shows that in
case of GLM, we can essentially attain the \emph{non-private} upper
bound of $O(\frac{L\ltwo{C}}{\sqrt{n}})$ which is known to be tight:
for example, if we consider a linear loss function then using a
standard Central Limit Theorem argument (or using standard lower bounds on the minimax error in parametric estimation), one can show that the there exists a distribution on $\mathcal{X}$ for which the true excess risk is $\Omega\left(\frac{L\ltwo{C}}{\sqrt{n}}\right)$.

\rbnote{added small section on using the stability guarantee of dp in obtaining expec. ex risk guarantees for Lipschitz functions.}
For general Lipschitz loss functions, we provide a tool that can be used to obtain expectation (over algorithm's random coins) guarantees on the excess risk (as opposed to the high probability guarantees given in Theorem~\ref{thm:exRiskLip} above.)

For any $L$-Lipschitz loss function $\ell$ and any distribution $\tau$ from which the data points $d_1,..., d_n$ comprising the dataset $\D$ are drawn in i.i.d. fashion, we define 
$$\emprisk(\theta)\triangleq\frac{1}{n}\E_{d_1,...,d_n\sim \tau}\left[\eL(\theta; \D)-\min\limits_{\theta'\in\C}\eL(\theta'; \D)\right].$$

Now, we give the following useful lemma. 
\begin{lem}
\begin{enumerate}
    \item  Let $\privtheta$ denote the output of an $(\eps, 0)$-differentially algorithm. We have 
    $$\E\left[\erisk(\privtheta)\right]=O\left(L\ltwo{\C}~\eps+\E\left[\emprisk(\privtheta)\right]\right).$$
    \item  Let $\privtheta$ denote the output of an $(\eps, \delta)$-differentially algorithm. We have
    $$\E\left[\erisk(\privtheta)\right]=O\left(L\ltwo{\C}~\eps+\ltwo{C}^2~\delta+\E\left[\emprisk(\privtheta)\right]\right).$$
\end{enumerate}
where the expectation in both cases is over the random coins of the algorithm.
\label{lem:priv-stabl-erm-to-exrisk}
\end{lem}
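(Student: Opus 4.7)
The plan is to prove this via the classical ``differential privacy implies stability implies generalization'' argument, in the style used in adaptive data analysis and earlier generalization-from-stability proofs.

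First, I would decompose the true excess risk through the empirical loss. Let $\theta^{\ast}_{\tau} \in \arg\min_{\theta\in\C}\risk(\theta)$. Then
\begin{align*}
\E\left[\erisk(\privtheta)\right]
&= \E\left[\risk(\privtheta)\right] - \risk(\theta^{\ast}_{\tau}) \\
&= \underbrace{\left(\E[\risk(\privtheta)] - \E\left[\tfrac{1}{n}\eL(\privtheta;\D)\right]\right)}_{(\mathrm{I})\ \text{generalization gap}}
+ \underbrace{\left(\E\left[\tfrac{1}{n}\eL(\privtheta;\D)\right] - \risk(\theta^{\ast}_{\tau})\right)}_{(\mathrm{II})}.
\end{align*}
For term (II), since $\E_{\D}[\tfrac{1}{n}\min_{\theta'}\eL(\theta';\D)] \leq \E_{\D}[\tfrac{1}{n}\eL(\theta^{\ast}_{\tau};\D)] = \risk(\theta^{\ast}_{\tau})$, I get $(\mathrm{II}) \leq \E[\emprisk(\privtheta)]$ directly from the definition of $\emprisk$. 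So the whole task reduces to bounding $(\mathrm{I})$.

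For $(\mathrm{I})$, I would use the standard symmetrization trick. Writing $\frac{1}{n}\E[\eL(\privtheta;\D)] = \frac{1}{n}\sum_{i=1}^n \E_{\D,\A}[\ell(\A(\D);d_i)]$ and letting $d'_i$ be a fresh i.i.d. draw from $\tau$, define $\D^{(i)} = (d_1,\ldots,d_{i-1},d'_i,d_{i+1},\ldots,d_n)$. By exchangeability of $(d_i,d'_i)$,
$$
\E_{\D,d'_i,\A}\bigl[\ell(\A(\D);d'_i)\bigr] = \E_{\D,d'_i,\A}\bigl[\ell(\A(\D^{(i)});d_i)\bigr],
$$
so the generalization gap equals $\frac{1}{n}\sum_i \E\bigl[\ell(\A(\D^{(i)});d_i) - \ell(\A(\D);d_i)\bigr]$, where $\D$ and $\D^{(i)}$ are neighbors. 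The main step is then to invoke differential privacy on the function $g_i(\theta) \triangleq \ell(\theta;d_i) - \inf_{\theta'\in\C}\ell(\theta';d_i)$, which, by $L$-Lipschitzness of $\ell$ on the diameter-$\ltwo{\C}$ set $\C$, is bounded in $[0,L\ltwo{\C}]$. The standard ``expectation form'' of $(\eps,\delta)$-DP gives $\E[g_i(\A(\D^{(i)}))] \leq e^{\eps}\E[g_i(\A(\D))] + \delta \cdot L\ltwo{\C}$, and for $\eps\leq 1$ the factor $(e^{\eps}-1)$ is at most $2\eps$. Hence each summand is $O(\eps L\ltwo{\C} + \delta L\ltwo{\C})$; averaging over $i$ preserves this bound.

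Combining the bounds on (I) and (II) yields the two claimed inequalities: in the $(\eps,0)$ case the $\delta$ term vanishes, giving $\E[\erisk(\privtheta)] = O(L\ltwo{\C}\,\eps) + \E[\emprisk(\privtheta)]$; in the $(\eps,\delta)$ case the additional additive $O(L\ltwo{\C}\,\delta)$ appears (matching the form stated in the lemma, using $L\ltwo{\C}\leq\ltwo{\C}^2$ when one normalizes $L$ with $\ltwo{\C}$ as the paper does throughout). The main obstacle, which is conceptually light but requires care, is the bounded-range step: $\ell(\theta;d_i)$ itself need not be bounded, so one must work with the centered version $g_i$ and exploit the fact that only differences of expectations appear in $(\mathrm{I})$, which is precisely what makes the Lipschitz-plus-bounded-diameter assumption enough to apply DP as a stability condition.
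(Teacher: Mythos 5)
The paper states Lemma~\ref{lem:priv-stabl-erm-to-exrisk} without any proof, so there is nothing in the text to compare your argument against; what you have written is the standard ``differential privacy $\Rightarrow$ on-average replace-one stability $\Rightarrow$ generalization'' argument, and it is essentially correct and complete. Your decomposition into (I) and (II), the handling of (II) via $\E_{\D}[\min_{\theta'}\eL(\theta';\D)]\leq \E_{\D}[\eL(\theta^{\ast}_{\tau};\D)]$, the symmetrization identity $\E[\ell(\A(\D);d'_i)]=\E[\ell(\A(\D^{(i)});d_i)]$, and the use of the centered, bounded function $g_i\in[0,L\ltwo{\C}]$ (so that the integrated form $\E[g(\A(\D'))]\leq e^{\eps}\E[g(\A(\D))]+\delta\, L\ltwo{\C}$ applies even though $\ell$ itself is unbounded) are all sound, and the $\inf$ terms indeed cancel in the difference so replacing $\ell(\cdot;d_i)$ by $g_i$ is legitimate.

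Two small points deserve more care. First, the step $e^{\eps}-1\leq 2\eps$ needs $\eps\leq 1$ (or at least $\eps=O(1)$); you should state this hypothesis, which is harmless since the lemma is only invoked downstream (Theorem~\ref{thm:expec-exRisk-lip}) with $\eps\to 0$. Second, your derivation yields a $\delta$-term of order $L\ltwo{\C}\,\delta$, whereas the lemma states $\ltwo{\C}^2\,\delta$; your parenthetical justification ``$L\ltwo{\C}\leq\ltwo{\C}^2$ when one normalizes'' is not right as a general inequality --- it holds iff $L\leq\ltwo{\C}$, which is not implied by anything in the setup. Under the paper's convention $L=\ltwo{\C}=1$ the two expressions coincide, and your form $L\ltwo{\C}\,\delta$ is the one that actually follows from the argument (and is the dimensionally natural one); the $\ltwo{\C}^2$ in the statement is best regarded as an artifact of that normalization. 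It would be cleaner to simply prove the bound with $L\ltwo{\C}(\eps+\delta)$ and remark that this implies the stated form in the normalized setting, rather than asserting the false comparison.
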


We can use this lemma together with our ERM upper bounds for general Lipschitz functions to give the following expectation guarantees on the excess risk: 
\begin{thm}
[Lipschitz functions: Expectation guarantees]{~}
\begin{enumerate}
    \item There is an $\left(\Theta\left(\sqrt{\frac{p}{n}}\right),~0\right)$-differentially private algorithm, that outputs $\privtheta$ such that the following is true:
$$\E\left[\erisk(\privtheta)\right]=O\left(L\ltwo{C}\sqrt{\frac{p}{n}}\right).$$
    \item  There exists an $\left(\Theta\left(\frac{p^{1/4}\log(n/\delta)}{\sqrt{n}}\right),\delta\right)$-differentially private algorithm, that outputs $\privtheta$ such that the following is true:
        $$\E\left[\erisk(\privtheta)\right]=O\left(L\ltwo{C}\frac{p^{1/4}\log(n/\delta)}{\sqrt{n}}\right).$$
\end{enumerate}
where the expectation in both cases is over the random coins of the algorithm.
\label{thm:expec-exRisk-lip}
\end{thm}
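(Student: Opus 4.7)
The plan is to feed the empirical-risk upper bounds from Sections~\ref{sec:gradDesc} and \ref{sec:privConvexUpper} into the privacy-to-stability Lemma~\ref{lem:priv-stabl-erm-to-exrisk}, and then optimize over the privacy parameter $\eps$. Because Lemma~\ref{lem:priv-stabl-erm-to-exrisk} already converts an expected empirical excess risk guarantee into an expected true excess risk guarantee at the cost of an additive $O(L\ltwo{\C}\eps)$ term (plus $O(\ltwo{\C}^2\delta)$ in the approximate case), the only remaining work is arithmetic: choosing $\eps$ to balance these two contributions.

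For Part~1, I would instantiate $\A_{\sf exp-samp}$ (Algorithm~\ref{Alg:GenSamp}) with privacy parameter $\eps$; by Theorem~\ref{thm:abcd2} the expected excess empirical risk is $O(pL\ltwo{\C}/\eps)$, hence $\E[\emprisk(\privtheta)] = O(pL\ltwo{\C}/(n\eps))$ after normalizing by $n$. Plugging this into Lemma~\ref{lem:priv-stabl-erm-to-exrisk}(1) yields $\E[\erisk(\privtheta)] = O\!\left(L\ltwo{\C}\,\eps + pL\ltwo{\C}/(n\eps)\right)$. Setting $\eps = \Theta(\sqrt{p/n})$ balances the two terms and gives the claimed bound $O(L\ltwo{\C}\sqrt{p/n})$.

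For Part~2, I would use $\A_{\sf Noise-GD}$ (Algorithm~\ref{Algo:GradDesc}) with privacy parameters $(\eps,\delta)$; by Theorem~\ref{thm:abcd4} the expected excess empirical risk is $O\!\left(L\ltwo{\C}\sqrt{p\log(1/\delta)}\,\log^{3/2}(n/\delta)/\eps\right)$, so $\E[\emprisk(\privtheta)] = \tilde O(L\ltwo{\C}\sqrt{p}\,\log^2(n/\delta)/(n\eps))$. Lemma~\ref{lem:priv-stabl-erm-to-exrisk}(2) then gives $\E[\erisk(\privtheta)] = O\!\left(L\ltwo{\C}\eps + \ltwo{\C}^2\delta + L\ltwo{\C}\sqrt{p}\,\log^2(n/\delta)/(n\eps)\right)$. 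Since we are in the regime $\delta = o(1/n)$, the $\ltwo{\C}^2\delta$ middle term is lower order, and balancing the remaining two forces the optimal choice $\eps = \Theta(p^{1/4}\log(n/\delta)/\sqrt{n})$, yielding the claimed bound $O(L\ltwo{\C}\,p^{1/4}\log(n/\delta)/\sqrt{n})$.

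The substance of the theorem thus lives entirely in Lemma~\ref{lem:priv-stabl-erm-to-exrisk}; once that tool is in hand each part is a one-line optimization. The only subtlety I would check is that the optimizing value of $\eps$ lies in the ``meaningful'' privacy range (i.e.\ $\eps \lesssim 1$), which requires $p \lesssim n$ in Part~1 and $\sqrt{p}\log^2(n/\delta) \lesssim n$ in Part~2; outside these regimes the trivial bound $\erisk(\privtheta) \leq L\ltwo{\C}$ already matches the claim, so the statements remain valid.
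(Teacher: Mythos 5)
Your proposal is correct and follows exactly the route the paper intends: it states Theorem~\ref{thm:expec-exRisk-lip} as an immediate consequence of Lemma~\ref{lem:priv-stabl-erm-to-exrisk} combined with the ERM bounds of Theorems~\ref{thm:abcd2} and~\ref{thm:abcd4}, with $\eps$ chosen to balance the stability term against the (normalized) empirical-risk term. Your arithmetic for both balancing steps checks out, and the added sanity check that the optimizing $\eps$ stays in the meaningful range is a sensible refinement the paper leaves implicit.
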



\end{document}